\documentclass[11pt,fullpage,letterpaper]{article} 

\usepackage{natbib}

\setcitestyle{authoryear,round,citesep={;},aysep={,},yysep={;}}
\usepackage[english]{babel}

\usepackage[margin=1in]{geometry}
\usepackage{fancyhdr}
\usepackage[utf8]{inputenc} 
\usepackage[T1]{fontenc}    

\usepackage{titletoc}
\usepackage[toc, page, header]{appendix} 

\usepackage{url}            
\usepackage{booktabs}       
\usepackage{amsfonts}       
\usepackage{nicefrac}       
\usepackage{microtype}      
\usepackage{xcolor}         

\usepackage{parskip}

\title{Policy Regret for Embedding Model Routing: \\ Contextual Bandits with Low-Rank Experts}
\author{%
    Yan Dai\thanks{Operations Research Center, MIT. Email: \texttt{yandai20@mit.edu}.}
    \and
    Negin Golrezaei\thanks{Sloan School of Management, MIT. Email: \texttt{golrezae@mit.edu}.}
    \and
    Patrick Jaillet\thanks{Department of EECS, MIT. Email: \texttt{jaillet@mit.edu}.}
}
\date{}

\usepackage{multirow}
\usepackage{arydshln}
\usepackage{footnote}
\usepackage{enumitem}
\setitemize{leftmargin=*, nosep, itemsep=2pt, parsep=3pt}
\setenumerate{leftmargin=*, nosep, itemsep=2pt, parsep=3pt}

\usepackage{amsmath}
\usepackage{amsthm}
\usepackage{amssymb}
\usepackage{mathtools}
\usepackage{thmtools}
\usepackage{comment}
\usepackage{bbm}
\usepackage{bm}
\allowdisplaybreaks

\let\originalmiddle=\middle
\def\middle#1{\mathrel{}\originalmiddle#1\mathrel{}}

\usepackage{algorithm}
\usepackage[noEnd,commentColor=blue!70!white]{algpseudocodex}

\makeatletter
\newcommand{\algeq}[2][]{%
  \par%
  \noindent%
  \makebox[\columnwidth]{%
    \hfill #2%
    \if\relax\detokenize{#1}\relax
      \hfill
    \else
      \hfill (\refstepcounter{equation}\theequation)\label{#1}%
    \fi
  }%
  \par%
}
\makeatother

\usepackage[colorlinks=true, linkcolor=blue!60!black, citecolor=blue!60!black,
urlcolor=black]{hyperref}
\usepackage{cleveref}
\AddToHook{cmd/appendix/before}{%
    \crefalias{section}{appendix}%
    \crefalias{subsection}{appendix}
}

\Crefname{ALG@line}{Line}{Lines}
\Crefname{assumption}{Assumption}{Assumptions}
\Crefname{protocol}{Protocol}{Protocols}
\Crefformat{equation}{Eq. #2(#1)#3}
\Crefrangeformat{equation}{Eqs. #3(#1)#4 to #5(#2)#6}
\Crefmultiformat{equation}{Eqs. #2(#1)#3}{ and #2(#1)#3}{, #2(#1)#3}{ and #2(#1)#3}
\Crefrangemultiformat{equation}{Eqs. #3(#1)#4 to #5(#2)#6}{ and #3(#1)#4 to #5(#2)#6}{, #3(#1)#4 to #5(#2)#6}{ and #3(#1)#4 to #5(#2)#6}

\newtheorem{theorem}{Theorem}
\newtheorem{lemma}[theorem]{Lemma}
\newtheorem{proposition}[theorem]{Proposition}

\theoremstyle{definition}
\newtheorem{definition}{Definition}

\usepackage{tikz}
\usetikzlibrary{positioning}

\newcommand{\E}{\operatornamewithlimits{\mathbb{E}}}
\newcommand{\argmax}{\operatornamewithlimits{\mathrm{argmax}}}
\newcommand{\argmin}{\operatornamewithlimits{\mathrm{argmin}}}
\newcommand{\arcsinh}{\operatorname{\mathrm{arcsinh}}}
\newcommand{\diag}{\operatorname{\mathrm{diag}}}

\newcommand{\tr}{\operatornamewithlimits{\mathrm{Tr}}}
\newcommand{\rank}{\operatornamewithlimits{\mathrm{rank}}}

\renewcommand{\O}{\operatorname{\mathcal O}}
\newcommand{\Otil}{\operatorname{\tilde{\O}}}
\newcommand{\trans}{\mathsf{T}}
\newcommand{\mA}{\mathcal{A}}
\newcommand{\mQ}{\mathcal{Q}}
\newcommand{\mW}{\mathbb{W}}
\newcommand{\mR}{\mathfrak{R}}

\renewcommand{\tilde}{\widetilde}
\renewcommand{\hat}{\widehat}
\renewcommand{\bar}{\overline}

\renewcommand{\paragraph}[1]{\vspace{2pt}\noindent\textbf{#1}}

\usepackage{afterpage}
\usepackage{wrapfig}
\usepackage{pifont}
\newcommand{\cmark}{{\color{green!50!black}\ding{51}}}%
\newcommand{\xmark}{{\color{red!50!black}\ding{55}}}%

\begin{document}

\maketitle

\begin{abstract}
\noindent Modern recommendation systems increasingly rely on dynamically routing diverse queries to multiple embedding models. Despite its practical significance, this problem remains poorly understood under realistic conditions like adversarial queries, bandit feedback, and limited observability of models.  We formalize embedding model routing as an adversarial contextual linear bandit with low-rank experts, where contexts are queries, actions are items, and experts are the embedding models working on low-rank latent representation spaces.
We first establish that standard regret notions suffer from structural misspecification or statistical intractability, and we identify a \emph{log-quadratic} policy class that is expressive enough to capture query-dependent model routing, yet structured enough to allow efficient online learning. 
Second, we propose a policy gradient algorithm called Hypentropy Policy Gradient (HPG). It provably adapts to the unknown low-rank structure under incomplete information and attains $\widetilde{\mathcal O}(s\sqrt{M T})$ linearized policy regret -- where $s, M$, and $T$ are the intrinsic rank of the experts, the number of models, and the number of rounds -- thus avoiding a curse of dimensionality. Finally, we also provide an computationally efficient and parameter-free implementation of HPG.
\end{abstract}

\section{Introduction}

Modern recommendation systems and search engines rely heavily on dense embedding models for information retrieval \citep{yi2019sampling,karpukhin2020dense}. These models map queries and items into low-dimensional ``embedding spaces,'' enabling efficient retrieval based on the similarity scores (i.e., inner products). However, representing the full diversity of distinct user queries within a single embedding space is fundamentally limited: Due to models' intrinsic low-dimensional structures, no single embedding model can be uniformly optimal across all types of queries \citep{weller2025theoretical}.

This limitation is reflected in practice. Large-scale industrial e-commerce platforms by Alibaba and Meta deploy multiple embedding models to capture different semantic aspects of queries \citep{li2019multi,huang2020embedding}. Similarly, recent approaches in Retrieval-Augmented Generation (RAG) route queries to specialized ``expert models'' trained on domain-specific corpora, such as code, finance, or medical data \citep{lee2025routerretriever,zhao2026r}. These developments point to a challenge: How to dynamically route each incoming query to the most appropriate embedding model?

To numerically illustrate this challenge, \Cref{fig:gemini_v_embeddings} compares the rankings from a semantic embedding \citep[\texttt{miniLM};][]{wang2020minilm} and a Q\&A model \citep[\texttt{msmarco};][]{reimers2019sentence} against an

\begin{wrapfigure}{r}{0.3\textwidth}
\includegraphics[width=\linewidth]{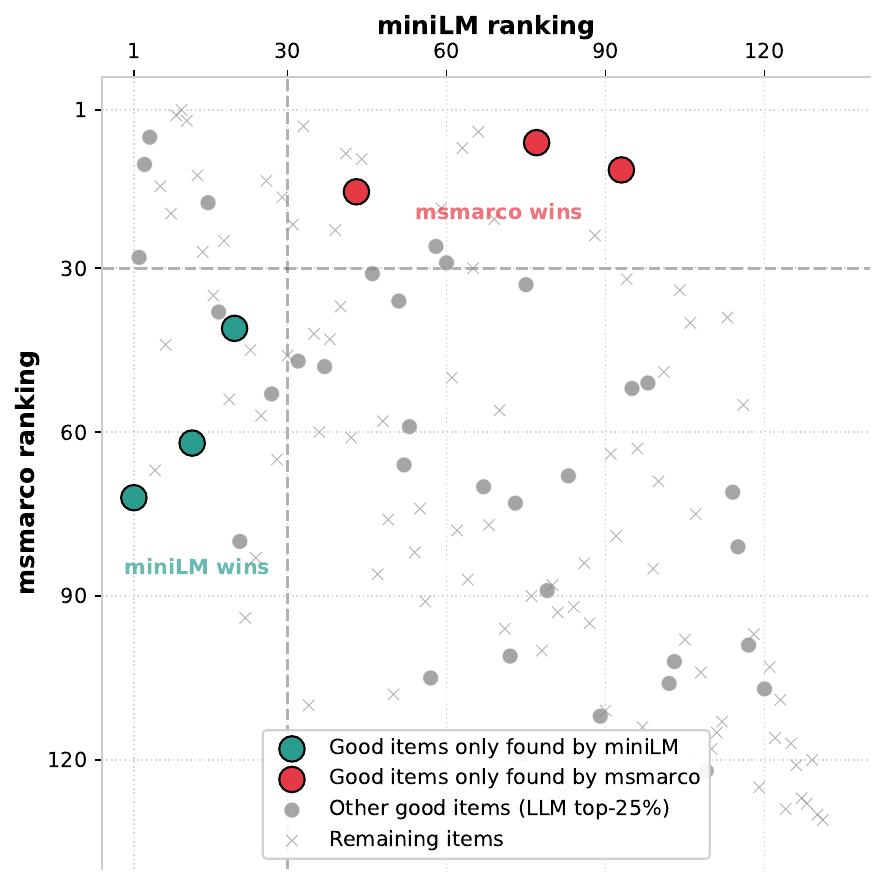}
\caption{\texttt{miniLM} vs \texttt{msmarco}}
\label{fig:gemini_v_embeddings}\vspace{-10pt}
\end{wrapfigure}
LLM-generated ground-truth on an Amazon dataset \citep{collins2022abo}. Each point is a single item. 
Gray dots denote high-quality candidates (LLM top-25\%). Colored dots highlight ``exclusively good'' items: high-quality candidates that are successfully retrieved by one model (top-30, as indicated by the dashed line) but missed by the other. The detailed experimental setup can be found in \Cref{sec:experiment}. The mutually exclusive errors and strengths in \Cref{fig:gemini_v_embeddings} confirm \emph{no single model dominates the other}, making adaptive routing essential.

This gvies our main question: \textbf{\textit{How can an online learner adaptively route queries to embedding models, using feedback only from the routed models, to achieve provable performance guarantees?}} Towards answering this question in a theoretically grounded way, we make the following main contributions:

\paragraph{Modeling (\Cref{sec:setup}).}
We frame the online embedding model routing problem as a $T$-round adversarial contextual linear bandit with low-rank experts: In round $t$, the learner observes an adversarial query $q_t\in \mathbb{R}^{d_q}$ as \emph{context} and faces an item set $\mA\subseteq \mathbb R^{d_a}$ as \emph{actions}. $M$ embedding models, as \emph{experts}, each implicitly has a distribution over $\mA$ via a latent embedding space of dimension $s\ll \min(d_q,d_a)$. The learner can only choose \emph{one} model $m_t$ to route, observe the selected model's recommendation, and receive an induced noisy reward. The goal is to compete with a fixed class of routing policies. This formulation captures the main challenges in realistic recommendation systems: adversarial query sequences, bandit reward feedback, and partial observability of model outputs.

\paragraph{Regret Notion (\Cref{sec:regret}).} A key challenge is identifying an appropriate performance benchmark for this problem. We find standard regret notions inadequate: competing with the best fixed model (i.e., expert regret) ignores query heterogeneity, while competing with arbitrary query-dependent policies (i.e., contextual regret) is intractable. The commonly used log-linear policy class also fails to fit the underlying structures of embedding models.
Instead, our main insight is, under the latent geometry of embedding models, the expected reward of each model is approximately a \emph{low-rank quadratic} function of the query (\Cref{thm:quadratic policy class}). This leads to our log-quadratic policy class: expressive enough to capture near-optimal routing, while remaining amenable to efficient learning.
In \Cref{tab:esci_direct_label_gaps}, we numerically justify our claims on an Amazon ESCI dataset \citep{reddy2022shopping}.

\paragraph{Algorithm Design (\Cref{sec:OMD}).}
Because of the quadratically large parameter space, a vanilla policy gradient suffers from $\Otil(d_q\sqrt{MT})$ linearized policy regret (a surrogate metric to resolve the non-convexity of softmax; see \Cref{sec:parameter space}). To avoid the $d_q$ dependency, we propose Hypentropy Policy Gradient (HPG). By automatically adapting to the unknown low-rank structure of embedding models, HPG provably attains $\Otil(s\sqrt{MT})$ linearized policy regret, thus avoiding the curse of dimensionality.

\paragraph{Practical Implementation (\Cref{sec:practical implementation}).} Our algorithm builds upon Online Mirror Descent (OMD), which is typically computationally expensive due to a Bregman projection.
Perhaps surprisingly, we prove that our projection step in HPG can be implemented in $\O(d_q^2 M)$ time (\Cref{lem:efficient implementation}); this is because of the combination of hypentropy regularization and nuclear-norm-induced parameter space. We further show HPG can be implemented parameter-freely, i.e., without the low-rank parameter $s$.

\subsection{Related Literature}\label{sec:related work}
\paragraph{Model Routing.}
In large-scale industrial recommendation systems or searching engines, multi-embedding frameworks were developed for better performance \citep{li2019multi,huang2020embedding}.
Mixture-of-Experts (MoE) architectures were proposed in the natural language processing literature to aggregate representations and capture semantic intents \citep{shazeer2017outrageously,ma2018modeling,lepikhin2021gshard,fedus2022switch,zhou2022mixture}.
In the era of large language models (LLMs), dynamic model routing (also known as model cascading or LLM routing) was proposed to balance the generation quality and the inference cost \citep{jiang2023llm,shnitzer2023large,chen2024frugalgpt,ong2025routellm}.
More closely related to our context of embedding models, modern Retrieval-Augmented Generation (RAG) pipelines increasingly route queries to specialized expert retrievers \citep{mallen2023not,jeong2024adaptive,lee2025routerretriever,zhao2026r}.
Several prior works also drew heuristics by viewing model routing as (online or offline) contextual bandits \citep{nguyen2024metallm,ong2025routellm,hu2025an,tsiourvas2025causal,jitkrittum2026universal,zu2026barouter,poon2026multillm}, but they treated the underlying embedding models as complete black-boxes.
On the other hand, in this paper, we model the embedding model routing rigorously as an adversarial contextual linear bandit with low-rank experts, and design a policy gradient algorithm with provable regret guarantee, computational cost bound, and parameter-free implementation.

\paragraph{Bandits with Expert Advice.}
Bandits with expert advice (\Cref{def:expert regret}) were first studied by \citet{auer2002nonstochastic}, who proposed the EXP4 algorithm for $\O(\sqrt{KT \log M})$ expert regret where $K,T,M$ are the numbers of arms, rounds, and experts. \citet{beygelzimer2011contextual} proposed EXP4.P, achieving the same bound with high probability. \citet{seldin2013open} raised the limited observation setup -- where each round only $N$ experts can be queried -- as a COLT open problem. With full information, $\Otil(\sqrt{MT/N})$ expert regret is possible \citep{seldin2014prediction}; under bandit feedback, \citet{kale2014multiarmed} gave $\Otil(\sqrt{\min(N,K)MT/N})$ expert regret and a near-matching lower bound.
The unlimited observation setup has recently regained interest, with $\Omega(\sqrt{KT \log(M/K)})$ lower bounds established for learners with different levels of adaptivity \citep{ito2024minimax,cesa2025improved,chase2025tight}.

\paragraph{(Contextual) Linear Bandits.}
Linear bandit was studied by \citet{dani2008stochastic}, and the contextual version (\Cref{def:contextual linear bandit}) was studied by \citet{chu2011contextual} and \citet{abbasi2011improved} under the adversarial-context stochastic-reward (``A-S'') regime.
The adversarial-context and adversarial-reward (``A-A'') regime is intractable \citep{kanade2014learning,hazan2016computational,neu2020efficient}, thus \citet{agarwal2014taming} and \citet{syrgkanis2016efficient} opted for policy regret.

The stochastic-context adversarial-reward (``S-A'') regime was proposed by \citet{neu2020efficient} under the known context distribution assumption. Subsequent works \citep{neu2021online,luo2021policy,dai2023refined,sherman2023improved,kong2024improved} focused on the more general linear Markov decision process (MDP) setup; when specialized to bandits, they require extra assumptions or give sub-optimal regret. \citet{liu2023bypassing} proposed an efficient $\Otil(d^2\sqrt T)$ algorithm and an inefficient $\Otil(d\sqrt T)$ algorithm, with $d$ being the dimension and $T$ being the number of rounds. \citet{ito2024minimax} considered a finite-arm case and \citet{van2025improved} refined the computational cost.

Misspecification in linear bandits were studied by, for example, \citet{gopalan2016low,ghosh2017misspecified,foster2020adapting,dong2023does,liu2024corruption}, which we adopt in \Cref{eq:true reward,eq:expert reward}. Richer non-linear regimes were also studied in the literature \citep{russo2013eluder,foster2018practical,foster2020beyond,li2022understanding}, which are beyond the scope of this paper.

\paragraph{Policy Gradient Methods.} Policy gradient (PG) methods are foundational for reinforcement learning and control \citep{williams1992simple,sutton1999policy,kakade2001natural}. Global convergence of PG methods for the log-linear policy class was initiated by \citet{fazel2018global} in control theory and \citet{bhandari2024global} in reinforcement learning. \citet{agarwal2021theory} and \citet{mei2020global} proposed gradient dominance conditions to establish exact convergence rates in tabular settings, but the log-linear policy class remains open (see \Cref{sec:policy regret appendix}).
Policy gradient is also frequently designed from an OMD perspective \citep{geist2019theory,shani2020adaptive}, where various entropy regularizers exist \citep{alfano2023novel,cichocki2025mirror}. However, we are unaware of existing works utilizing hypentropy in policy gradient to promote low-rankness of the learned parameter.

\paragraph{Parameter-Free Online Learning.} 
Parameter-free online learning attains near-optimal regret without prior information on the comparator \citep{chaudhuri2009parameter,streeter2012no,orabona2013dimension,mcmahan2014unconstrained}. \citet{orabona2016coin} proposed a ``coin-betting'' framework, which enables black-box frameworks for parameter-free online learning in richer regimes \citep{cutkosky2018black}. More generally, adaptivity to, e.g., optimal action's loss \citep{freund1997decision,allenberg2006hannan,allen2018make}, feedback delays \citep{zimmert2020optimal,gyorgy2021adapting,huang2023banker}, or properties of loss distributions \citep{huang2022adaptive,genalti2024varepsilon,chen2025uniinf}, has been widely studied in bandits or online learning.

\section{Setup: Contextual Bandits with Low-Rank Experts}\label{sec:setup}
\paragraph{Notations.} For an integer $n\in \mathbb Z_+$, $[n]$ denotes the set $\{1,2,\ldots,n\}$. We use $\O$ to hide absolute constants, and use $\Otil$ to additionally hide poly-logarithmic factors. Let $\mathbb S^n\subseteq \mathbb R^{n\times n}$ be the space of symmetric $n\times n$ real matrices. For $X\in \mathbb S^n$, $\lambda_i(X)$ denotes the $i$-th largest eigenvalue of $X$. Let $\lVert X\rVert_\ast:=\sum_i \lvert \lambda_i(X)\rvert$ be the nuclear norm and $\lVert A\rVert_2:=\max_i \lvert \lambda_i(X)\rvert$ be the spectral norm.

Consider a $T$-round game between a learner and the environment. In each round $t$, a \emph{query} embedded into the $d_q$-dimensional Euclidean space, namely $q_t\in \mQ:=\mathbb R^{d_q}$, is presented to the learner as a context. We allow $q_t$ to be arbitrary, i.e., it may be chosen by an adaptive adversary. Although real queries can be in natural language, in practical model routing scenarios, queries are usually first fed into a ``base encoder'' and embedded into a finite-dimensional space \citep[see, e.g.,][]{lee2025routerretriever}.

The learner has a large set of candidate \emph{items} embedded into the $d_a$-dimensional Euclidean space, denoted by $\mA\subseteq \mathbb R^{d_a}$, corresponding to the action set in bandits.
The learner aims to recommend an item $a_t\in \mA$ that best answers the query $q_t$, measured in terms of expected \emph{reward} $r(q_t,a_t)$. In \Cref{sec:true reward}, we model reward $r$ as the similarity score between items and queries on a latent space.

Instead of directly looking into the action set $\mA$, the learner is equipped with $M$ embedding \emph{models} (also known as experts). Each model $m$ has a recommendation distribution $\xi_m(q_t)$ -- induced by its latent internal structure -- to the query $q_t$, which we formalize in \Cref{sec:expert reward}. However, as argued by \citet{tsiourvas2025causal}, given the cost of invoking large models, the learner can only query \emph{one} model $m_t$ for an item $a_{t,m_t}$, thus giving limited observability. The learner only receives bandit feedback on the induced reward. The interaction protocol and feedback model will be detailed in \Cref{sec:feedback}.

\subsection{Misspecified Adversarial Contextual Linear Bandit}\label{sec:true reward}

For each query $q\in \mQ$ and candidate item $a\in \mA$, we assume the expected reward $r(q,a)$ is roughly their \emph{similarity score} (inner product) on a high-dimensional latent space $\mathbb R^{\max(d_a,d_q)}$. Specifically, there are fixed but unknown linear maps $U^\ast\in \mathbb R^{\max(d_a,d_q)\times d_q}$ and $V^\ast\in \mathbb R^{\max(d_a,d_q)\times d_a}$, such that
\begin{equation}\label{eq:true reward initial}
r(q,a)=\bigl \langle U^\ast q,V^\ast a\bigr \rangle+\nu(q,a),\quad \forall q\in \mQ,a\in \mA,
\end{equation}
where $\nu$ is a misspecification term capturing the non-linearity. We assume $\lVert \nu\rVert_\infty:=\sup_{q,a} \lvert \nu(q,a)\rvert\le \epsilon$ for some fixed but unknown $\epsilon\in [0,1)$, a standard way of modeling misspecification in linear bandits \citep{gopalan2016low,ghosh2017misspecified,foster2020adapting,dong2023does} or linear Markov decision processes \citep{du2020good,jin2023provably,vial2022improved}.
All components in \Cref{eq:true reward initial} -- linear maps $U^\ast$, $V^\ast$, misspecification function $\nu$, and misspecification bound $\epsilon$ -- are all unknown.

Using the Frobenius inner product notation $\langle A,B\rangle_F:=\tr(A^\trans B)$, where $A,B$ are two matrices of the same size, we equivalently write $r(q,a)=\langle (V^\ast)^\trans U^\ast,aq^\trans \rangle_F$. Thus the reward function in \Cref{eq:true reward initial} admits a $d_a\times d_q$-dimensional linear structure (with a misspecification up to $\epsilon$), namely
\begin{equation}\label{eq:true reward}
r(q,a)=\bigl \langle \Psi^\ast,aq^\trans\bigr \rangle_F+\nu(q,a),\quad \text{where kernel }\Psi^\ast:=(V^\ast)^\trans U^\ast\in \mathbb R^{d_a\times d_q}.
\end{equation}

\subsection{Embedding Models as Low-Rank Experts}\label{sec:expert reward}
As standard both in practical recommendation systems and theoretical research, we model embedding models via the two-tower architecture \citep{huang2013learning}: Each model $m\in [M]$ projects queries and items independently onto its own embedding space $\mathbb R^{s_m}$, and recommends items to queries according to the softmax of similarity scores on $\mathbb R^{s_m}$. As observed by \citet{weller2025theoretical}, embedding models work in very low-dimensional representation spaces, hence we assume $s_m\ll d_a,d_q$.
When presented with query $q\in \mQ$, model $m$'s \emph{recommendation distribution} over the item set $\mA\subseteq \mathbb R^{d_a}$ is:%
\footnote{While embedding models -- from lightweight encoders to large LLMs -- employ highly non-linear internal structures such as GeLU activations \citep{hendrycks2016gaussian,devlin2019bert}, the linear form in \Cref{eq:expert reward initial} is backed by recent findings:
The Platonic representation hypothesis \citep{huh2024position} and representation alignment literature \citep{lenc2015understanding, kornblith2019similarity} demonstrate that latent spaces across heterogeneous neural networks -- spanning lightweight encoders like BERT, massive LLMs like LLaMA, and even computer vision models -- are globally isomorphic and can be mapped to one another via \emph{linear} transformations.
Furthermore, \citet{wang2020understanding} prove that contrastive learning, a common paradigm in embedding model training, explicitly enforces an \emph{inner-product}-based geometry. Thus, the learned embeddings naturally possess a linear structure, allowing the misspecification function $\mu_m$ to rigorously absorb any residual non-linearities.}
\begin{equation}\label{eq:expert reward initial}
\xi_{m}(a\mid q)\propto \exp\Bigl (\bigl \langle U_mq,V_ma\bigr \rangle+\mu_m(q,a)\Bigr ),\quad \forall a\in \mA,
\end{equation}
which is in a form that is very similar to \Cref{eq:true reward initial}: $U_m\in \mathbb R^{s_m\times d_q}$ and $V_m\in \mathbb R^{s_m\times d_a}$ are two fixed but unknown linear maps, and $\mu_m(q,a)$ is a misspecification function capturing the non-linearity in the embedding model. We assume $\lVert \mu_m\rVert_\infty:=\sup_{q,a}\lvert \mu_m(q,a)\rvert\le \epsilon$ for the same constant $\epsilon\in [0,1)$.

The key difference between \Cref{eq:true reward initial,eq:expert reward initial} is that $U_m\in \mathbb R^{s_m\times d_q}$ and $V_m\in \mathbb R^{s_m\times d_a}$ are low-rank. For notational simplicity, we define $s:=\max_m s_m$ and omit the subscript $m$ in $s_m$. Consequently, when defining the \emph{model kernel} $\Psi_m:=V_m^\trans U_m$ similar to \Cref{eq:true reward}, we have $\rank(\Psi_m)\le s$ and
\begin{equation}\label{eq:expert reward}
\xi_m(a\mid q)\propto \exp\Bigl (\bigl \langle \Psi_m,aq^\trans\bigr \rangle_F+\mu_m(q,a)\Bigr ),\quad \text{where kernel }\Psi_m:=V_m^\trans U_m\text{ has a rank}\le s.
\end{equation}
We emphasize that all components in \Cref{eq:expert reward}, namely $\Psi_m,\mu_m$, and $s$, are unknown to the learner.
The name low-rank ``experts'' come from the bandits with expert advice literature \citep{auer2002nonstochastic}.

\paragraph{Regularity Assumptions.}
We assume that all the embeddings are $\ell_2$-normalized as $\lVert q\rVert_2,\lVert a\rVert_2\le 1$, $\forall q\in \mQ,a\in \mA$; all model kernels ensure $\lVert \Psi_m\rVert_2\le \sqrt \epsilon$; and the reward kernel satisfies $\lVert \Psi^\ast\rVert_2\le 1$.\footnote{The first condition is a standard practice in contrastive learning \citep[\S3]{wang2020understanding}. The second condition arises because, prior to computing a softmax over inner products, practitioners usually divide the logits by a large normalization or temperature parameter. This technique is very common in Transformers \citep{vaswani2017attention}, knowledge distillation \citep{hinton2015distilling}, and model calibration \citep{guo2017calibration}. Should this condition fail to hold, our main result in \Cref{sec:regret} -- namely the quadratic leading term identified in \Cref{thm:quadratic policy class} -- would still hold, albeit with a larger misspecification gap. The third condition ensures bounded rewards and is standard in bandits and reinforcement learning \citep{jin2023provably,luo2021policy}.}

\subsection{Interaction Protocol, Feedback Model, and Policy Regret}\label{sec:feedback}

Before the game, the environment fixes the reward kernel $\Psi^\ast$, model kernels $\{\Psi_m\}_m$, and misspecification functions $\nu$ and $\mu_m$ in \Cref{eq:true reward,eq:expert reward}.
The learner only knows the query space $\mQ=\mathbb R^{d_q}$, item set $\mA\subseteq \mathbb R^{d_a}$, number of models $M$, and number of rounds $T$.\footnote{While practical hard constraints may make the item set time-varying \citep{covington2016deep,wang2018billion,huang2020embedding}, incorporating such non-stationarity via sleeping bandits \citep{kleinberg2010regret,kanade2009sleeping} is orthogonal to our primary focus. We assume a stationary item set $\mA$ to isolate our core challenge of online embedding model routing. The knowledge of $T$ is for the ease of presentation, which can be relaxed via the standard doubling trick \citep{auer2002nonstochastic,besson2018doubling}. We remark that the low-rank parameter $s=\max_m s_m$ -- defined in \Cref{eq:expert reward} -- is also unknown.}
In each round $t=1,2,\ldots,T$:
\begin{enumerate}
\item the environment arbitrarily (perhaps adaptively adversarially) decides a query $q_t\in \mQ$;
\item each model $m\in [M]$, observing the query $q_t$, crafts its distribution $\xi_m(q_t)$ via \Cref{eq:expert reward};
\item the learner, observing query $q_t$ but not the distributions, chooses an model $m_t\in [M]$ to invoke;
\item the item recommended for this round is sampled from model $m_t$'s distribution as $a_{t,m_t}\sim \xi_{m_t}(q_t)$;
\item the learner observes a noisy reward $r_t=r(q_t,a_{t,m_t})+\eta_t$, where $r_t(q,a)$ is defined in \Cref{eq:true reward}. We assume $\eta_t$ is a zero-mean and conditionally 1-sub-Gaussian noise \citep{abbasi2011improved}.
\end{enumerate}

We remark that the feedback available to the learner is very limited. First, \emph{bandit reward feedback}: only the (noisy) reward induced by the actually recommended item $a_{t,m_t}$ is revealed. While this is a common challenge in both practical recommendation systems \citep{li2010contextual,chapelle2011empirical} and theoretical bandits with expert advice \citep{auer2002nonstochastic,beygelzimer2011contextual}, the extremely \emph{limited expert observability} is unique in the context of model routing: Only an item sampled from the chosen model's distribution, namely $a_{t,m_t}$, is observable. This, therefore, induces two layers of unobservability. First, for each unselected model $m \neq m_t$, the learner does not know which item it would have recommended; second, for model $m_t$, only the recommended item -- instead of the full distribution $\xi_{m_t}(q_t)$ -- is observed.
This challenge arises from the prohibitively large cost of invoking multiple models in practice: As argued by \citet{tsiourvas2025causal} and \citet{zu2026barouter}, it is unrealistic to assume counterfactual information from those unselected models. In \Cref{sec:expert regret,sec:contextual regret}, we will see how the limited observability and bandit feedback together constitute significant challenges.

We consider the policy regret \citep{agarwal2014taming,syrgkanis2016efficient}, which compares our learner to a fixed class of model routing policies. Formally, a \emph{policy} is a mapping from query set $\mQ$ to the probability simplex over all models, i.e., $\triangle:=\{p\in [0,1]^M \mid \sum_{m=1}^M p_m=1\}$. The space of all policies is $\triangle^\mQ$. The policy regret is defined w.r.t.\ a fixed subset of policies, namely $\Pi\subseteq \triangle^{\mathcal Q}$, as:%
\footnote{Despite allowing an adaptive adversary, the $q_t$'s in \Cref{eq:policy regret} are those queries actually realized under the learner's policies (that is, fixing the same query sequence $q_1,q_2,\ldots,q_T$ and finding the hindsight optimal policy $\pi^\ast\in \Pi$). As proved by \citet{arora2012online}, the \emph{strict policy regret} -- where the reference policy $\pi^\ast$ is evaluated on its own induced queries $q_1^\ast,q_2^\ast,\ldots,q_T^\ast$ -- must be $\Omega(T)$. We further remark that for the non-convex policy classes like log-linear and log-quadratic, the literature instead studies \emph{linearized} versions of the policy regret; see \Cref{table}, \Cref{sec:parameter space}, and \Cref{sec:policy regret appendix,sec:linearized policy regret appendix} for more details.}
\begin{equation}\label{eq:policy regret}
\mR_T(\Pi):=\sup_{\pi^\ast\in \Pi} \E\left [\sum_{t=1}^T \E_{m\sim \pi^\ast(q_t)}\left [\E_{a\sim \xi_m(q_t)}[r(q_t,a)]\right ]-\sum_{t=1}^T r(q_t,a_t)\right ].
\end{equation}
Properly deciding $\Pi$ recovers several commonly used regret metrics: When $\Pi=\triangle^{\mathcal Q}$ takes the full policy space, it recovers the contextual regret, which is unfortunately intractable; when $\Pi$ restricts all queries to a fixed model $m^\ast\in [M]$, it reduces to the expert regret, which essentially ignores the heterogeneity of queries; when $\Pi=\Pi_{\text{lin}}$ is the standard log-linear policy class, it turns out to be structurally misaligned with the underlying problem structure.
In \Cref{sec:regret}, we study the first question of this paper: what is the most appropriate choice of $\Pi$ for embedding model routing?

\section{Policy Class: Tradeoff between Alignment and Tractability}\label{sec:regret}
Instead of more common policy classes (discussed later in this section), as a main contribution of this paper, we argue that the most appropriate choice of $\Pi$ is the following \emph{log-quadratic} policy class:
\begin{equation}\label{eq:quadratic policy class}
\Pi_{\text{quad}}:=\bigl \{\pi(m\mid q)\propto \exp(q^\trans W_m q),\forall q\in \mQ\mathrel{\big \vert} W_m\in \mathbb S^{d_q},\forall m\in [M]\bigr \},
\end{equation}
where $\mathbb S^{d_q}$ is the space of symmetric $d_q\times d_q$ matrices.
$\Pi_{\text{quad}}$ defines a parametric class of model routing policies: The probability of routing a query $q$ to a model $m$ is determined by a softmax distribution over the \emph{quadratic score} $q^\trans W_mq$ (hence the name).
We first justify that this log-quadratic class is well-aligned with the underlying structure of the embedding model routing problem.

\paragraph{Log-Quadratic Class.}\label{sec:quadratic policy class}
In \Cref{thm:quadratic policy class}, we establish that in embedding model routing, the expected reward of any model $m$ under any query $q$ is governed by a \emph{low-rank quadratic} structure:
\begin{proposition}[True Rewards are Roughly Low-Rank Quadratic]\label{thm:quadratic policy class}
Given a query $q\in \mQ=\mathbb R^{d_q}$, the expected reward of any model $m\in [M]$ is approximated by a low-rank quadratic form in $q$, namely
\begin{equation*}
R_m(q):=\E_{a\sim \xi_m(q)}[r(q,a)]=C(q)+q^\trans W_m^\ast q+\delta_m(q),\quad \forall m\in [M],q\in \mQ\subseteq \mathbb R^{d_q},
\end{equation*}
where $C(q)$ is a constant shared across all models, $W_m^\ast\in \mathbb S^{d_q}$ is a symmetric matrix with $\lVert W_m^\ast\rVert_\ast\le \rank(W_m^\ast)\le 2s$, and $\delta_m(q)$ is an $\O(\epsilon)$ misspecification gap due to the non-linearity of $\nu$ and $\mu_m$.
\end{proposition}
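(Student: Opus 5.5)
The natural route is a first-order (small-logit) expansion of the softmax $\xi_m(q)$ around the uniform-like base distribution. Let $\rho(a)$ denote a fixed reference distribution on $\mA$; for simplicity take $\rho$ uniform on $\mA$, assuming $\mA$ is finite. Write the logit of model $m$ as $z_m(q,a):=\langle \Psi_m,aq^\trans\rangle_F+\mu_m(q,a)$. The regularity assumptions give $\lvert \langle \Psi_m,aq^\trans\rangle_F\rvert=\lvert a^\trans \Psi_m q\rvert\le \lVert \Psi_m\rVert_2\le \sqrt\epsilon$ and $\lvert\mu_m\rvert\le\epsilon$, so $\lvert z_m\rvert\le \sqrt\epsilon+\epsilon=\O(\sqrt\epsilon)$. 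Then
\[
\xi_m(a\mid q)=\frac{\rho(a)e^{z_m(q,a)}}{\E_{\rho}[e^{z_m}]}=\rho(a)\bigl(1+z_m(q,a)-\E_\rho[z_m(q,\cdot)]+\O(\epsilon)\bigr),
\]
since a second-order Taylor expansion of $e^z$ and of the normalizer gives remainders of order $z^2=\O(\epsilon)$.

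Next, plug this into $R_m(q)=\E_{a\sim\xi_m(q)}[r(q,a)]$ with $r(q,a)=a^\trans\Psi^\ast q+\nu(q,a)$. The term $\nu$ contributes at most $\epsilon$ under any distribution, so it goes into $\delta_m$. For the linear part $f(a):=a^\trans\Psi^\ast q$, which satisfies $\lvert f\rvert\le 1$, the expansion gives
\[
\E_{\xi_m}[f]=\E_\rho[f]+\mathrm{Cov}_\rho\bigl(f,z_m(q,\cdot)\bigr)+\O(\epsilon).
\]
The first term is $C(q):=\E_{a\sim\rho}[a^\trans\Psi^\ast q]$, which is shared across models. In the covariance, the $\mu_m$ part is $\O(\epsilon)$ and is absorbed. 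The remaining piece is
\[
\mathrm{Cov}_\rho(a^\trans\Psi^\ast q,\,a^\trans\Psi_m q)=q^\trans (\Psi^\ast)^\trans\Sigma\,\Psi_m q,\qquad \Sigma:=\mathrm{Cov}_\rho(a),
\]
which is a quadratic form in $q$. Symmetrize it to get $W_m^\ast:=\tfrac12\bigl((\Psi^\ast)^\trans\Sigma\Psi_m+\Psi_m^\trans\Sigma\Psi^\ast\bigr)\in\mathbb S^{d_q}$, which leaves the quadratic form unchanged.

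For the rank and norm bound: each summand has rank at most $\rank(\Psi_m)\le s$, so $\rank(W_m^\ast)\le 2s$. For the nuclear norm, $\lVert W_m^\ast\rVert_\ast\le\rank(W_m^\ast)\,\lVert W_m^\ast\rVert_2$. Moreover $\lVert W_m^\ast\rVert_2\le\lVert\Psi^\ast\rVert_2\lVert\Sigma\rVert_2\lVert\Psi_m\rVert_2\le\sqrt\epsilon\le1$, using $\lVert\Sigma\rVert_2\le\E\lVert a\rVert^2\le1$. This gives $\lVert W_m^\ast\rVert_\ast\le\rank(W_m^\ast)\le2s$.

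The main obstacle is the bookkeeping of the error terms. The neglected part is order $\epsilon$ relative to the $\O(\sqrt\epsilon)$-scale signal $q^\trans W_m^\ast q$, so the Taylor remainder of the softmax must be controlled uniformly in $q$ and $a$. I would use $\lvert e^z-1-z\rvert\le z^2e^{\lvert z\rvert}$ together with $\lvert z\rvert\le 2$, so that every remainder (from the numerator, the normalizer, and the cross terms with $\mu_m$) is bounded by an absolute constant times $\epsilon$. This yields $\lvert\delta_m(q)\rvert=\O(\epsilon)$.

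Two points remain to handle. If $\mA$ is infinite, I would replace the uniform $\rho$ by an arbitrary fixed base measure in the definition of the softmax; any fixed choice works because $C(q)$ and $\Sigma$ depend only on $\rho$, not on $m$. If one wants $W_m^\ast$ free of the $\sqrt\epsilon$ scale, it can be kept as is, since the stated bound is only an upper bound.
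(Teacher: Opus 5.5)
Your proof is correct and follows essentially the same route as the paper. Both expand the softmax to first order around zero logits: the uniform mean over $\mathcal A$ gives the model-independent term $C(q)$, the uniform covariance $\Sigma$ gives the quadratic $q^\trans(\Psi^\ast)^\trans\Sigma\Psi_m q$, and the same symmetrization and rank/spectral-norm bounds follow. The only difference is cosmetic: the paper first removes $\mu_m$ via a density-ratio/total-variation bound and then Taylor-expands the log-partition gradient $\nabla A(\Psi_m q)$, whereas you keep $\mu_m$ in the logit and absorb its $\O(\epsilon)$ covariance contribution directly, with more explicit remainder constants.
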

The proof is in \Cref{sec:quadratic policy class appendix}.
In light of Proposition \ref{thm:quadratic policy class}, let us consider the following family of score-based embedding model router: Upon receiving a query $q$, the router calculates a score $s_m(q)$ for each model $m$, and performs routing via a softmax over scores with some temperature $c^{-1}$. Such an architecture is ubiquitous in classification or model routing systems \citep{devlin2019bert,fedus2022switch}.
The optimal score-based routing policy is setting the score as the true reward, i.e.,
\begin{equation}\label{eq:softmax over quadratic}
\pi^\ast(m\mid q)=\frac{\exp\bigl (cR_m(q)\bigr )}{\sum_{m'=1}^M \exp\bigl (c R_{m'}(q)\bigr )}\approx \frac{\exp\bigl (c q^\trans W_m^\ast q\bigr )}{\sum_{m'=1}^M \exp\bigl (c q^\trans W_{m'}^\ast q\bigr )}\propto \exp \Bigl (q^\trans \bigl(cW_m^\ast\bigr) q\Bigr ),~ \forall m,
\end{equation}

\Cref{thm:quadratic policy class} and \Cref{eq:softmax over quadratic} thus suggest that optimal model routing policies are naturally approximated by our log-quadratic class in \Cref{eq:quadratic policy class}, namely $\Pi_{\text{quad}}=\{\pi(m\mid q)\propto \exp(q^\trans W_mq)\mid W_m\in \mathbb S^{d_q},\forall m\}$.
In addition to well-aligned with the underlying structure of embedding model routing, $\Pi_{\text{quad}}$ is also tractable enough for learning: In \Cref{sec:OMD}, we design a computationally efficient policy gradient algorithm, HPG, that attains $\Otil(s\sqrt{MT})$ \emph{linearized} policy regret (a surrogate metric we study; see \Cref{sec:parameter space}) w.r.t.\ $\Pi_{\text{quad}}$. We thus conclude that, the log-quadratic $\Pi_{\text{quad}}$ \emph{offers an ideal trade-off for embedding model routing}: well-approximating the optimal policies, while enabling efficient learning. 

Having investigated our log-quadratic class, we examine why standard policy classes fall short. As depicted in \Cref{table}, we consider the following policy classes; detailed discussions are in \Cref{sec:reductions}.

\begin{table}[!t]
\centering
\caption{Comparison between Different Policy Classes or Regret Notions}\label{table}
\begin{minipage}{\textwidth}
\resizebox{\textwidth}{!}{%
\begin{savenotes}
\renewcommand{\arraystretch}{1.3}
\begin{tabular}{|c|c|c|c|c|c|}\hline
\textbf{Regret Notion} & \textbf{Aligned?} & \textbf{Tractable?} & \textbf{Regret} & \textbf{Computation} & \textbf{Coments} \\\hline
Expert Regret & \multirow{2}{*}{\xmark} & \multirow{2}{*}{\cmark} & $\sqrt{MT\log M}$ & $\O(M)$ & Non-Adaptive \\[-3pt]
{\color{black!50}(constant policy)} & & & {\small\citep{auer2002nonstochastic}} & {\small per round} & Model Routing\\
\hline
Contextual Regret & \multirow{2}{*}{\cmark} & \multirow{2}{*}{\xmark} & $\sqrt{d_ad_q T \log M \log T}$ & $\O(d_a^2d_q^2 M)$ & Curse of \\[-3pt]
{\color{black!50}(unrestriced policy}) & & & {\small \citep{li2019nearly}\footnote{This reduction only holds without misspecification ($\nu=\mu_m=0$) and when knowing model kernels ($\Psi_m$). ``Curse of dimensionality'' means, when treating $d_q$ and $d_a$ as the same order $d$, a regret bound that is linear in $d$.\label{footnote:curse of dimensionality}}} & {\small per round} & Dimensionality \\\hline
Log-Linear Regret & \multirow{2}{*}{\xmark} & \multirow{2}{*}{\cmark} & $\sqrt{d_qMT\log T}$ & $\O(d_q^2 M^2)$ & Structural \\[-3pt]
{\color{black!50}($\pi\propto \exp(\theta_m^\trans q)$)} & & & {\small\citep{agarwal2021theory}\footnote{Their bound, as well as ours, only applies to \emph{linearized} versions of policy regret. These are surrogate metrics used to resolve the non-convexity of softmax functions; see \Cref{sec:parameter space} and \Cref{sec:policy regret appendix,sec:linearized policy regret appendix} for more discussions. An exact policy regret guarantee under either log-linear or log-quadratic class remains open.}} & {\small per round} & Misspecification \\\hline
\multirow{4}{*}{\shortstack{Log-Quadratic\\Policy Regret\\{\color{black!50}($\pi\propto \exp(q^\trans W_mq)$)}}} & \multirow{4}{*}{\cmark} & \multirow{4}{*}{\cmark} & $d_q\sqrt{MT\log T}$ & $\O(d_q^4M^2)$ & Curse of \\[-3pt]
& & & {\small\citep{agarwal2021theory}} & {\small per round} & Dimensionality \\\cline{4-6}
& & & {\color{green!30!black}$s\sqrt{MT\log T}$} & {\color{green!30!black}$\O(d_q^2 M)$} & \textbf{\color{green!40!black}Efficient \&} \\[-3pt]
& & & \textbf{\color{green!40!black}(Ours)} & {\small\color{green!30!black} per round} & \textbf{\color{green!40!black}Parameter-Free}\\\hline
\end{tabular}
\end{savenotes}
}\renewcommand{\footnoterule}{}
\end{minipage}
\end{table}

\paragraph{Constant Class is Non-Adaptive (\Cref{sec:expert regret appendix}).}\label{sec:expert regret}
When $\Pi=\Pi_{\text{const}}$ only contains constant policies, i.e., $\pi(q)\equiv m$ for some fixed $m$, we recover the \emph{expert regret} for bandits with expert advice.
While simple, this metric is unfavorable for model routing: The benchmark is using the \emph{same} model to answer all queries, while modern recommendation systems direct queries to distinct expert models for better performance.
We further justify that $\mR_T(\Pi_{\text{const}})$ fails to capture the structure of model routing problems: In \Cref{thm:expert regret lower bound}, we establish $\mR_T(\Pi_{\text{const}})=\Omega(\sqrt{MT/\log d_a})$, which holds even if the learner observes all models' recommendations. Thus, a black-box EXP3 \citep{auer2002nonstochastic}, by entirely ignoring linear rewards and low-rank model structures, is already near-optimal w.r.t.\ $\Pi_{\text{const}}$.

\paragraph{Unrestricted Class gives ``Curse of Dimensionality''  (\Cref{sec:contextual regret appendix}).}\label{sec:contextual regret}
The other extreme is the unrestricted $\Pi=\triangle^\mQ$, resembling the contextual regret in contextual bandits.
This is favorable because the learner competes with \emph{any} query-adaptive routing policy. However, this class remains notorious even under two assumptions: \textbf{(A1)} no misspecification and \textbf{(A2)} known model kernels.

Under these two assumptions, we first reduce $\mR_T(\triangle^\mQ)$ to a $d_ad_q$-dimensional contextual linear bandit. The VCL-SupLinUCB algorithm \citep{li2019nearly} attains $\mR_T^C=\O(\sqrt{d_ad_qT \log M \log T})$ regret at a per-round computational cost of $\O(d_a^2d_q^2 M)$.
Neither the regret nor computational cost is acceptable: In embedding model routing, the dimensions of query and item spaces are both large. For example, \textsc{RouterRetriver} \citep{lee2025routerretriever} embeds queries and items into $d_q=d_a=768$-dimensional spaces via \textsc{Contriever} \citep{izacard2022unsupervised}. Hence if $T/(\log M\log T)\le 768^2\approx 5\times 10^5$, the regret bound becomes vacuous. The per-round per-model $d_a^2d_q^2\approx 3\times 10^{11}$ computation cost is also unbearable. Such a \emph{``curse of dimensionality''} (Footnote~\ref{footnote:curse of dimensionality}) thus makes this algorithm impractical.

In \Cref{sec:contextual regret appendix}, we discuss why other seemingly more promising reductions (e.g., to $d_a$-dimensional adversarial contextual linear bandits) fail to refine the regret: Even if assuming i.i.d. queries -- in addition to assumptions \textbf{(A1)} and \textbf{(A2)} -- the correlation between recommendation distributions $\xi_m(\cdot \mid q_t)$ and rewards $r(q_t,\cdot)$ still makes existing algorithms inapplicable (see \Cref{sec:contextual regret adv}).

\paragraph{Log-Linear Class is Misaligned (\Cref{sec:policy regret appendix}).}\label{sec:policy regret}
In reinforcement learning, another commonly used policy class is the \emph{log-linear} class $\Pi_{\text{lin}}:=\{\pi(m\mid q)\propto \exp(\theta_m^\trans q)\mid \theta_m\in \mathbb R^{d_q},\forall m\}$ \citep{agarwal2021theory,mei2020global}.
The parameter space of $\Pi_{\text{lin}}$, namely $\Theta=(\mathbb R^{d_q})^M$, is of dimension $Md_q$, which is much smaller than that of our $\Pi_{\text{quad}}$.
Based on Natural Policy Gradient (NPG; \citealt{kakade2001natural}), \citet{agarwal2021theory} attain $\O(\sqrt{d_q M T \log T})$ \emph{linearized} policy regret w.r.t.\ $\Pi_{\text{lin}}$ at a per-round computational cost of $\O(d_q^2 M^2)$.\footnote{The regret notion \citet{agarwal2021theory} used for the log-linear policy class also adopts linearization to handle the non-convexity of softmax. However, we remark that their linearization happens in a different space to ours. We direct the readers to \Cref{sec:parameter space} and \Cref{sec:policy regret appendix,sec:linearized policy regret appendix} for detailed discussions.}
Compared to the unrestricted class, under the same $d_q=d_a=768$ example, the vacuous regret regime only lasts for $768\cdot M$ rounds, and the per-round computational cost reduces to $6\times 10^5\cdot M^2$.
Given its simplicity, $\Pi_{\text{lin}}$ is already used in practical model routing: \textsc{RouterRetriever} \citep{lee2025routerretriever} routes queries based on the cosine similarity between the query $q_t$ and models' ``representative training data,'' thus essentially resembling the $\theta_m^\trans q$.

However, despite its tractability and popularity, $\Pi_{\text{lin}}$ is structurally misaligned: As established in \Cref{thm:quadratic policy class}, the true rewards are governed by a \emph{quadratic} structure $q^\trans W_m^\ast q$. A linear score $\theta_m^\trans q$ thus \emph{fails} to capture the interactions between different dimensions of $q$. Consequently, even with optimal policy regret w.r.t.\ $\Pi_{\text{lin}}$, the learner remains sub-optimal in actual model routing tasks.

\subsection{Numerical Verification on Amazon ESCI Dataset}
In \Cref{tab:esci_direct_label_gaps}, we consider four policy classes -- the constant $\Pi_{\text{const}}$, the log-linear $\Pi_{\text{lin}}$, the log-quadratic $\Pi_{\text{quad}}$, and the unrestricted $\triangle^\mQ$ -- over the Amazon ESCI dataset \citep{reddy2022shopping}.
This dataset benchmarks 97,345 difficult search queries in e-commerce: Each consumer query is associated with several candidate products (18.68 on average) with titles, descriptions, and keywords, all in natural language.
Each query-action pair is annotated with one of E(xact), S(ubstitute), C(omplement), or I(rrelevant), which we convert into numerical values as the reward function $r\colon \mQ\times \mA\to [0,1]$.

We consider $M=8$ lightweight embedding models as candidate experts, spanning symmetric semantic encoders like \texttt{miniLM} \citep{wang2020minilm,wang2021minilmv2}, asymmetric search models like \texttt{msmarco-distilbert} \citep{reimers2019sentence}, and modern prompt-driven encoders \citep{wang2022text,xiao2024c}. The detailed experimental setup, as well as more discussions, can be found in \Cref{sec:esci_direct_label_gaps}.

To facilitate model routing, we embed queries and items into 768-dimensional vector spaces via \textsc{Contriever} \citep{izacard2022unsupervised} before feeding them to the learner; this follows the practice of \textsc{RouterRetriever} \citep{lee2025routerretriever}.\footnote{While expert models generate recommendations based on the similarity score between embeddings (see \Cref{sec:esci_direct_label_gaps}), the learner \emph{cannot} access such embeddings. The reason is two-fold: As argued by \citet{tsiourvas2025causal} and \citet{zu2026barouter}, it is impractical to feed every incoming query into every candidate model; hence the learner can only choose one model, observe its recommendation, and collect its reward (see \Cref{sec:feedback}). Moreover, models' embeddings lie in different spaces and are incomparable with each other; we hence need a single embedding to define the policy classes.}
We denote an embedded query by $q\in \mathbb R^{d_q}$ and an embedded item by $a\in \mathbb R^{d_a}$, with $d_q=d_a=768$. We then find the optimal routing policy from each policy class: the constant $\Pi_{\text{const}}$, the log-linear $\Pi_{\text{lin}}$, our log-quadratic $\Pi_{\text{quad}}$, and the unrestricted $\Delta^\mQ$.

For any routing policy $\pi\colon \mQ\to \triangle$, we define it sub-optimality gap on the ESCI dataset as follows:
\begin{equation}\label{eq:ESCI gap definition}
\text{Gap}(\pi):=\E_{q}\left [\max_{m\in [M]}\E_{a\sim \xi_m(q)}[r(q,a)] - \E_{m\sim \pi(q)}\left [\E_{a\sim \xi_m(q)}[r(q,a)]\right ]\right ],\quad \forall \pi \in \triangle^\mQ,
\end{equation}
where $\E_q$ is taken w.r.t. the uniform distribution over all ESCI queries (i.e., the average sub-optimality on each query), and recall that the reward $r(q,a)$ is constructed using the E, S, C, and I labels in the dataset.
$\text{Gap}(\pi)$ hence compares $\pi$ to the best adaptive policy mapping any query to any model.

\begin{table}[t]
\centering
\caption{Sub-Optimality Gap of Different Policy Classes on Amazon ESCI Dataset}
\label{tab:esci_direct_label_gaps}
\begin{tabular}{|c|c|c|c|c|c|c|}\hline
{Query} & {Num of} & $\Pi_{\text{const}}$ & $\Pi_{\text{lin}}$ & $\Pi_{\text{const}}$ $\to$ $\Pi_{\text{lin}}$ & $\Pi_{\text{quad}}$ & $\Pi_{\text{lin}}$ $\to$ $\Pi_{\text{quad}}$ \\
{Category} & {Queries} & {Gap ($\downarrow$)} & {Gap ($\downarrow$)} & {Improvement ($\uparrow$)} & {Gap ($\downarrow$)} & {Improvement ($\uparrow$)} \\\hline
all & 97,345 & 0.078 & 0.065 & 16.4\% & \textbf{0.021} & \textbf{68.0\%} \\\cdashline{1-7}[1pt/3pt]
\texttt{nlqec} & 169 & 0.109 & 0.003 & 97.4\% & \textbf{<0.001} & \textbf{99.8\%} \\
\texttt{behavioral} & 1,253 & 0.067 & 0.012 & 82.3\% & \textbf{<0.001} & \textbf{99.4\%} \\
\texttt{parse-pattern} & 4,250 & 0.064 & 0.024 & 62.2\% & \textbf{0.002} & \textbf{93.3\%} \\
\texttt{negations} & 2,899 & 0.132 & 0.060 & 54.6\% & \textbf{0.007} & \textbf{88.9\%} \\
other & 88,774 & 0.074 & 0.060 & 19.1\% & \textbf{0.017} & \textbf{71.4\%} \\\hline
\end{tabular}
\end{table}

In \Cref{tab:esci_direct_label_gaps}, we report the following quantities about the optimal policy from each class:\footnote{In addition to the first row evaluating on the full ESCI dataset, we also evaluate them on a few ``query categories,'' provided by \citet{reddy2022shopping} as exceptionally hard queries; more details and discussions are in \Cref{sec:esci_direct_label_gaps}.}
\begin{itemize}
\item $\Pi_{\text{const}}$ gap, which is the sub-optimality gap of the best constant policy $\pi_{\text{const}}^\ast\in \Pi_{\text{const}}$ (mapping all queries to a fixed expert model). This quantity measures the necessity and difficulty of model routing: A large $\pi_{\text{const}}^\ast$ implies that no single model dominates the others on (most of) the queries.
\item $\Pi_{\text{lin}}$ gap and its improvement over $\Pi_{\text{const}}$. We find the best routing policy $\pi_{\text{lin}}^\ast$ in the log-linear class $\Pi_{\text{lin}}=\{\pi(m\mid q)\propto \exp(\theta_m^\trans q)\mid \theta_m\in \mathbb R^{d_q},\forall m\}$, and calculate sub-optimality $\text{Gap}(\pi_{\text{lin}}^\ast)$ according to \Cref{eq:ESCI gap definition}. We then report $1-\text{Gap}(\pi_{\text{lin}}^\ast)/\text{Gap}(\pi_{\text{const}}^\ast)$ as the improvement of $\Pi_{\text{lin}}$ over $\Pi_{\text{const}}$.
\item $\Pi_{\text{quad}}$ gap and its improvement over $\Pi_{\text{lin}}$. This is almost similar to the previous metric, except that we now compare our proposed log-quadratic policy $\pi_{\text{quad}}^\ast\in \Pi_{\text{quad}}$ to the log-linear policy $\pi_{\text{lin}}^\ast$. 
\end{itemize}

From \Cref{tab:esci_direct_label_gaps}, we see that $\Pi_{\text{quad}}$ is indeed well-suited for the problem of embeddig model routing: Over the full ESCI dataset (row ``all''), the constant policy suffers a sub-optimality gap of $0.078$, and the log-linear policy still has a gap of $0.065$ (only $16.4\%$ improvement). On the other hand, our log-quadratic policy reduces the gap to 0.021, which is a $68.0\%$ improvement over the log-linear policy. For various harder categories -- like \texttt{nlqec} and \texttt{negations} -- the constant policy suffers from a significantly larger sub-optimality gap (compared to the full-dataset average, i.e., these categories are indeed harder), but our log-quadratic policy improves the gaps significantly.

To conclude, by comparing different policy classes, \Cref{tab:esci_direct_label_gaps} demonstrates not only the necessity of dynamic model routing, but also the low misspecification of our log-quadratic policy class $\Pi_{\text{quad}}$.

\subsection{Additional Notations and Linearized Policy Regret}\label{sec:parameter space}
We therefore focus on our log-quadratic $\Pi_{\text{quad}}$ class.
To facilitate subsequent algorithm design, we give a few extra notations: Since each policy is parameterized by $M$ symmetric $d_q\times d_q$ matrices, we collect them into a single diagonal block matrix $\bm W=\diag(W_1,W_2,\ldots,W_M)\in \mathbb S^{Md_q}$, which we call a \emph{parameter}.
We use boldface letters for parameters. 
The parameter space $\mW\subseteq \mathbb S^{Md_q}$ then has a dimension of $Md_q^2$. 
According to \Cref{eq:quadratic policy class}, each parameter $\bm W\in \mW$ induces a model routing policy:
\begin{equation}\label{eq:quadratic policy}
\pi_{\bm W}(m\mid q):= \frac{\exp(q^\trans W_{m} q)}{\sum_{m'=1}^M \exp(q^\trans W_{m'}q)},\quad \forall m=1,2,\ldots,M.
\end{equation}
For each round $t\in [T]$, define the expected loss (negative reward) suffered by any parameter $\bm W$ as
\begin{equation}\label{eq:loss function}
\mathcal L_t(\bm W):=-\E_{m\sim \pi_{\bm W}(q_t)}\left [\E_{a\sim \xi_m(q_t)}\left [r(q_t,a)\right ]\right ],\quad \forall t\in [T],\bm W\in \mW.
\end{equation}
Then the policy regret w.r.t.\ $\Pi_{\text{quad}}$, namely $\mR_T(\Pi_{\text{quad}})$, is $\sup_{\bm W^\ast\in \mW}\E[\sum_{t=1}^T \mathcal L_t(\bm W_t)-\mathcal L_t(\bm W^\ast)]$.
From \Cref{thm:quadratic policy class}, we assume that an optimal $\bm W^\ast\in \mW$ ensures $\lVert W_m^\ast\rVert_\ast \le \rank(W_m^\ast)\le 2s$, $\forall m$.

Unfortunately, since softmax is non-convex, neither policy (\Cref{eq:quadratic policy}) nor loss (\Cref{eq:loss function}) is convex. Consistent with previous policy gradient analysis (detailed in \Cref{sec:policy regret appendix}; \citealt{agarwal2021theory}), we define a surrogate metric for the policy regret $\mR_T(\Pi_{\text{quad}})$ via a \emph{local linearization}.
Specifically, we perform linearization on the parameter space $\mW$ and consider the following linearized policy regret:
\begin{equation}\label{eq:linearized policy regret}
\tilde \mR_T(\Pi_{\text{quad}}):=\sup_{\bm W^\ast \in \mW}\E\left [\sum_{t=1}^T \Bigl \langle \nabla \mathcal L_t(\bm W_t),\bm W_t-\bm W^\ast \Bigr \rangle_F\right ].
\end{equation}
Each term in the sum of \Cref{eq:linearized policy regret} recovers the Frank-Wolfe gap in online non-convex optimization \citep{lafond2015online,reddi2016stochastic}, commonly used for, e.g., online submodular optimization \citep{hassani2017gradient,chen2018online,zhang2019online}.
See \Cref{sec:linearized policy regret appendix} for a more technical comparison between $\mR_T(\Pi_{\text{quad}})$ and our linearized $\tilde \mR_T(\Pi_{\text{quad}})$.
While we consider $\tilde \mR_T$ as a surrogate metric to handle non-convexity, tackling policy regret exactly remains an important open question.

\section{Hypentropy Policy Gradient (HPG) for Model Routing}\label{sec:OMD}

We now introduce our algorithm for the log-quadratic policy class $\Pi_{\text{quad}}$.
Moving from the log-linear class $\Pi_{\text{lin}}$ to $\Pi_{\text{quad}}$ improves approximation accuracy (see \Cref{thm:quadratic policy class} and \Cref{eq:softmax over quadratic}), but comes at a cost:
The parameter space dimension increases from  $\dim(\Theta)=d_qM$ to $\dim(\mW)=d_q^2M$. 
Hence, naively flattening $\mW$ into a $d_q^2M$-dimensional vector space and applying policy gradient methods gives $\O(d_q \sqrt{MT \log T})$ regret and requires $\O(d_q^4 M)$ computation per round \citep{agarwal2021theory}. Similar to the unrestricted class discussed in \Cref{sec:regret}, both bounds can be probitively large.

Fortunately, because embedding models operate on low-dimensional representation spaces, \Cref{thm:quadratic policy class} shows that the optimal parameter $\bm W^\ast$ also contains only low-rank matrices. 
This observation is key to bypassing the curse of dimensionality: If we restrict learning to those low-rank $W_m$'s, the effective parameter dimension reduces to $\Theta(sd_qM)$.
But this is challenging: the reward kernel $\Psi^\ast$ in \Cref{eq:true reward} is \emph{not} low-rank, so standard low-rank or sparse regression approaches (e.g., ridge or lasso) do not apply.
That is, the low-rank structure only lies in the \emph{parameter} space, not in the \emph{reward} space.

Standard policy gradient methods also inherently fail to exploit this hidden structure. Indeed, NPG -- which is equivalent to Online Mirror Descent (OMD) equipped with a KL-divergence regularizer \citep{geist2019theory} -- is isotropic across directions and thus does not adapt to the rank.\footnote{In fact, vanilla NPG is only defined on the flattened vector space, where the low-rank structure of $\bm W^\ast$ is lost. Exploiting low-rankness requires working with matrix spaces, but the matrix analogue of NPG -- OMD with von Neumann entropy $\Phi(X)=\tr(X\log X-X)$ -- is only defined for \emph{positive semi-definite} matrices \citep{tsuda2005matrix}.}
We thus instead use the hypentropy regularizer \citep{ghai2020exponentiated} to promote low-rankness; see \Cref{sec:hypentropy}.

In addition to this new regularizer, due to our challenge of bandit feedback and partial observability, we also design a REINFORCE-style gradient estimator; see \Cref{sec:REINFORCE}.
These together give \textbf{H}ypentropy \textbf{P}olicy \textbf{G}radient (HPG) in \Cref{alg:OMD}.
We remark this estimator's effect is two-fold: First, as standard in the literature, it is unbiased and low-variance, thus allowing provable performances under incomplete information (\Cref{thm:main theorem}); second, which is unique to our log-quadratic policy class $\Pi_{\text{quad}}$, it can reduce the computational cost of OMD. Indeed, as we discuss in \Cref{sec:practical implementation}, our HPG admits computationally efficient -- only $\O(d_q^2M)$ runtime per round -- and parameter-free implementations.

\subsection{Nuclear-Norm Ball, Hypentropy, and Online Mirror Descent}\label{sec:hypentropy}

\begin{algorithm}[!t]
\caption{\textbf{H}ypentropy \textbf{P}olicy \textbf{G}radient (HPG) for Model Routing}\label{alg:OMD}
\begin{algorithmic}[1]
\Require{Game length $T$, query set $\mQ\subseteq \mathbb R^{d_q}$, candidate item set $\mA\subseteq \mathbb R^{d_a}$, number of models $M$.\\
Hypentropy parameter $\beta>0$, learning rate $\eta>0$, nuclear norm bound $\tau>0$.}
\State Initialize $\bm W_1$ as the $Md_q\times Md_q$ all-zero matrix, which is a diagonal block matrix in $\mathbb S^{Md_q}$.
\For{$t=1,2,\ldots,T$}
\State Observe query $q_t$. Decide model $m_t$ according to $\pi_{\bm W_t}(q_t)$ \Comment{$\pi_{\bm W}$ is in \Cref{eq:quadratic policy}}
\State Observe model $m_t$'s recommendation $a_{t,m_t}\sim \xi_{m_t}(q_t)$ \Comment{$\xi_m$ is in \Cref{eq:expert reward}}
\State Recommend item $a_{t,m_t}$. Observe $r_t=r(q_t,a_{t,m_t})+\eta_t$ \Comment{$r(q,a)$ is in \Cref{eq:true reward}}
\State Construct gradient estimator $\hat{\bm G}_t$ using $q_t,m_t,r_t$ \Comment{$\hat{\bm G}_t$ is in \Cref{eq:REINFORCE}}
\State Perform an Online Mirror Descent (OMD) step over $\mathbb B_\ast^M(\tau)$. \Comment{\Cref{eq:product nuclear norm ball,eq:hypentropy,eq:OMD}}\label{line:OMD}
\EndFor
\end{algorithmic}
\end{algorithm}

We introduce the first component of our algorithm: a hypentropy-regularized OMD.
Since the optimal parameter $\bm W^\ast$ is low-rank, we restrict our parameter space to a \emph{product nuclear-norm ball}:
\begin{equation}\label{eq:product nuclear norm ball}
\mathbb B_\ast^M(\tau):=\bigl\{ \diag(W_1,W_2,\ldots,W_M)\mathrel{\big \vert} W_m\in \mathbb S^{d_q},\lVert W_m\rVert_\ast\le \tau,\forall m\bigr \}.
\end{equation}

The $\tau\ge 0$ is a parameter to be determined later, and $\lVert \cdot \rVert_\ast$ is the matrix nuclear norm (sum of absolute eigenvalues) that serves as a convex surrogate for matrix rank.
According to \Cref{thm:quadratic policy class}, we know $\lVert W_m^\ast\rVert_\ast\le \rank(W_m^\ast)\lVert W_m^\ast\rVert_2\le 2s$, which means $\bm W^\ast\in \mathbb B_\ast^M(2s)$.
For the ease of presentation, for now, we assume the low-rank parameter $s$ is known; we will drop this assumption in \Cref{sec:parameter-free}.

Given a properly chosen parameter $\tau$, our HPG algorithm performs OMD on $\mathbb B_\ast^M(\tau)$.
The regularizer we use is the hyperbolic entropy (\emph{hypentropy}; \citealt{ghai2020exponentiated}), which is strongly convex with respect to the nuclear norm, making it well-suited for promoting low-rank structures in the parameter space.\footnote{The connection between hypentropy and low-rankness has been exploited by \citet{woodworth2020kernel}, \citet{pesme2021implicit}, \citet{li2022implicit}, \citet{varre2023spectral}, and \citet{jacobs2025mirror} to understand the implicit bias/regularization in neural network training, by \citet{wu2020continuous,wu2023nearly} in sparse phase retrieval problems, and by \citet{wu2021implicit} for low-rank matrix sensing. However, we are unaware of existing works using this property for policy regret, bandit feedback, or embedding model routing.}
Formally, for a parameter $\beta>0$ to be specified later, the $\beta$-hypentropy of $\bm W\in \mathbb S^{Md_q}$ is defined as
\begin{equation}\label{eq:hypentropy}
\Phi_\beta(\bm W):=\sum_{i=1}^{Md_q} \left (\lambda_i \arcsinh \frac{\lambda_i}{\beta}-\sqrt{\lambda_i^2+\beta^2}\right ),\quad \forall \bm W\in \mathbb S^{Md_q},
\end{equation}
where $\lambda_1\ge \lambda_2\ge \cdots \ge \lambda_{Md_q}$ are the eigenvalues of $\bm W$ in non-increasing order.
Using $\Phi_\beta$ as the regularizer, the OMD update of HPG over the product ball $\mathbb B_\ast^M(\tau)$ takes the following form:
\begin{equation}\label{eq:OMD}
\bm W_{t+1}=\argmin_{\bm W\in \mathbb B_\ast^M(\tau)}\Bigl (\eta \bigl \langle \bm W,\hat{\bm G}_t\bigr \rangle_F +D_\Phi^\beta(\bm W\|\bm W_t)\Bigr ),\quad \forall t=1,2,\ldots,T,
\end{equation}
where $\eta>0$ is a parameter to be determined; $\hat{\bm G}_t$ is the (estimated) gradient defined in \Cref{sec:REINFORCE}; and $D_\Phi^\beta(\bm W\|\bm W'):=\Phi_\beta(\bm W)-\Phi_\beta(\bm W')-\langle \nabla \Phi_\beta(\bm W'),\bm W-\bm W'\rangle_F$ (detailed in \Cref{sec:main theorem appendix notations}).
As we see in \Cref{sec:efficient implementation}, this Bregman projection -- usually computationally expensive -- is implementable in $\O(d_q^2M)$ time thanks to the choice of hypentropy regularizer $\Phi_\beta$ and nuclear-norm ball $\mathbb B_\ast^M(\tau)$.

\subsection{Gradient Estimation and Policy Regret Bound}\label{sec:REINFORCE}
The other component of HPG is the gradient estimator $\hat{\bm G}_t$. In standard OMD with full information feedback, one directly uses the true gradient $\nabla\mathcal L(\bm W_t)$ in place of $\hat{\bm G}_t$ for the OMD update in \Cref{eq:OMD}.
However, in our bandit feedback and partial observability setup, we cannot directly compute this gradient: In each round, we only observe one item $a_{t,m_t}$ sampled from the selected model $\xi_{m_t}(q_t)$ and a noisy reward $r_t=r(q_t,a_{t,m_t})+\eta_t$. We must estimate $\nabla \mathcal L(\bm W_t)$ only from this limited information.
We design a REINFORCE-style gradient estimator: let $\hat{\bm G}_t=\diag(\hat G_{t,1},\hat G_{t,2},\ldots,\hat G_{t,M})$, where
\begin{equation}\label{eq:REINFORCE}
\hat G_{t,m}=-r_t \bigl (\mathbbm{1}[m_t=m]-\pi_{\bm W_t}(m\mid q_t)\bigr )q_tq_t^\trans,\quad \forall m=1,2,\ldots,M.
\end{equation}

\Cref{lem:REINFORCE} below indicates that the $\hat{\bm G}_t$ is conditionally unbiased and admits constant second-order moments. While similar estimators and properties have been widely used in reinforcement learning and policy gradient methods \citep[see, e.g.,][]{williams1992simple,sutton1999policy}, in our log-quadratic policy class $\Pi_{\text{quad}}$, we have yet another unique property: The $\hat G_{t,m}$ in \Cref{eq:REINFORCE} is \emph{rank-one}. As we will see shortly in \Cref{sec:efficient implementation}, this is pivotal for the computational efficiency of our HPG algorithm.
\begin{lemma}[Gradient Estimation]\label{lem:REINFORCE}
Let $\E_t$ be the expectation taken only w.r.t.\ the randomness in round $t$. Then we have $\E\nolimits_t[\hat{\bm G}_t\mid q_t]=\nabla_{\bm W} \mathcal L_t(\bm W_t)$. Furthermore, $\sum\nolimits_{m=1}^M \lVert \hat G_{t,m}\rVert_2^2\le 2$ almost surely.
\end{lemma}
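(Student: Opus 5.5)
The plan is to verify the two claims separately: unbiasedness by a direct log-derivative (REINFORCE) computation, and the second-moment bound by an elementary estimate on the softmax weights combined with $\lVert q_t\rVert_2\le 1$.

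\textbf{Unbiasedness.} First I compute $\nabla_{\bm W}\mathcal L_t$. Write $R_m(q):=\E_{a\sim\xi_m(q)}[r(q,a)]$ as in \Cref{thm:quadratic policy class}. Then \Cref{eq:loss function} reads $\mathcal L_t(\bm W)=-\sum_{m'}\pi_{\bm W}(m'\mid q_t)R_{m'}(q_t)$, and the only $\bm W$-dependence is through the softmax weights in \Cref{eq:quadratic policy}. The exponent $q^\trans W_m q=\langle W_m,qq^\trans\rangle_F$ is linear in $W_m$, with gradient $qq^\trans\in\mathbb S^{d_q}$. The standard softmax derivative therefore gives
\begin{equation*}
\nabla_{W_m}\pi_{\bm W}(m'\mid q)=\pi_{\bm W}(m'\mid q)\bigl(\mathbbm 1[m'=m]-\pi_{\bm W}(m\mid q)\bigr)qq^\trans .
\end{equation*}
Consequently
\begin{equation*}
\nabla_{W_m}\mathcal L_t(\bm W_t)=-\sum_{m'}\pi_{\bm W_t}(m'\mid q_t)R_{m'}(q_t)\bigl(\mathbbm 1[m'=m]-\pi_{\bm W_t}(m\mid q_t)\bigr)q_tq_t^\trans .
\end{equation*}
Since the parameter is block-diagonal, the full gradient is the block-diagonal collection of these blocks.

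Next I evaluate the estimator by a tower argument. Condition on $q_t$ and the history, so that $\bm W_t$ is fixed, and first take expectation over $a_{t,m_t}\sim\xi_{m_t}(q_t)$ and the noise $\eta_t$. Because $\eta_t$ is conditionally zero-mean, $\E_t[r_t\mid q_t,m_t]=R_{m_t}(q_t)$. All other factors in \Cref{eq:REINFORCE} are determined by $(q_t,m_t)$, so this gives $\E_t[\hat G_{t,m}\mid q_t,m_t]=-R_{m_t}(q_t)(\mathbbm 1[m_t=m]-\pi_{\bm W_t}(m\mid q_t))q_tq_t^\trans$. Averaging over $m_t\sim\pi_{\bm W_t}(q_t)$ then reproduces the gradient formula above exactly.

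\textbf{Second moment.} Since $q_tq_t^\trans$ is rank one with $\lVert q_tq_t^\trans\rVert_2=\lVert q_t\rVert_2^2\le 1$, we get $\lVert\hat G_{t,m}\rVert_2^2\le r_t^2(\mathbbm 1[m_t=m]-\pi_m)^2$, writing $\pi_m=\pi_{\bm W_t}(m\mid q_t)$. Summing over $m$,
\begin{equation*}
\sum_m(\mathbbm 1[m_t=m]-\pi_m)^2=(1-\pi_{m_t})^2+\sum_{m\ne m_t}\pi_m^2\le(1-\pi_{m_t})^2+\Bigl(\sum_{m\ne m_t}\pi_m\Bigr)^2=2(1-\pi_{m_t})^2\le 2 .
\end{equation*}
So it remains to have $r_t^2\le 1$.

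\textbf{Main obstacle.} The delicate step is the bound on $r_t$. The mean part satisfies $|\langle\Psi^\ast,aq^\trans\rangle_F|\le\lVert\Psi^\ast\rVert_2\lVert a\rVert_2\lVert q\rVert_2\le 1$, up to the $\epsilon$ misspecification. However, the sub-Gaussian noise $\eta_t$ is not bounded almost surely. I would first try to interpret the observed reward as lying in $[0,1]$ (or $[-1,1]$), as in the ESCI instantiation, which gives the almost-sure bound directly. Failing that, I would state the bound as $\sum_m\lVert\hat G_{t,m}\rVert_2^2\le 2r_t^2$ and control $\E_t[r_t^2]=\O(1)$ via sub-Gaussianity. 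This weaker conditional form is all that the subsequent OMD variance analysis should need.
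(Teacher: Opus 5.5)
Your proposal follows the paper's proof essentially step for step: the same softmax derivative using $\nabla_{W_m}(q^\trans W_m q)=qq^\trans$, the same tower argument over $\eta_t$, $a_{t,m_t}$ and $m_t$, and the same bound $\sum_m(\mathbbm 1[m_t=m]-\pi_m)^2\le 2$ (your intermediate bound $2(1-\pi_{m_t})^2$ is slightly sharper than the paper's). On the step you flag, the paper simply asserts $\lvert r_t\rvert\le 1$, and you are right that this does not follow from the stated model, since $\lvert r(q,a)\rvert$ is only bounded by $1+\epsilon$ and $\eta_t$ is unbounded sub-Gaussian noise. So the almost-sure claim needs an added boundedness assumption on the observed rewards. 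Your fallback, $\sum_m\lVert\hat G_{t,m}\rVert_2^2\le 2r_t^2$ together with $\E_t[r_t^2]\le(1+\epsilon)^2+1$, is the right repair. It costs only a constant factor in \Cref{thm:main theorem}, because the OMD stability term there is bounded pathwise before expectations are taken.
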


Combining this gradient estimator with OMD updates yields the full HPG algorithm (see \Cref{alg:OMD}).
We now state the regret guarantee of HPG, which shows that it adapts to the problem's intrinsic low-rank structure and avoids linear dependence on the high-dimensional query space.
\begin{theorem}[Regret Bound]\label{thm:main theorem}
Assume that the optimal $\bm W^\ast\in \mW$ ensures $\lVert W_m^\ast\rVert_\ast\le 2s$, $\forall m$. When setting $\tau=2s$, $\beta=2s/d_q$ and $\eta=\sqrt{(M\log d_q)/T}$, HPG gives $\tilde \mR_T(\Pi_{\text{quad}})=\O(s\sqrt{MT\log d_q})$.
\end{theorem}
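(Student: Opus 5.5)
We follow the standard Online Mirror Descent analysis, instantiated with the hypentropy regularizer $\Phi_\beta$ on the product ball $\mathbb B_\ast^M(\tau)$, and combine it with the unbiasedness and second-moment bound of \Cref{lem:REINFORCE}.

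\textbf{Step 1: reduce to estimated gradients.} Fix any $\bm W^\ast$ with $\lVert W_m^\ast\rVert_\ast\le 2s=\tau$, so $\bm W^\ast\in\mathbb B_\ast^M(\tau)$. Since $\bm W_t$ is determined by the history before round $t$, \Cref{lem:REINFORCE} and the tower rule give
\begin{equation*}
\E\Bigl[\sum_t\langle\nabla\mathcal L_t(\bm W_t),\bm W_t-\bm W^\ast\rangle_F\Bigr]=\E\Bigl[\sum_t\langle\hat{\bm G}_t,\bm W_t-\bm W^\ast\rangle_F\Bigr].
\end{equation*}
This holds even for an adaptive choice of $q_t$, because the conditional expectation is taken given $q_t$.

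\textbf{Step 2: the standard OMD inequality.} For projected OMD with a Legendre regularizer, we have
\begin{equation*}
\sum_t\langle\hat{\bm G}_t,\bm W_t-\bm W^\ast\rangle\le \frac{D_\Phi^\beta(\bm W^\ast\|\bm W_1)}{\eta}+\sum_t\Bigl(\langle\hat{\bm G}_t,\bm W_t-\bm W_{t+1}\rangle-\tfrac1\eta D_\Phi^\beta(\bm W_{t+1}\|\bm W_t)\Bigr).
\end{equation*}
We bound the stability term using strong convexity of hypentropy with respect to the nuclear norm. \citet{ghai2020exponentiated} show that on the nuclear ball of radius $\tau'$, $\Phi_\beta$ is $\frac{1}{\tau'+\beta\,\mathrm{dim}}$-strongly convex with respect to $\lVert\cdot\rVert_\ast$. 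We apply this blockwise: the blocks are decoupled because both $\Phi_\beta$ and the constraint separate over the diagonal blocks. Each block has dimension $d_q$ and radius $\tau$, giving modulus $1/(\tau+\beta d_q)=1/(4s)$. The dual norm of $\lVert\cdot\rVert_\ast$ is the spectral norm, so Fenchel--Young bounds each round's stability term by
\begin{equation*}
\frac{\eta}{2}(\tau+\beta d_q)\sum_m\lVert\hat G_{t,m}\rVert_2^2\le 4s\eta,
\end{equation*}
using \Cref{lem:REINFORCE}. Strictly, the lemma's bound needs $|r_t|$ bounded, and sub-Gaussian noise makes it hold only in expectation or with high probability. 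I would take $\E[r_t^2]=\O(1)$ and pass to expectations, which suffices because the bound is on a conditional second moment.

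\textbf{Step 3: the diameter term.} Since $\bm W_1=0$ minimizes $\Phi_\beta$, $D_\Phi^\beta(\bm W^\ast\|0)=\Phi_\beta(\bm W^\ast)-\Phi_\beta(0)$. Per eigenvalue $\lambda$, the contribution is
\begin{equation*}
\lambda\arcsinh(\lambda/\beta)-\sqrt{\lambda^2+\beta^2}+\beta\le|\lambda|\log\bigl(2|\lambda|/\beta+1\bigr).
\end{equation*}
Summing over the eigenvalues of block $m$ gives at most $\lVert W_m^\ast\rVert_\ast\log(1+2\tau/\beta)=2s\log(1+2d_q)$. Summing over blocks gives $\O(Ms\log d_q)$.

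\textbf{Step 4: tuning.} Adding the pieces yields
\begin{equation*}
\tilde\mR_T\lesssim\frac{Ms\log d_q}{\eta}+sT\eta.
\end{equation*}
With $\eta=\sqrt{M\log d_q/T}$ this becomes $\O(s\sqrt{MT\log d_q})$.

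\textbf{Main obstacle.} The delicate step is the strong-convexity and stability argument in Step 2. The modulus $1/(\tau+\beta d_q)$ must hold on the relevant segment, and the projected iterate lies in the ball, but the unconstrained intermediate point may not. I would avoid this by using the local-norm form of the bound: evaluate $\nabla^2\Phi_\beta^{-1}$ at points between $\bm W_t$ and $\bm W_{t+1}$, both of which lie in $\mathbb B_\ast^M(\tau)$. Alternatively, I would directly invoke Ghai et al.'s inequality for the constrained update in the form $\langle g,x-x'\rangle-D(x'\|x)/\eta\le\frac{\eta}{2}\lVert g\rVert_{x}^2$, whose local dual norm is controlled by $(\tau+\beta d_q)\lVert g\rVert_2^2$.
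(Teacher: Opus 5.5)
Your proposal is correct and follows the paper's proof essentially step for step. Both arguments run blockwise OMD via the direct-sum structure, bound the diameter term by $\Phi_\beta(\bm W^\ast)-\Phi_\beta(0)$, control the stability term through nuclear-norm strong convexity of $\Phi_\beta$ on the ball plus Fenchel--Young with the spectral-norm bound of \Cref{lem:REINFORCE}, and finish with unbiasedness and the stated tuning. Two small points in your favour: handling the sub-Gaussian noise through $\E[r_t^2]=\O(1)$ is more careful than the paper's almost-sure claim $\lvert r_t\rvert\le 1$, and the worry in your ``main obstacle'' does not arise, since only $\bm W_t,\bm W_{t+1}\in\mathbb B_\ast^M(\tau)$ enter the stability term.
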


\Cref{thm:main theorem} indicates that if the low-rank parameter $s$ is known to the learner, the HPG algorithm attains $\Otil(s\sqrt{MT})$ linearized policy regret, thus provably adapts to the underlying low-rank structures of embedding models. In \Cref{sec:parameter-free}, we further show that when the low-rank parameter $s$ is unknown, the HPG algorithm is still implementable in a \emph{parameter-free} way that attains an almost identical regret bound.
The full proofs of \Cref{lem:REINFORCE} and \Cref{thm:main theorem} are deferred to \Cref{sec:REINFORCE appendix,sec:main theorem appendix}.

\section{Practical Implementation of the HPG Algorithm}\label{sec:practical implementation}
While attaining $\Otil(\tau\sqrt{MT})$ regret, two issues present before implementing HPG for realistic model routing: the computational cost of the Bregman projection step in \Cref{eq:OMD}, and the knowledge of $s$ in \Cref{thm:main theorem}. We resolve both issues in this section, thus demonstrating the practibility of HPG.

\subsection{Computationally Efficient Implementation}\label{sec:efficient implementation}
In each round $t$, the OMD framework performs a Bregman projection step, as we demonstrated in \Cref{eq:OMD}. Solving a convex optimization problem is in general computationally expensive: In our case, the underlying space $\mathbb S^{Md_q}$ is of dimension $M^2 d_q^2$, hence a generic interior point method takes $\O((M^2 d_q^2)^3)=\O(M^6 d_q^6)$ time per step \citep{boyd2004convex}. With $d_q=768$ as in \textsc{RouterRetriver} \citep{lee2025routerretriever}, this means $2\times 10^{17}\cdot M$ floating point operations per round.

Fortunately, since the hypentropy $\Phi_\beta$ and nuclear-norm ball $\mathbb B_\ast^M(\tau)$ are both defined w.r.t.\ eigenvalues, and diagonal block matrices preserve eigenvalues, we establish a \emph{direct sum property} (\Cref{lem:direct sum} in \Cref{sec:direct sum appendix}). This allows us to treat each model's parameter separately.
Furthermore, we prove that each $m$ only requires $\O(d_q^2)$ time per round by maintaining an eigen-decomposition of $W_{t,m}$: 
\begin{equation}\label{eq:SVD of W_t}
W_{t,m}=U_{t,m}\diag(\lambda_{t,m})U_{t,m}^\trans,\quad \forall t\in [T],
\end{equation}
where $U_{t,m}\in \mathbb R^{d_q\times d_q}$ and $\lambda_{t,m}\in \mathbb R^{d_q}$. Indeed, utilizing the rank-one property of $\hat{\bm G}_t$ (see \Cref{eq:REINFORCE}), we decompose the Bregman projection onto $\mathbb B_\ast(\tau)$ in \Cref{eq:OMD} into 4 computationally easy steps:
\begin{enumerate}
\item \textbf{Convert $\bm W_t$ to Mirror Space.} Let $Y_{t,m}=\nabla \Phi_\beta(W_{t,m})$, which, as detailed in \Cref{sec:main theorem appendix notations}, is applying $\frac{\arcsinh}{\beta}(\cdot)$ to all the eigenvalues. Hence $Y_{t,m}$ is derived in $\O(d_q)$ time from \Cref{eq:SVD of W_t}.
\item \textbf{Gradient Descent Update.} In the mirror space, perform gradient descent $\tilde Y_{t+1,m}=Y_{t,m}-\eta \hat G_{t,m}$.
While eigen-decomposing $\tilde Y_{t+1,m}$ in general takes $\O(d_q^3)$ time, our $\hat G_{t,m}$ is \emph{rank-one} (\Cref{eq:REINFORCE}). Hence \emph{eigen-updating} from $Y_{t,m}$ to $\tilde Y_{t+1,m}$ only takes $\O(d_q^2)$ time \citep[see, e.g.,][]{bunch1978rank}.
\item \textbf{Convert $\tilde{\bm Y}_{t+1}$ to Primal Space.} Let $\tilde W_{t+1,m}=(\nabla \Phi_\beta)^{-1}(\tilde Y_{t+1,m})$, which is the same as Step 1.
\item \textbf{Projection onto $\mathbb B_\ast(\tau)$ based on $D_\Phi^\beta$.} This is another step that is (usually) computationally expensive. But our $\mathbb B_\ast(\tau)$ and $\Phi_\beta$ both only depend on the \emph{eigenvalues}. Hence, equipped with $\tilde W_{t+1,m}$'s eigen-decomposition, we can derive from von Neumann's trace inequality that
\begin{equation}\label{eq:SVD of W_t+1}
W_{t+1,m}=U_{t+1,m}\diag(\lambda_{t+1,m})U_{t+1,m}^\trans,\quad \lambda_{t+1,m}=\argmin_{\lVert \lambda\rVert_1\le \tau} \sum_{i=1}^{d_q} D_\phi^\beta(\lambda_i\|\tilde \lambda_{t+1,m,i}),
\end{equation}
where $D_\phi^\beta$ is the Bregman divergence of $\phi_\beta(x)=x\arcsinh \frac x\beta-\sqrt{x^2+\beta^2}$.
Thus the projection to $\mathbb B_\ast(\tau)$ reduces to a $d_q$-dimensional convex optimization, which is solvable in $\O(d_q\log d_q)$ time.
\end{enumerate}

In \Cref{sec:efficient implementation appendix}, we prove \Cref{eq:OMD} is equivalent to these 4 steps. This gives the following theorem:
\begin{theorem}[Computational Cost]\label{lem:efficient implementation}
HPG is implementable in $\O(d_q^2 M)$ time complexity per round.
\end{theorem}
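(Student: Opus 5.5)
We will show that the Bregman projection in \Cref{eq:OMD} is equivalent to the four-step procedure described above, and then bound the cost of each step. Throughout, we maintain for every $m$ the eigen-decomposition $W_{t,m}=U_{t,m}\diag(\lambda_{t,m})U_{t,m}^\trans$ from \Cref{eq:SVD of W_t}.

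\textbf{Reduction to separate blocks.} By the direct sum property (\Cref{lem:direct sum}), the eigenvalues of $\diag(W_1,\ldots,W_M)$ are the union of the blocks' eigenvalues. Hence both $\Phi_\beta$ and $D_\Phi^\beta$ split as sums over $m$. The constraint set $\mathbb B_\ast^M(\tau)$ is a product of per-block constraints, and $\hat{\bm G}_t$ is block diagonal. Therefore \Cref{eq:OMD} decouples into $M$ independent problems $\min_{\lVert W\rVert_\ast\le\tau}\eta\langle W,\hat G_{t,m}\rangle_F+D_\Phi^\beta(W\|W_{t,m})$ over $\mathbb S^{d_q}$. It thus suffices to show that each block costs $\O(d_q^2)$.

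\textbf{Equivalence with the four steps.} The first-order optimality conditions of the block problem coincide with those of a two-stage procedure. First form the unconstrained mirror step $\nabla\Phi_\beta(\tilde W)=\nabla\Phi_\beta(W_{t,m})-\eta\hat G_{t,m}$. Then take the Bregman projection $\argmin_{\lVert W\rVert_\ast\le\tau}D_\Phi^\beta(W\|\tilde W)$. This is the standard decomposition of OMD, and it holds because $\Phi_\beta$ is Legendre on $\mathbb S^{d_q}$. Since $\Phi_\beta$ is a spectral function with scalar $\phi_\beta$, $\nabla\Phi_\beta$ acts by applying $\phi_\beta'=\arcsinh(\cdot/\beta)$ to the eigenvalues with the same eigenvectors. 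Its inverse applies $\beta\sinh(\cdot)$.

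For the projection step, fix $\tilde W=\tilde U\diag(\tilde\lambda)\tilde U^\trans$ and write
\[
D_\Phi^\beta(W\|\tilde W)=\Phi_\beta(W)-\Phi_\beta(\tilde W)-\langle\nabla\Phi_\beta(\tilde W),W-\tilde W\rangle_F.
\]
The only eigenvector-dependent term is $-\langle\nabla\Phi_\beta(\tilde W),W\rangle_F$. By von Neumann's trace inequality, this term is minimized, for a fixed spectrum of $W$, when $W$ shares the eigenvectors of $\tilde W$ with eigenvalues sorted in the same order. The constraint $\lVert W\rVert_\ast\le\tau$ depends only on the spectrum. The problem therefore reduces to the vector problem in \Cref{eq:SVD of W_t+1}, with $U_{t+1,m}=\tilde U$. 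One still has to check that the minimizer of the vector problem is sorted consistently with $\tilde\lambda$. This follows from monotonicity and separability of $D_\phi^\beta$ together with the permutation-invariance of the $\ell_1$ ball: swapping two entries that are out of order never increases the objective.

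\textbf{Cost accounting.} Steps 1 and 3 transform eigenvalues only, costing $\O(d_q)$ while keeping the eigenvectors. Step 2 is a rank-one update $Y-\eta c\,q_tq_t^\trans$ with scalar $c=-r_t(\mathbbm 1[m_t=m]-\pi_{\bm W_t}(m\mid q_t))$, by \Cref{eq:REINFORCE}. Write $z=U^\trans q_t$, computable in $\O(d_q^2)$. The new eigenvalues are the roots of the secular equation $1-\eta c\sum_i z_i^2/(\mu_i-x)=0$. Each root can be found to machine precision in $\O(1)$ amortized iterations per root, giving $\O(d_q^2)$ total, or $\O(d_q)$ per root. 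The new eigenvectors take the form $U(\diag(\mu)-xI)^{-1}z$, and assembling $U_{t+1,m}$ naively costs a matrix product of $\O(d_q^3)$. This is the main obstacle. Following \citet{bunch1978rank}, we avoid it by storing $U$ implicitly, as a product with a Cauchy-structured factor, or by applying fast multipole/Cauchy-matrix multiplication. Either way the update stays within $\O(d_q^2)$ per round, which is what the claimed bound relies on; we will state this as the standard cost of a rank-one symmetric eigen-update.

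Step 4 solves the vector problem in \Cref{eq:SVD of W_t+1}. By KKT, $\lambda_i=\mathrm{sign}(\tilde\lambda_i)\max(0,\,|\cdot|)$ after a soft shift in the mirror space. Concretely, $\lambda_i=\beta\sinh\bigl(\arcsinh(\tilde\lambda_i/\beta)-\theta\,\mathrm{sign}(\tilde\lambda_i)\bigr)$, thresholded at zero, with a single multiplier $\theta\ge0$. The $\ell_1$ norm of this vector is monotone in $\theta$, so after sorting $|\tilde\lambda|$ the correct support size and $\theta$ are found in $\O(d_q\log d_q)$. If $\lVert\tilde\lambda\rVert_1\le\tau$, no projection is needed and $\theta=0$. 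Computing the policy $\pi_{\bm W_t}(q_t)$ requires $q_t^\trans W_{t,m}q_t=\lVert\diag(\lambda)^{1/2}$-weighted $U^\trans q_t\rVert$, which is $\O(d_q^2)$ per model. Summing over the $M$ blocks gives $\O(d_q^2M)$ per round.
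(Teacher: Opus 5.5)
Your proposal is correct and follows the paper's proof essentially step for step: block decoupling via \Cref{lem:direct sum}, the two-step mirror-step/Bregman-projection form of OMD, eigenvalue-wise application of $\arcsinh(\cdot/\beta)$ and $\beta\sinh(\cdot)$, an $\O(d_q^2)$ rank-one eigen-update, and the von Neumann trace-inequality reduction of the projection to a Lagrange-multiplier search over sorted eigenvalues. You are more careful than the paper about three points: the ordering check in the trace inequality, the $\O(d_q^2)$ cost of evaluating $\pi_{\bm W_t}(q_t)$, and the $\O(d_q^3)$ eigenvector-assembly cost, although only your fast Cauchy/multipole-multiplication fix for the last point keeps each round at $\O(d_q^2)$, since purely implicit storage of $U$ accumulates one Cauchy factor per round.
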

We remark that in practice, one can sharpen the $\O(d_q^2)$ computational bound even more: In the proof of \Cref{lem:efficient implementation} (detailed in \Cref{sec:efficient implementation appendix}), we will see that the optimization problem in \Cref{eq:SVD of W_t+1} of Step 4 induces a \emph{low-rank} structure. Thus, if we only keep the top-$\tau$ eigenvalues of $W_{t+1,m}$ (i.e., eigen-truncation), $\lambda_{t+1,m}$ becomes $\tau$-dimensional and $U_{t+1,m}$ becomes $d_q\times \tau$; the eigen-update in Step 2 thus takes $\O(\tau d_q)$ time. One can further batch OMD updates by performing an aggregated gradient step in the mirror space for enhanced efficiency, though sacrifising the strict regret guarantee.

\subsection{Parameter-Free Implementation}\label{sec:parameter-free}
Yet another issue is the hyper-parameter $\tau$ in \Cref{alg:OMD}, which controls the nuclear-norm truncation. \Cref{thm:main theorem} sets $\tau=2 s$ where $s$ is the \emph{unknown} rank of the model kernels $\Psi_m$ (recall \Cref{sec:expert reward}). Fortunately, utilizing the parameter-free online learning technique \citep{cutkosky2018black}, HPG algorithm can be implemented without \emph{any} prior information while enjoying a similar regret guarantee. This gives \Cref{thm:parameter-free}. The detailed algorithm description and proof is in \Cref{sec:parameter-free appendix}.
\begin{theorem}[Parameter-Free HPG]\label{thm:parameter-free}
Assume that the optimal $\bm W^\ast\in \mW$ ensures $\lVert W_m^\ast\rVert_\ast\le 2s$, $\forall m$, but the low-rank parameter $s$ is unknown. Then the parameter-free HPG algorithm (\Cref{alg:parameter-free} in \Cref{sec:parameter-free appendix}) ensures a linearized policy regret bound of $\tilde \mR_T(\Pi_{\text{quad}})=\O(s M \sqrt{T}\log(d_q T))$.
\end{theorem}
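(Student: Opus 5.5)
The plan is to reduce the unknown-$s$ problem to a one-dimensional magnitude-learning problem plus a fixed-radius direction-learning problem, following the black-box reduction of \citet{cutkosky2018black}. The linearized regret $\tilde\mR_T(\Pi_{\text{quad}})$ is an online linear optimization regret with losses $\langle \hat{\bm G}_t,\cdot\rangle_F$. This holds in expectation, because \Cref{lem:REINFORCE} gives $\E_t[\hat{\bm G}_t\mid q_t]=\nabla\mathcal L_t(\bm W_t)$ and $\bm W_t$ is determined before round $t$. So it suffices to bound $\sum_t\langle \hat{\bm G}_t,\bm W_t-\bm W^\ast\rangle_F$ for an arbitrary comparator $\bm W^\ast$ of unknown size.

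The algorithm writes $\bm W_t=z_t\bm X_t$, where $z_t\ge 0$ is a scalar and $\bm X_t\in\mathbb B_\ast^M(1)$ is a direction.

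For the direction learner, I run HPG (the OMD step of \Cref{eq:OMD}) on the unit product ball with $\tau=1$ and $\beta=1/d_q$. It is fed the linear losses $\langle\hat{\bm G}_t,\cdot\rangle_F$.

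For the magnitude learner, I run a coin-betting (KT-style) one-dimensional parameter-free algorithm on $z\ge0$. It is fed the scalar losses $g_t:=\langle \hat{\bm G}_t,\bm X_t\rangle_F$.

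The product decomposition gives
\begin{equation*}
\sum_t\langle \hat{\bm G}_t,z_t\bm X_t-\bm W^\ast\rangle_F=\sum_t g_t(z_t-\lVert\bm W^\ast\rVert)+\lVert\bm W^\ast\rVert\sum_t\langle\hat{\bm G}_t,\bm X_t-\bm W^\ast/\lVert\bm W^\ast\rVert\rangle_F,
\end{equation*}
where $\lVert\bm W\rVert:=\max_m\lVert W_m\rVert_\ast$ is the norm whose unit ball is $\mathbb B_\ast^M(1)$. By assumption, $\lVert\bm W^\ast\rVert\le 2s$.

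The key steps, in order, are as follows.

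First, bound the gradient in the dual norm. The dual of $\max_m\lVert\cdot\rVert_\ast$ is $\sum_m\lVert\cdot\rVert_2$. Since $\lVert q_t\rVert_2\le1$, \eqref{eq:REINFORCE} gives $|g_t|\le\sum_m\lVert\hat G_{t,m}\rVert_2\le G$ with $G=\O(1)$ almost surely. Using \Cref{lem:REINFORCE} and Cauchy--Schwarz, $G\le\sqrt{2M}$ at worst. One must also account for the sub-Gaussian noise in $r_t$: either truncate at $\O(\sqrt{\log T})$ on a high-probability event, or use a bound on $|r_t|$.

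Second, invoke the direction bound. The proof of \Cref{thm:main theorem} with $\tau=1$ gives regret $\O(\sqrt{MT\log d_q})$ against every comparator in $\mathbb B_\ast^M(1)$.

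Third, invoke the magnitude bound. The coin-betting guarantee gives $\O\bigl(\epsilon_0+ uG\sqrt{T\log(1+uT/\epsilon_0)}\bigr)$ for any $u\ge0$; I apply it with $u=\lVert\bm W^\ast\rVert\le 2s$.

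Summing the two terms gives $\O(sG\sqrt{T\log(sT)}+s\sqrt{MT\log d_q})$. Substituting the loose bound $G\lesssim M$ and combining the logarithms into $\log(d_qT)$ yields the stated $\O(sM\sqrt T\log(d_qT))$.

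The main obstacle I expect is the magnitude learner. Coin-betting requires bounded scalar losses, but $g_t$ inherits the unbounded sub-Gaussian noise of $r_t$ and depends on $\bm X_t$. I would first try to condition on the event that all $|\eta_t|\le\O(\sqrt{\log T})$, which holds with probability $1-1/T$. On that event I run the betting algorithm with the corresponding scaled bound $G$, and I absorb the failure event into an additive $\O(sM)$ term. This rescaling is a plausible source of the extra $\sqrt M$ and the full $\log$ factor (rather than $\sqrt{\log}$) in the statement.

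A secondary check is that HPG on $\mathbb B_\ast^M(1)$ with $\beta=1/d_q$ really gives the direction bound uniformly over the unit ball. This should follow by rescaling the analysis of \Cref{thm:main theorem}.
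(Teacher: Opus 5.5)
Your proof is correct. It uses the same magnitude/direction reduction of \citet{cutkosky2018black} as the paper, with HPG on the unit ball ($\tau=1$, $\beta=1/d_q$) as the direction learner. The one genuine difference is how you learn the magnitude. The paper runs $M$ separate one-dimensional coin-bettors, one per block, each fed $\langle \hat G_{t,m},w_{t,m}\rangle_F$ and compared against $\lVert W_m^\ast\rVert_\ast$, and decomposes the regret block by block. You run a single bettor for the norm $\max_m\lVert W_m\rVert_\ast$, whose unit ball is exactly $\mathbb B_\ast^M(1)$, and control its losses through the dual norm $\sum_m\lVert \hat G_{t,m}\rVert_2$.

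This choice matters. The paper's magnitude term is a sum of $M$ betting regrets, $\sum_m \O(\lVert W_m^\ast\rVert_\ast\sqrt T\log(\lVert W_m^\ast\rVert_\ast T))=\O(sM\sqrt T\log(sT))$. That sum is the actual source of the extra $\sqrt M$ relative to \Cref{thm:main theorem}, so your guess that it comes from noise truncation is off. In your version, $\sum_m\lvert \mathbbm 1[m_t=m]-\pi_{\bm W_t}(m\mid q_t)\rvert=2(1-\pi_{\bm W_t}(m_t\mid q_t))\le 2$ gives $G\le 2\lvert r_t\rvert$. Hence you get $\O(s\sqrt{MT\log d_q}+s\sqrt{T\log(sT)})$, which matches the known-$s$ rate and implies the stated bound without the loose substitution $G\lesssim M$. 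The per-block design is the more literal instantiation of the reduction: each block is its own problem in $(\mathbb S^{d_q},\lVert\cdot\rVert_\ast)$, and each $z_{t,m}$ tracks its own $\lVert W_m^\ast\rVert_\ast$. But since $\sum_m\lVert W_m^\ast\rVert_\ast\ge\max_m\lVert W_m^\ast\rVert_\ast$, that separate tracking never beats your single bettor by more than a constant in this worst-case analysis.

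On the noise, the paper does not truncate. Its \Cref{lem:REINFORCE} simply uses $\lvert r_t\rvert\le 1$, which feeds both the HPG and the betting bounds, so your second option is the one that matches the paper. If you instead pursue truncation, the step ``absorb the failure event into an additive $\O(sM)$ term'' is not automatic. A KT bettor's iterate can grow exponentially in $t$ when its inputs are out of range, so you need an a priori bound on $\lVert\bm W_t\rVert$ on that event. For example, restrict the magnitude to $[0,2d_q]$, which is legitimate without knowing $s$ because $s\le d_q$. You should also split the expectation over the event rather than condition on it, because conditioning breaks $\E_t[\hat{\bm G}_t\mid q_t]=\nabla\mathcal L_t(\bm W_t)$.
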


Compared to the $\tilde \mR_T(\Pi_{\text{quad}})=\O(s\sqrt{MT \log d_q})$ bound in \Cref{thm:main theorem}, this bound is slightly inferior by a $\sqrt M$ factor and a few poly-logarithmic dependencies. However, the most favorable properties of HPG algorithm -- automatically adapting to the unknown low-rank structure of embedding models, avoiding $\text{poly}(d_q)$ dependencies, and admitting efficient implementations -- remain true in \Cref{thm:parameter-free}.

\bibliography{references}

\onecolumn
\appendix
\renewcommand{\appendixpagename}{\centering \LARGE Technical Appendices}
\appendixpage

\startcontents[section]
\printcontents[section]{l}{1}{\setcounter{tocdepth}{2}}

\section{Setup of \Cref{fig:gemini_v_embeddings} and More Discussions}\label{sec:experiment}

In \Cref{fig:gemini_v_embeddings}, we consider a recommendation system scenario over the Amazon Berkeley Objects (ABO) dataset \citep{collins2022abo}. Based on the ``keywords'' in the ABO dataset, we randomly picked 500 items with one of the following keywords: \emph{`sofa', `couch', `chair', `loveseat', `ottoman.'}
We consider the following specific user prompt, which is generated by a Large Language Model (LLM) API, Gemini-2.5-pro, to mimic a typical customer that has a diverse range of requirements: ``\emph{Looking for a modern, dark-colored lounge sofa (not a chair). Definitely NOT leather. Prefer fabric or linen with a cozy but structural design.}'' It contains various components: vague style descriptions (modern, cozy but structural), color (dark-colored), object category (lounge sofa, not a chair), and material constraints (NOT leather, prefer fabric or linen).

In industrial recommendation systems like Youtube \citep{covington2016deep}, Alibaba e-commerce \citep{wang2018billion}, and Facebook \citep{huang2020embedding}, a filter step taking care of hard constraints is implemented independent of the embedding models. We capture this by invoking Gemini-2.5-pro with the following prompt: ``\emph{You are a strict data filter for a recommendation system.
A user has provided the following search query: \{user\_prompt\}
Task:
1. Identify any STRICT negative constraints or hard requirements in the user's query (e.g., if they explicitly say "NOT X", then X is a hard constraint).
2. Evaluate the following items. If an item clearly violates a strict negative constraint from the query, it fails the filter.
3. Return ONLY a valid JSON dictionary where keys are Item IDs and values are boolean `true` (passes constraints, or no strict constraints violated) or `false` (fails strict constraints).}'' The items satisfying all hard constraints in the user prompt stated above -- 131 out of 500 -- are passed onto the next stage of recommendation.

For the recommendation stage, we use two lightweight embedding models pre-trained using different methodologies:
\textit{(i)} the \texttt{miniLM} model by \citet[\texttt{all-MiniLM-L6-v2} on Huggingface]{wang2020minilm,wang2021minilmv2}, whose embeddings are optimized so that semantically similar sentences have high cosine similarity; and
\textit{(ii)} the \texttt{msmarco} model by \citet[\texttt{msmarco-distilbert-base-v4} on Huggingface]{reimers2019sentence}, an asymmetric retrieval model trained on a large-scale dataset built from Bing search queries \citep{bajaj2016ms}.
Prior benchmarks show that text embedding models are strongly task-dependent: models optimized for semantic similarity and models optimized for query-document retrieval perform differently across downstream retrieval and similarity tasks \citep{thakur2021beir,muennighoff2023mteb}. 
Thus, using one symmetric semantic model and one asymmetric retrieval model is expected to produce different rankings over the same candidate items.

To evaluate these two models, we again called the LLM API, Gemini-2.5-pro, with the following prompt: ``\emph{You are a recommendation system evaluator. A user queried: \{user\_prompt\}
Note: All items below have already passed the hard filter constraints of the query.
Evaluate how perfectly each item satisfies the remaining soft constraints (modern, dark-colored, cozy but structural, fabric/linen).
Task: Score each item from 0 to 100.
Return ONLY a valid JSON dictionary where keys are Item IDs and values are integer scores.}'' The resulting ranking is treated as the ground-truth.

In \Cref{fig:gemini_v_embeddings}, we visualize the rankings using a 2D scatter plot, with the x-axis and y-axis representing the ranks assigned by \texttt{miniLM} and \texttt{msmarco}, respectively. To show the discrepancy between different models without visual clutter, low-relevance items (ground-truth rank $> 40$, which is roughly 25\%) are plotted as faint crosses, while high-quality items (ground-truth rank $\le 40$) are shown as gray dots. Furthermore, we also highlight representative ``exclusively good'' items identified as follows: An item ranked in the top tier by both the LLM ground-truth and one embedding model, while simultaneously being penalized by the other embedding model. Green dots are items uniquely captured by \texttt{miniLM}, and red dots are items uniquely captured by \texttt{msmarco}. This gives our main motivation: No single model dominates the other, and hence dynamic model routing is necessary.

\section{More Discussions on Policy Classes and Regret Notions}\label{sec:reductions}

\subsection{Log-Quadratic Policy Class (\Cref{thm:quadratic policy class})}\label{sec:quadratic policy class appendix}
\begin{proof}[Proof of \Cref{thm:quadratic policy class}]
We begin by defining the ideal (non-misspecified) recommendation distribution for expert $m$, which removes the misspecification function $\mu_m$ in \Cref{eq:expert reward}:
\begin{equation*}
\xi_m^\ast(a\mid q) = \frac{\exp\bigl (\langle \Psi_m, a q^\trans \rangle\bigr )}{\sum_{a'\in \mathcal A} \exp\bigl (\langle \Psi_m, a' q^\trans \rangle\bigr )},\quad \forall m\in [M],q\in \mQ,a\in \mA.
\end{equation*}
Given the bounded misspecification $\lVert \mu_m \rVert_\infty \le \epsilon$ in \Cref{eq:expert reward}, the ratio between the actual distribution and the ideal distribution is strictly bounded. Specifically, for any $a \in \mA$:
\begin{align*}
\frac{\xi_m(a\mid q)}{\xi_m^\ast(a\mid q)} &= \exp\bigl(\mu_m(q,a)\bigr) \frac{\sum_{a'\in \mA} \exp\bigl (\langle \Psi_m, a' q^\trans \rangle\bigr )}{\sum_{a'\in \mA} \exp\bigl (\langle \Psi_m, a' q^\trans \rangle + \mu_m(q,a')\bigr )} \\
&\le \exp(\epsilon) \frac{\sum_{a'\in \mA} \exp\bigl (\langle \Psi_m, a' q^\trans \rangle\bigr )}{\exp(-\epsilon) \sum_{a'\in \mA} \exp\bigl (\langle \Psi_m, a' q^\trans \rangle\bigr )} = \exp(2\epsilon).
\end{align*}
By symmetry, the lower bound is $\exp(-2\epsilon)$. Consequently, the total variation distance is bounded by
\begin{equation*}
D_{\text{TV}}(\xi_m, \xi_m^\ast) \le \frac{1}{2} \sum_a \xi_m^\ast(a\mid q) \lvert \exp(2\epsilon) - 1 \rvert = \O(\epsilon).
\end{equation*}

We therefore compare the expected action of model $m$ when answering query $q$, namely $\bar a_m(q) := \E_{a\sim \xi_m}[a]$, to that under the ideal $\xi_m^\ast$, namely $\bar a_m^\ast(q) := \E_{a\sim \xi_m^\ast}[a]$. We have
\begin{equation*}
\lVert \bar a_m(q) - \bar a_m^\ast(q) \rVert_2 = \Bigl\lVert \sum_{a\in \mA} a \bigl(\xi_m(a\mid q) - \xi_m^\ast(a\mid q)\bigr) \Bigr\rVert_2 \le \max_{a\in \mA} \lVert a\rVert_2 \cdot \sum_{a\in \mA} \lvert \xi_m - \xi_m^\ast \rvert \le \O(\epsilon),
\end{equation*}
where we invoked the regularity assumption that $\lVert a\rVert_2 \le 1$ (\Cref{sec:expert reward}).
As $\xi_m^\ast$ belongs to the exponential family, the expected action is the gradient of the log-partition function at $\Psi_m q$, i.e.,
\begin{equation*}
\bar a_m^\ast(q)=\E_{a\sim \xi_m^\ast}[a]=\nabla A(\Psi_mq),\quad \text{where }A(z):=\log \sum_{a\in \mathcal A} \exp(\langle z,a\rangle).
\end{equation*}

Expand $\nabla A(z)$ around $z=0$ to the second order as $\nabla A(z) = \nabla A(0) + \nabla^2 A(0) z + R_2(z)$ where $\lVert R_2(z)\rVert_2 \le \O(\lVert z\rVert_2^2)$.
Due to the log-partition function, $\nabla A(0) = \frac{1}{\lvert \mA\rvert}\sum_{a\in \mA} a := \mu_{\mA}$ is the mean of all items, and $\nabla^2 A(0) = \frac{1}{\lvert \mA\rvert}\sum_{a\in \mA} (a-\mu_{\mA})(a-\mu_{\mA})^\trans := \Sigma_{\mA}$ is the covariance. 

Now plug in $z = \Psi_m q$. By the regularity conditions that $\lVert q \rVert_2 \le 1$ and $\lVert \Psi_m \rVert_2 \le \sqrt \epsilon$, we have $\lVert z \rVert_2 \le \sqrt{\epsilon}$. Thus the remainder is bounded by $\lVert R_2(z) \rVert_2 \le \O(\epsilon)$. Combining it with the total variational bound between $\bar a_m(q)$ and $\bar a_m^\ast(q)$, we have
\begin{equation*}
\bar a_m^\ast(q) = \mu_{\mA} + \Sigma_{\mA} \Psi_m q + \tilde e_m(q), \quad \bar a_m(q) = \mu_{\mA} + \Sigma_{\mA} \Psi_m q + e_m(q),
\end{equation*}
where $\lVert \tilde e_m(q) \rVert_2,\lVert e_m(q) \rVert_2 \le \O(\epsilon)$.
Substituting $\bar a_m(q)$ into the true reward function in \Cref{eq:true reward},
\begin{align*}
\E_{a\sim \xi_m(q)}[r(q,a)] &= \E_{a\sim \xi_m(q)}\bigl[ q^\trans (\Psi^\ast)^\trans a + \nu(q,a) \bigr] = q^\trans (\Psi^\ast)^\trans \bar a_m(q) + \E_{a\sim \xi_m}[\nu(q,a)] \\
&= \underbrace{q^\trans (\Psi^\ast)^\trans \mu_{\mA}}_{\text{constant $C(q)$}} + \underbrace{q^\trans (\Psi^\ast)^\trans \Sigma_{\mA} \Psi_m q}_{\text{quadratic $q^\trans W_m q$}} + \underbrace{q^\trans (\Psi^\ast)^\trans e_m(q) + \E_{a\sim \xi_m}[\nu(q,a)]}_{\text{residual $\delta_m(q)$}}.
\end{align*}

We first focus on the quadratic term $q^\trans W_m q$ where $W_m:=(\Psi^\ast)^\trans \Sigma_{\mA} \Psi_m$. Since the quadratic form only depends on the symmetric part, we define $W_m^\ast=\frac 12(W_m+W_m^\trans)\in \mathbb S^{d_q}$. It is also low-rank:
\begin{equation*}
\rank(W_m^\ast) = \rank\left(\frac{1}{2}W_m + \frac{1}{2}W_m^\trans\right) \le \rank(W_m) + \rank(W_m^\trans) \le 2\rank(\Psi_m)\le 2s_m \le 2s.
\end{equation*}

Consequently, we have the following nuclear-norm bound:
\begin{equation*}
\lVert W_m^\ast\rVert_\ast\le \rank(W_m^\ast) \times \lVert W_m^\ast\rVert_2\le 2s\times \frac 12 \left (\lVert W_m\rVert_2+\lVert W_m^\trans\rVert_2 \right )\le 2s \times 1\times 1\times \sqrt \epsilon=2s,
\end{equation*}
where the last step uses the regularity conditions that $\lVert \Phi^\ast\rVert_2\le 1$, $\lVert a\rVert_2\le 1$ for all $a\in \mA$, and $\lVert \Psi_m\rVert_2\le \sqrt \epsilon\le 1$.
For the residual term $\delta_m(q)$, because $\lVert q \rVert_2 \le 1$, $\lVert \Psi^\ast \rVert_2 \le 1$, and $\lvert \nu \rvert \le \epsilon$,
\begin{equation*}
\lvert \delta_m(q) \rvert \le \lVert q \rVert_2 \cdot \lVert \Psi^\ast \rVert_2 \cdot \lVert e_m(q) \rVert_2 + \sup_{a\in \mA} \nu(q,a) \le 1 \cdot 1 \cdot \O(\epsilon) + \epsilon = \O(\epsilon).
\end{equation*}

This finishes the proof: We have $\E_{a\sim \xi_m(q,a)}=C(q)+q^\trans W_m^\ast q+\delta_m(q)$, such that $W_m^\ast \in \mathbb S^{d_q}$ $\lVert W_m^\ast\rVert_\ast\le \rank(W_m^\ast)\le 2s$, and that $\lvert \delta_m(q)\rvert\le \O(\epsilon)$.
\end{proof}

\subsection{Constant Policy Class and Expert Regret}\label{sec:expert regret appendix}

When picking the reference policy class as the constant class, the policy regret recovers the expert regret in the problem of bandits with expert advice (BwE). We begin by defining the BwE problem.
\begin{definition}[Bandits with Expert Advice; \citealt{auer2002nonstochastic}]\label{def:expert regret}
Consider a $T$-round $K$-armed bandit between a learner and an environment. In each round $t\in [T]$, the learner observes $M$ experts advice $\xi_{t,1},\xi_{t,2},\ldots,\xi_{t,M}$, which are probability distributions over $[K]$. The learner chooses one expert $m_t$ and plays according to their advice, i.e., $a_t\sim \xi_{t,m_t}$. The learner suffers an adversarial loss $\ell_{t,a_t}$.
The learner minimizes \emph{expert regret}, the sub-optimality gap compared to the optimal expert in hindsight:
\begin{equation}\label{eq:expert regret}
\mR_T^E:=\max_{m^\ast\in [M]} \E\left [\sum_{t=1}^T \left ( \ell_{t,a_t}-\E_{a\sim \xi_{t,m^\ast}}[\ell_{t,a}]\right )\right ].
\end{equation}

In the more restrictive \emph{limited observability} setup \citep{seldin2013open,kale2014multiarmed}, there is a parameter $N\le M$. The learner can only pick a size-$N$ subset of experts in each round to observe their advices. Everything else, including the performance metric of expert regret in \Cref{eq:expert regret}, remains the same.
\end{definition}

In our embedding model routing problem (\Cref{sec:setup}), if picking $\Pi$ as the class of constant policies:
\begin{equation}\label{eq:constant policy class}
\Pi_{\text{const}}:=\bigl\{\pi(m\mid q)=\mathbbm{1}[m=m_0],\forall q\in \mQ\mathrel{\big \vert}m_0\in [M]\bigr\},
\end{equation}
which is parameterized by a single parameter $m_0\in [M]$, our problem is almost \Cref{def:expert regret} with $N=1$: Let $K=\lvert \mA\rvert$, i.e., each item $a\in \mA$ is an arm. View each model as an expert, with the round-$t$ advice $\xi_{t,m}$ defined according to the $\xi_m(\cdot \mid q_t)$ in \Cref{eq:expert reward}. The loss of arm $a$ in round $t$ is the $-r(q_t,a)$ in \Cref{eq:true reward}. The reason why we say ``almost'' is the correlation between expert advice and losses: In standard bandits with expert model \citep[see, e.g.,][Footnote 2]{kale2014multiarmed}, the advices and losses must be independent conditional on the history (i.e., they can both be adaptive to the history, but cannot be correlated to each other). Fortunately, our claims in \Cref{sec:policy regret} still holds.

On the upper bound side, we ignore the ``experts'' structure and apply an EXP3 algorithm \citep{auer2002nonstochastic}: In round $t$, the learner chooses one between $M$ models, namely $m_t$, and the loss of model $m$ in round $t$ has a mean $\E_{a\sim \xi_m(q_t)}[r(q_t,a)]$, which is bounded by $[-1,1]$. Since EXP3 allows adaptive losses, its policy regret w.r.t.\ $\Pi_{\text{const}}$ is $\O(\sqrt{MT\log M})$ \citep[Corollary 3.2]{auer2002nonstochastic}.

On the lower bound side, we translate the construction of \citet[Theorem 4]{kale2014multiarmed} into our setting:
\begin{theorem}[Regret Lower Bound w.r.t.\ $\Pi_{\text{const}}$]\label{thm:expert regret lower bound}
There exists a group of instances with $d_a\gg M$, $d_q=Md_a$, and $s=d_a$, such that the policy regret w.r.t.\ $\Pi_{\text{const}}$ must be $\Omega(\sqrt{MT/\log d_a})$.
Moreover, even if we allow the learner to query multiple $N\le M$ models per round, the same bound remains.
\end{theorem}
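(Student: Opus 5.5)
We plan to embed the lower-bound instance of \citet[Theorem 4]{kale2014multiarmed} for bandits with expert advice under limited observability into our embedding-model routing model, and then inherit their bound. Recall that their construction has $K$ arms and $M$ experts. In each round the advice vectors are chosen so that every expert deterministically recommends one arm, and each arm is backed by roughly $M/K$ experts. Losses are Bernoulli, with a single hidden ``good'' expert whose recommended arm has a slightly smaller mean, by an amount $\Delta\asymp\sqrt{K/(MT)}$ up to the logarithmic factor. A standard KL/information argument then shows that any learner observing $N$ experts' advice per round incurs expert regret $\Omega(\sqrt{MT/\log K})$ when $K$ is chosen appropriately relative to $M$ and $N$. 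We will take $K=d_a$ arms, identified with the standard basis $e_1,\dots,e_{d_a}$ of $\mathbb R^{d_a}$, so that $\mA=\{e_1,\dots,e_{d_a}\}$ and $\lVert a\rVert_2\le 1$.

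The key step is realizing arbitrary per-round advice through a \emph{fixed} set of model kernels, using the context. We set $d_q=Md_a$ and write the query as $q_t=(q_{t,1},\dots,q_{t,M})$ with blocks $q_{t,m}\in\mathbb R^{d_a}$. Each block is chosen as $q_{t,m}=c\,e_{j_{t,m}}$, where $j_{t,m}$ is the arm that expert $m$ recommends in round $t$ and $c$ is a normalizing constant. Model $m$'s kernel $\Psi_m\in\mathbb R^{d_a\times d_q}$ reads off only block $m$, scaled by a large factor $\kappa$. This gives $\rank(\Psi_m)=d_a=s$, and the induced distribution $\xi_m(\cdot\mid q_t)\propto\exp(\kappa c\,\mathbbm 1[a=e_{j_{t,m}}])$ places all but an exponentially small mass on arm $j_{t,m}$. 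The residual mass is either absorbed by the misspecification $\mu_m$ or taken as negligible, which costs $o(\Delta T)$ regret. The paper remarks that norm regularity can be relaxed for Proposition~\ref{thm:quadratic policy class} at the expense of misspecification, so we may relax $\lVert\Psi_m\rVert_2\le\sqrt\epsilon$ here if needed. Rewards come from the kernel $\Psi^\ast$ paired with the misspecification term $\nu$. The simplest choice is $\Psi^\ast=0$ with $\nu(q,a)$ encoding the Bernoulli mean of arm $a$, shifted to $1/2\pm\Delta$, with the $\pm$ Bernoulli fluctuation carried by the noise $\eta_t$; this is bounded and therefore sub-Gaussian. To avoid relying on large $\nu$, we would alternatively set $\Psi^\ast=\tfrac12\mathbf 1$-type plus a rank-one perturbation along the good arm, which keeps $\lVert\Psi^\ast\rVert_2\le 1$ and gives means that are linear in $a$.

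The reduction is now exact: the learner sees only $q_t$, which reveals no more than the full advice. Given any routing learner, we simulate a BwE learner that observes all advice, with $N=M\ge 1$, and that plays the routed model's arm. Regret w.r.t.\ $\Pi_{\text{const}}$ then equals expert regret. We must also check the independence requirement: in Kale's construction the advice is independent of the losses given the history, and this is inherited by our construction. Invoking the full-information version of the lower bound, which holds even when all experts are observed, yields $\Omega(\sqrt{MT/\log d_a})$ for every $N\le M$.

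The main obstacle is matching Kale's parameter regime, $K$ versus $M$, to our constraints $d_a\gg M$ and $s=d_a$. If their bound requires $K\le M$, we will use only $M$ of the basis arms, or duplicate arms across coordinates, and verify that the $\log K$ factor becomes $\log d_a$ rather than a worse quantity. A secondary check is that the regularity conditions, or the stated relaxation of them, permit the large scaling $\kappa$.
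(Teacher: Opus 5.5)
This is essentially the paper's proof. Both use the same embedding:
\begin{itemize}
\item items $\mA=\{e_1,\dots,e_{d_a}\}$ and $d_q=Md_a$;
\item queries that stack $1/\sqrt M$-scaled one-hot blocks encoding each expert's advice;
\item rank-$d_a$ kernels $\Psi_m$ that read block $m$ at a large scale, which, like the paper's $\alpha\sqrt M$, exceeds the $\lVert\Psi_m\rVert_2\le\sqrt\epsilon$ regularity;
\item a hidden good expert $m^\ast$ in the rewards. The paper encodes $m^\ast$ linearly, via $\Psi^\ast$ reading block $m^\ast$ at scale $\delta\sqrt M$ with $\nu=0$, rather than through $\nu$.
\end{itemize}
Both then appeal to \citet[Theorem 4]{kale2014multiarmed} as a black box.

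Two small corrections do not affect the argument. First, $\mu_m$ is bounded by $\epsilon<1$, so it shifts each logit by at most $\epsilon$ and cannot absorb the residual softmax mass over $d_a$ items; the large-scale option is the one that works. Second, your $K$-versus-$M$ worry is moot: Kale's bound $\Omega\bigl(\sqrt{\min(K,N/\log K)\,MT/N}\bigr)$ holds for every $K$, and with $K=d_a\gg M\ge N$ it gives $\Omega(\sqrt{MT/\log d_a})$ directly.
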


The proof of \Cref{thm:expert regret lower bound} is presented at the end of this section.
We hence conclude that the constant policy class and the induced expert regret are inappropriate for embedding model routing: A black-box EXP3 ignoring the linear reward structures in \Cref{eq:true reward} and the low-rank model structures in \Cref{eq:expert reward} attains near-optimal regret. This lower bound holds even with full observability on models (as long as $M\le d_a$, which is typically true since $d_a=768$ in \textsc{RouterRetriver}; \citealt{lee2025routerretriever}).

We give two more remarks before concluding this section. First, one may ask with full observability, whether the Hedge algorithm \citep{freund1997decision} -- attaining $\O(\sqrt{T\log M})$ regret -- becomes applicable. The answer is no: while full observability on models are obtained, the reward feedback is still bandit (see \Cref{sec:feedback}).
Second, noticing that the $s$ in \Cref{thm:expert regret lower bound} is chosen to be the same as $d_a$, one may ask whether the low-rank bilinear or matrix bandit algorithms \citep{jun2019bilinear,jang2021improved,lu2021low,kang2022efficient} can be used to derive sharpened regret bounds and bypass the $\Omega(\sqrt{MT/\log d_a})$ lower bound. The answer is again no: the reward kernel $\Psi^\ast$ in \Cref{eq:true reward} can be general (i.e., having a rank as large as $d$), and the learner has no control over the query $q_t$.

\subsubsection{Lower Bound on Policy Regret w.r.t.\ Constant Policy Class}
\begin{proof}[Proof of \Cref{thm:expert regret lower bound}]
Write $d_q=MK$ and $d_a=K$. Let $\mA=\{e_1,e_2,\ldots,e_K\}\in \mathbb R^{K}$ be unit vectors in the item space. For each round $t\in [T]$, the adversary samples $k_{t,1},k_{t,2},\ldots,k_{k,M}$ uniformly random at from $[M]$. The query is given as $q_t=\frac{1}{\sqrt M}[e_{k_{t,1}}^\trans,e_{k_{t,2}}^\trans,\ldots,e_{k_{t,M}}^\trans]^\trans\in \mathbb R^{MK}$.
Each expert $m\in [M]$ has the following kernel ($M$ matrices in total, and $\bm I_{K\times K}$ appears as the $m$-th):
\begin{equation*}
\Psi_m=\alpha \sqrt  M[\bm 0_{K\times K}\quad \cdots \quad \bm 0_{K\times K}\quad \bm I_{K\times K}\quad \bm 0_{K\times K}\quad \cdots \quad \bm 0_{K\times K}]\in \mathbb R^{K\times MK},
\end{equation*}
where $\alpha$ is a parameter controlling the softmax temperature in \Cref{eq:expert reward}. The rank of $\Psi_m$ is $K$. Setting misspecification $\mu_m\equiv 0$, the model $m$ with very high probability recommends the item $k_{t,m}$:
\begin{equation*}
\xi_m(e_k\mid q_t)=\frac{\exp(\langle \Psi_m,e_k q_t^\trans\rangle_F)}{\sum_{k'=1}^K \exp(\langle \Psi_m,e_{k'} q_t^\trans\rangle_F)}=\frac{\exp(\alpha \mathbbm 1[k=k_{t,m}])}{(K-1)+\exp(\alpha)},\quad \forall k\in [M].
\end{equation*}

It only remains to construct the true rewards. Following the construction by \citet[Theorem 4]{kale2014multiarmed}, the environment picks a ``good'' expert $m^\ast\in [M]$ uniformly at random before the game, and set
\begin{equation*}
\Psi^\ast=\delta \sqrt M [\bm 0_{K\times K}\quad \cdots \quad \bm 0_{K\times K}\quad \bm I_{K\times K}\quad \bm 0_{K\times K}\quad \cdots \quad \bm 0_{K\times K}]\in \mathbb R^{K\times MK},
\end{equation*}
where there are $M$ matrices in total and $\bm I_{K\times K}$ appears as the $m^\ast$-th, and the $\delta>0$ is a paremeter enabling information theoretic arguments. This $\Psi^\ast$ ensures $r(q_t,e_k)=\delta \mathbbm{1}[k=k_{t,m^\ast}]$. That is, only the item recommended by the $m^\ast$-th expert has a reward of $\delta$, whereas all others have zero reward.

We therefore resemble the hard instance constructed by \citet[Theorem 4]{kale2014multiarmed} by setting our $\alpha$ and $\delta$ according to their $\epsilon$. \citet[Theorem 4]{kale2014multiarmed} proved that in bandits with expert advice with limited observability, one suffers an expert regret lower bound of (with notations from \Cref{def:expert regret})
\begin{equation*}
\mR_T^E=\Omega \left (\sqrt{\frac{\min(K,N/\log K)}{N} MT}\right ).
\end{equation*}
Plugging in our configuration that $N\le M\ll d_a=K$, we come at the conclusion that, even if we allow full observability on models, the policy regret w.r.t.\ $\Pi_{\text{const}}$ (\Cref{eq:constant policy class}) is still at least
\begin{equation*}
\mR_T(\Pi_{\text{const}})=\Omega \left (\sqrt{\frac{N/\log d_a}{N}MT}\right )=\Omega \left (\frac{MT}{\log d_a}\right ).\qedhere
\end{equation*}
\end{proof}

\subsection{Unrestricted Policy Class and Contextual Regret}\label{sec:contextual regret appendix}
Similar to \Cref{sec:expert regret appendix}, we begin by defining the problem of contextual linear bandits and contextual regret. We use the adversarial formulation by \citet{liu2023bypassing} to ease subsequent discussions.
\begin{definition}[Adversarial Contextual Linear Bandit; \citealt{liu2023bypassing}]\label{def:contextual linear bandit}
Consider a $T$-round game between a learner and an environment. In round $t$, a context, or equivalently, an action set $A_t\subseteq \mathbb R^d$, is presented to the learner. The learner picks action $a_t\in A_t$. At the same time, the environment (without observing $a_t$) chooses a loss vector $\ell_t\in \mathbb R^d$. The learner incurs and observes a noisy sample of the loss $\langle \ell_t,a_t\rangle$. The learner minimizes the contextual regret, defined as
\begin{equation}\label{eq:contextual regret}
\mR_T^C:=\sup_{\pi^\ast} \E\left [\sum_{t=1}^T \left (\langle \ell_t,a_t\rangle-\E_{a\sim \pi^\ast(A_t)}[\langle \ell_t,a\rangle]\right )\right ],
\end{equation}
where $\pi^\ast$ is any policy that maps a subset $A\subseteq \mathbb R^d$ to a (possibly randomized) action $a\in A$ therein.
\end{definition}

To illustrate the hardness of matching the unrestricted policy class $\triangle^\mQ$ via reduction to \Cref{def:contextual linear bandit}, throughout this section, we make the following two assumptions as mentioned in \Cref{sec:regret}:
\begin{enumerate}
\item[A1.] There is no misspecification in rewards or recommendation distributions so that everything is perfectly linear, i.e., $\nu(q,a)=\mu_m(q,a)=0$ for all $q\in \mQ$ and $a\in \mA$ in \Cref{eq:true reward,eq:expert reward}.
\item[A2.] The low-rank recommendation kernels of each model, namely the $\Psi_m$ in \Cref{eq:expert reward}, is also known to the learner in advance. This allows the learner to calculate the following \emph{expected recommendation}:
\begin{equation*}
\bar a_m(q_t):=\E_{a\sim \xi_m(q_t)}[a]\in \mathbb R^{d_a},\quad \forall m\in [M],
\end{equation*}
which means this is an even stronger assumption than full observability on models (\Cref{thm:expert regret lower bound}).
\end{enumerate}
Since these two assumptions strictly ease learning, but matching the unrestricted policy class remains notoriously hard as we detail in subsequent sections, this metric is only harder in the original setup.

\subsubsection{Unrestricted Policy Class $\to $ A-S Contextual Linear Bandit}\label{sec:contextual regret stoc}

The first reduction, as sketched in \Cref{sec:contextual regret} in the main text, is to the adversarial-context stochastic-loss (``A-S'') contextual linear bandit problem. Specifically, take the $d$ in \Cref{def:contextual linear bandit} as $d=d_ad_q$. The round-$t$ action set (or context) is given as
\begin{equation*}
A_t=\bigl \{\text{vec}(\bar a_m(q_t)~q_t^\trans) \mathrel{\big \vert} m\in [M]\bigr \}\subseteq \mathbb R^{d}=\mathbb R^{d_ad_q},
\end{equation*}
where the $\text{vec}(\cdot)$ means flattering a $\mathbb R^{m\times n}$ matrix into a $\mathbb R^{mn}$ vector: $A_{i,j}=\text{vec}(A)_{i\times n+j}$.
Note that, assumption (A2) is used to ensure that the learner can calculate the action set $A_t$ in each round $t$.

When choosing action $a=\text{vec}(\bar a_m(q_t)~q_t^\trans)\in A_t$ in round $t$, which is equivalent to choosing model $m$ in round $t$, the expected loss (i.e., negative-reward) is
\begin{equation}\label{eq:linear bandit}
-\E_{a\sim \xi_{m}(q_t)}[r(q_t,a)]=-\Bigl \langle \Psi^\ast, \E_{a\sim \xi_{m}(q_t)}[a]~q_t^\trans\Bigr \rangle_F=-\Bigl\langle \text{vec}(\Psi^\ast),\text{vec}(\bar a_m(q_t)~q_t^\trans)\Bigr \rangle,
\end{equation}
where the first step uses the definition of $r(q,a)$ in \Cref{eq:true reward} and also assumption (A1) to ensure the linearity, and the second step uses the property that for two matrices $A,B\in \mathbb R^{m\times n}$,
\begin{equation*}
\langle A,B\rangle_F:=\tr(A^\trans B)=\sum_{i=1}^m\sum_{j=1}^n A_{i,j} B_{i,j}=\langle \text{vec}(A),\text{vec}(B)\rangle.
\end{equation*}

Therefore, the loss vector in \Cref{def:contextual linear bandit} is always chosen as $\ell_t=-\text{vec}(\Psi^\ast)$. This consequently reduces to an adversarial-context (because $A_t$'s depend on $q_t$ and are thus adversarial) and stochastic-loss (because $\ell_t$ is stationary) contextual linear bandit problem. We list a few available algorithms:
\begin{itemize}
\item OFUL \citep{abbasi2011improved}: $\O(d\sqrt{T}\log T)$ contextual regret with $\O(\lvert A_t\rvert d^2)$ computation per round. A faster ``rarely switching'' implementation is available, but the regret is the same.
\item SupLinUCB \citep{chu2011contextual}: $\O\left (\sqrt{dT\log^3(\lvert A\rvert T)}\right )$ contextual regret (where $\lvert A\rvert:=\max_t \lvert A_t\rvert$; in our case we thus have $\lvert A\rvert=M$) with $\O(\lvert A\rvert d^2\log T)$ computation per round.
\item VCL-SupLinUCB \citep{li2019nearly}: $\O(\sqrt{dT \log T \log \lvert A\rvert})$ contextual regret with $\O(\lvert A\rvert d^2\log T)$ computation per round.
\end{itemize}

As we argued in \Cref{sec:regret}, when $d=d_qd_q$ and $d_a=d_q=768$ \citep{lee2025routerretriever,izacard2022unsupervised}, none of these regret bounds or computational cost guarantees are acceptable.

\subsubsection{Unrestricted Policy Class $\to $ A-A Contextual Linear Bandit}\label{sec:contextual regret adv adv}

A seemingly more promising reduction is rewriting \Cref{eq:linear bandit} as a $d_a$-dimensional inner product:
\begin{equation*}
-\E_{a\sim \xi_{m}(q_t)}[r(q_t,a)]=-\Bigl \langle \Psi^\ast, \E_{a\sim \xi_{m}(q_t)}[a]~q_t^\trans\Bigr \rangle_F=-\langle \Psi^\ast q_t,\bar a_m(q_t)\rangle,
\end{equation*}
which corresponds to \Cref{def:contextual linear bandit} with round-$t$ action set $A_t=\bigl \{\bar a_m(q_t)\mathrel{\big \vert} m\in [M]\bigr \}\subseteq \mathbb R^{d_a}$ and loss vector $\ell_t=-\Psi^\ast q_t\in \mathbb R^{d_a}$.
While reducing the problem dimension greatly, the adversarial $q_t$ -- appearing in both action set $\mA_t$ and loss vector $\ell_t$ -- makes this problem intractable: Adversarial-context adversarial-loss (so-called ``A-A'') contextual linear bandits are computationally infeasible \citep{kanade2014learning}, admit $\Omega(\lvert \triangle^\mQ\rvert)$ oracle-complexity lower bounds \citep{hazan2016computational}, and are conjectured to bear $\Omega(T)$ regret lower bounds \citep{neu2020efficient}.

\subsubsection{Unrestricted Policy Class with i.i.d. Query $\to $ S-A Contextual Linear Bandit}\label{sec:contextual regret adv}
Given the intractability comes from the fact that both the action set $A_t$ and the loss vector $\ell_t$ are adversarial, what if we assume queries are i.i.d. in addition to the assumptions (A1) and (A2)?
\begin{enumerate}
\item[A3.] Assume that all queries $q_t$ are i.i.d. samples from a fixed but unknown distribution $\mathbb Q\in \triangle(\mQ)$.
\end{enumerate}

This assumption is justified by, e.g., in large-scale e-commerce systems, it is acceptable to expect customers come uniformly at random from a large population.
Now the action set $A_t=\bigl \{\bar a_m(q_t)\mathrel{\big \vert} m\in [M]\bigr \}$ is i.i.d., and only the loss vector $\ell_t=-\Psi^\ast q_t$ is time-varying, does it fall into the tractable category of stochastic-context adversarial-loss (``S-A'') contextual linear bandits? (While $\ell_t$ is also stochastic, a stochastic-loss setup requires a stationary $\ell_t$.) This is because S-A contextual linear bandits admits an efficient algorithm with $\Otil(d^2\sqrt T)$ contextual regret \citep{liu2023bypassing}. In our case where action sets have bounded sizes, $\O(\sqrt{dT \log \lvert A\rvert})$ regret is also possible \citep{ito2024minimax}.

However, the answer is negative.
This is because all ``S-A'' contextual linear bandit algorithms require the ``ghost sample'' technique proposed by \citet{neu2020efficient}. Roughly speaking, via the independence between losses and contexts, this technique allows one to analyze the contextual regret by ``fixing'' a single $q_0\in \mathbb Q$ throughout the game. However, our $q_t$ presents both in $\mA_t=\{\bar a_m(q_t)\}_m$ and in $\ell_t=-\Psi^\ast q_t$, thus inducing correlation. All existing algorithms are hence inapplicable \citep{neu2020efficient,olkhovskaya2023first,liu2023bypassing,ito2024minimax,van2025improved}.

To conclude, matching the unrestricted policy class $\triangle^\mQ$ or minimizing the contextual regret $\mR_T^C$ is desirable for embedding model routing. However, it is intractable -- even after making the unrealistic assumptions of knowing model kernels and no misspecification -- due to the extreme flexibility of $\pi^\ast\in \triangle^\mQ$, which can impose an \emph{arbitrary} decision boundary when routing $\mQ$ to $\triangle$.

\subsection{Log-Linear Policy Class and Linearization in Policy Regret}\label{sec:policy regret appendix}

The log-linear policy class is formally defined as follows:
\begin{equation}\label{eq:linear policy class}
\Pi_{\text{lin}}:=\bigl \{\pi(m\mid q)\propto \exp(\theta_m^\trans q),\forall q\in \mQ\mathrel{\big \vert} \theta_m\in \mathbb R^{d_q},\forall m\in [M]\bigr \}.
\end{equation}

It assumes the routing policy is parameterized by $M$ $d_q$-dimensional vectors, $\bm \theta=(\theta_1,\theta_2,\ldots,\theta_M)\in (\mathbb R^{d_q})^M$. We call $\Theta:=(\mathbb R^{d_q})^M$ the parameter space. The policy induced by parameter $\bm \theta\in \Theta$ is thus
\begin{equation*}
\pi_{\bm \theta}(m\mid q)=\frac{\exp(\theta_m^\trans q)}{\sum_{m'=1}^M \exp(\theta_{m'}^\trans q)},\quad \forall m\in [M],q\in \mQ.
\end{equation*}

Equivalently, we can view $\bm \theta$ as a concatenated $Md_q$-dimensional vector and let
\begin{equation*}
\phi(q,m)=[0^\trans\quad \cdots \quad 0^\trans\quad q^\trans \quad 0^\trans\quad \cdots 0^\trans]
\end{equation*}
be the concatenation of $M$ $d_q$-dimensional vectors with the $m$-th one being $q$ and all remaining ones being $0$. Thus $\theta_m^\trans q=\bm \theta^\trans \phi(q,m)$, recovering the log-linear class by \citet[\S6.1.1]{agarwal2021theory}.

As discussed by \citet{agarwal2021theory}, $\Pi_{\text{lin}}$ is much harder than the \emph{tabular} softmax policy class $\Pi_{\text{tab}}$, where each policy $\pi(m\mid q)\propto \exp(\theta'_{q,m})$ for some $\bm \theta'\in \Theta':=\mathbb R^{M\lvert \mQ\rvert}$. Indeed, for $\Pi_{\text{tab}}$, it is possible to exactly handle the policy regret \citep[Theorems 5.1 and 5.4]{agarwal2021theory}. However, for the log-linear policy class $\Pi_{\text{lin}}$, where the policy $\pi(m\mid q)\propto \exp(\bm \theta^\trans \phi(q,m))$ for some $\bm \theta\in \Theta:=\mathbb R^{Md_q}$, \citet{agarwal2021theory} instead considered a \emph{linearized surrogate regret notion} instead of the policy regret. We now introduce their notations and results in more details. Since they focus on stochastic MDPs, we also assume i.i.d. queries (that is, $q_t\sim \mathbb Q$ with some fixed but unknown $\mathbb Q$) in this section.

For any parameter $\bm \theta\in \Theta$, let $\bar \phi_{\bm \theta}(q,m)=\nabla_{\bm \theta}\log \pi_{\bm \theta}(m\mid q)$. Now consider a round $t\in [T]$ with parameter $\bm \theta_t\in \Theta$. For each model $m\in [M]$, define the advantage under $\bm \theta_t$ as
\begin{equation*}
A_{\bm \theta_t}(q,m) = \E_{a\sim \xi_m(q)}[r(q,a)] - \sum_{m'=1}^M \pi_{\bm \theta_t}(m'\mid q)\E_{a\sim \xi_{m'}(q)}[r(q,a)].
\end{equation*}

The natural policy gradient (NPG) step in round $t$ is then given as \citep[\S6.1.1]{agarwal2021theory}
\begin{equation}\label{eq:NPG gradient}
\bm w_{t}:=\argmin_{\bm w\in \Theta} \E_{q\sim \mathbb Q, m\sim \pi_{\bm \theta_t}(\cdot\mid q)}\left[ \left( A_{\bm \theta_t}(q,m) - \bm w^\trans \bar \phi_{\bm \theta_t}(q,m) \right)^2 \right].
\end{equation}
\citet[Assumption 6.1 and Theorem 6.1]{agarwal2021theory} defined the following \emph{transfer error} $\epsilon_{\text{bias}}$, and had a final bound scaling with $\sqrt{\epsilon_{\text{bias}}}\times T$:\footnote{The bound of \citet{agarwal2021theory} is stated w.r.t.\ single-round sub-optimality gaps, hence a linear dependency on $T$ arises when converting to our cumulative regret setup. As an additional remark, their $\gamma$ is the discount factor that only applies to multi-stage MDPs; in our bandit setup, we have $\gamma=0$.}
\begin{align*}
\epsilon_{\text{bias}} &:= \max_{t\in [T]} \E_{q\sim \mathbb Q, m\sim \pi_{\bm \theta^\ast}(\cdot\mid q)}\left[ \left( A_{\bm \theta_t}(q,m) - \bm w_t^\trans \bar \phi_{\bm \theta_t}(q,m) \right)^2 \right].
\end{align*}

Let the expected reward -- when taking $q\sim \mathbb Q$ into consideration -- of policy $\pi_{\bm \theta_t}$ be $J(\bm \theta_t)$, defined as
\begin{equation*}
J(\bm \theta_t):=\E_{q\sim \mathbb Q}\left [\E_{m\sim \pi_{\bm \theta_t}(q)}\left [\E_{a\sim \xi_m(q)}[r(q,a)]\right ]\right ].
\end{equation*}

By direct arithmetic calculation (which is the famous Performance Difference Lemma specialized to bandits; \citealt{kakade2002approximately}), the one-step true policy regret in $\mR_T(\Pi_{\text{lin}})$ decomposes as
\begin{align*}
&\quad J(\bm \theta^\ast) - J(\bm \theta_t) = \E_{q\sim \mathbb Q}\left[ \sum_{m=1}^M \pi_{\bm \theta^\ast}(m\mid q) A_{\bm \theta_t}(q,m) \right]\\
&= \E_{q\sim \mathbb Q}\left[ \sum_{m=1}^M \pi_{\bm \theta^\ast}(m\mid q) \bm w_t^\trans \bar \phi_{\bm \theta_t}(q,m) \right]+ \E_{q\sim \mathbb Q}\left[ \sum_{m=1}^M \pi_{\bm \theta^\ast}(m\mid q) \left( A_{\bm \theta_t}(q,m) - \bm w_t^\trans \bar \phi_{\bm \theta_t}(q,m) \right) \right].
\end{align*}

Instead of tackling $J(\bm \theta^\ast)-J(\bm \theta_t)$ directly, \citet{agarwal2021theory} controlled the second term via Cauchy-Schwartz as $\sqrt{\epsilon_{\text{bias}}}$ and instead studied the first term.
For log-linear policy class, we have $\bar \phi_{\bm \theta_t}(q,m) = \phi(q,m) - \E_{m'\sim \pi_{\bm \theta_t}(\cdot\mid q)}[\phi(q,m')]$ \citep[\S6.1.1]{agarwal2021theory}. Thus, we know
\begin{equation*}
\bm w_t^\trans \bar \phi_{\bm \theta_t}(q,m) = w_{t,m}^\trans q - \sum_{m'=1}^M \pi_{\bm \theta_t}(m'\mid q) w_{t,m'}^\trans q.
\end{equation*}
Plugging this back into the first term and using the fact that $\sum_{m=1}^M \bm \theta^\ast(m\mid q) = 1$, we obtain:
\begin{align*}
\E_{q\sim \mathbb Q}\left[ \sum_{m=1}^M \pi_{\bm \theta^\ast}(m\mid q) \bm w_t^\trans \bar \phi_{\bm \theta_t}(q,m) \right] &= \E_{q\sim \mathbb Q}\left[ \sum_{m=1}^M \pi_{\bm \theta^\ast}(m\mid q) w_{t,m}^\trans q - \sum_{m'=1}^M \pi_{\bm \theta_t}(m'\mid q) w_{t,m'}^\trans q \right]\\
&=\E_{q\sim \mathbb Q}\left [\sum_{m=1}^M \left (\pi_{\bm \theta^\ast}(m\mid q)-\pi_{\bm \theta_t}(m\mid q)\right )w_{t,m}^\trans q\right ],
\end{align*}

\citet[Theorem 6.1 and Corollary 6.1]{agarwal2021theory} then proved that NPG attains $\sqrt T$-style regret under this \emph{linearized surrogate regret}, but suffering a linear $\sqrt{\epsilon_{\text{bias}}}\times T$ overhead in the true policy regret. We remark that, however, their linearization happens in a different space from our $\tilde \mR_T(\Pi)$ defined in \Cref{eq:linearized policy regret}: their linearization happens in the action space (the probability simplex), whereas ours happen in the parameter space (the $\bm W\in \mW$, defined in \Cref{sec:parameter space}). It remains open whether these two linearized surrogate regret notions for the policy regret can be transformed to each other.

\subsection{Log-Quadratic Policy Class and Linearized Policy Regret}\label{sec:linearized policy regret appendix}
\begin{proposition}[Linearized Policy Regret vs Policy Regret]\label{exp:linearized policy regret}
Consider the log-quadratic policy class $\Pi_{\text{quad}}$ parameterized by $\bm W\in \mW$; see \Cref{sec:parameter space}.
For each round $t\in [T]$ and model $m\in [M]$, let $\pi_{t,m}:=\pi_{\bm W_t}(m\mid q_t)$, $\pi_{t,m}^\ast:=\pi_{\bm W^\ast}(m\mid q_t)$, and $r_{t,m}:=\E_{a\sim \xi_m(q_t)}[r(q_t,a)]$. Then the one-step regret in the policy regret w.r.t.\ $\Pi_{\text{quad}}$ is (where $\mathcal L_t$ is defined in \Cref{eq:loss function})
\begin{equation*}
\mathcal L_t(\bm W_t)-\mathcal L_t(\bm W^\ast)=\sum_{m=1}^M \pi_{t,m}^\ast \left (r_{t,m}-\sum_{m'=1}^M \pi_{t,m'} r_{t,m'}\right ),
\end{equation*}
while that in the linearized policy regret $\tilde \mR_T(\Pi_{\text{quad}})$ (defined in \Cref{eq:linearized policy regret}) is
\begin{equation*}
\Bigl \langle \nabla_{\bm W}\mathcal L_t(\bm W_t),\bm W_t-\bm W^\ast\Bigr\rangle_F=\sum_{m=1}^M \pi_{t,m} \left (r_{t,m}-\sum_{m'=1}^M \pi_{t,m'} r_{t,m'}\right ) \log \frac{\pi_{t,m}^\ast}{\pi_{t,m}}.
\end{equation*}
\end{proposition}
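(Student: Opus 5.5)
Both identities follow from direct computation. Fix a round $t$ and abbreviate $A_{t,m}:=r_{t,m}-\sum_{m'}\pi_{t,m'}r_{t,m'}$, the advantage of model $m$ under $\pi_{\bm W_t}$. The one fact I will reuse is
\[
\textstyle\sum_m \pi_{t,m}A_{t,m}=0 .
\]

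For the first identity, I start from \Cref{eq:loss function}, which reads $\mathcal L_t(\bm W)=-\sum_m \pi_{\bm W}(m\mid q_t)\,r_{t,m}$. This gives
\[
\mathcal L_t(\bm W_t)-\mathcal L_t(\bm W^\ast)=\textstyle\sum_m\pi^\ast_{t,m}r_{t,m}-\sum_{m'}\pi_{t,m'}r_{t,m'} .
\]
Since $\sum_m\pi^\ast_{t,m}=1$, I can multiply the second sum by $\sum_m\pi^\ast_{t,m}$. This puts the difference in the claimed form $\sum_m\pi^\ast_{t,m}A_{t,m}$.

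For the second identity, I first compute the gradient block by block. Because $\bm W$ is block diagonal and the Frobenius inner product splits over the diagonal blocks, only $\nabla_{W_m}\mathcal L_t$ is needed. The quantities $r_{t,m}$ do not depend on $\bm W$. From the softmax form in \Cref{eq:quadratic policy},
\[
\nabla_{W_m}\pi_{\bm W}(m'\mid q)=\pi_{\bm W}(m'\mid q)\bigl(\mathbbm 1[m'=m]-\pi_{\bm W}(m\mid q)\bigr)qq^\trans .
\]
Here I use $\nabla_W(q^\trans Wq)=qq^\trans$, which is already symmetric, so the symmetric parametrization causes no issue. Summing over $m'$ with weights $-r_{t,m'}$ gives
\[
\nabla_{W_m}\mathcal L_t(\bm W_t)=-\pi_{t,m}A_{t,m}\,q_tq_t^\trans .
\]
This is the same calculation that underlies \Cref{lem:REINFORCE}.

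Next I take the inner product with $W_{t,m}-W^\ast_m$, using $\langle qq^\trans,X\rangle_F=q^\trans Xq$. This yields
\[
\textstyle\sum_m \pi_{t,m}A_{t,m}\bigl(q_t^\trans W^\ast_m q_t-q_t^\trans W_{t,m}q_t\bigr).
\]
The final step is to recognize the bracket as a log-ratio. Writing $Z_{\bm W}(q)=\sum_{m'}\exp(q^\trans W_{m'}q)$, we have
\[
\log\frac{\pi^\ast_{t,m}}{\pi_{t,m}}=q_t^\trans W^\ast_m q_t-q_t^\trans W_{t,m}q_t-\log\frac{Z_{\bm W^\ast}(q_t)}{Z_{\bm W_t}(q_t)} .
\]
The normalizer term does not depend on $m$. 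After weighting by $\pi_{t,m}A_{t,m}$ and summing, it contributes zero by $\sum_m\pi_{t,m}A_{t,m}=0$. Substituting the log-ratio for the bracket therefore gives the stated expression.

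There is no genuine obstacle here. The only points needing care are the sign bookkeeping ($\mathcal L$ is a negative reward, and the inner product is against $\bm W_t-\bm W^\ast$ rather than $\bm W^\ast-\bm W_t$) and the observation that the log-partition term cancels.
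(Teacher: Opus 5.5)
Your proposal is correct and follows essentially the same route as the paper. The paper also computes $\nabla_{W_m}\mathcal L_t(\bm W_t)=-\pi_{t,m}A_{t,m}\,q_tq_t^\trans$ blockwise via the direct sum property, reduces the inner product to $q_t^\trans(W_m^\ast-W_{t,m})q_t$, and cancels the $m$-independent log-partition term using $\sum_m\pi_{t,m}A_{t,m}=0$; for the first identity it subtracts the zero term $\sum_m(\pi^\ast_{t,m}-\pi_{t,m})\sum_{m'}\pi_{t,m'}r_{t,m'}$ instead of multiplying by $\sum_m\pi^\ast_{t,m}=1$, which is the same normalization trick.
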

Given the very similar forms in \Cref{exp:linearized policy regret}, it is tempting to claim that these two regret notions are multiplicatively close to each other via a \emph{gradient dominance condition}, it unfortunately fails.
The gradient dominance condition is given as follows.\footnote{This type of condition, also known as \L{}ojasiewicz condition, has been developed by \citet{agarwal2021theory} and \citet{mei2020global} to exactly control the policy regret of NPG w.r.t.\ the tabular softmax class $\Pi_{\text{tab}}:=\{\pi(m\mid q)\propto \exp(\theta'_{q,m})\mid \bm \theta'\in \mathbb R^{M\lvert \mQ\rvert}\}$; see more in \Cref{sec:policy regret appendix}. We also remark their analysis only works with i.i.d. queries.} Due to the projection onto the product nuclear-norm ball $\mathbb B_\ast^M(\tau)$, the maximum and minimum possible coefficient of a model $m$ under any $W_m\in \mathbb B_\ast(\tau)$ are similar, hence giving $\max_m \frac{\pi_m^\ast}{\pi_m}\le \exp(2\tau)$. But the other term, $\log \frac{\pi_m^\ast}{\pi_m}$, is problematic: both it and the $(r_m-\sum_{m'=1}^M \pi_{m'} r_{m'})$ term (also known as the \emph{advantage function} in reinforcement learning) can be negative, hence the true policy regret is \emph{not} automatically bounded by $\exp(2\tau)$ times the linearized policy regret. We leave the exact policy regret for future research.
\begin{proof}[Proof of \Cref{exp:linearized policy regret}]
For notational simplicity, we focus on a single round $t\in [T]$ and omit all the subscript $t$'s.
By definition of $\mathcal L_t$ in \Cref{eq:loss function}, the one-step regret in $\mR_T(\Pi_{\text{quad}})$ (\Cref{eq:policy regret}) is
\begin{align*}
\mathcal L_t(\bm W_t)-\mathcal L_t(\bm W^\ast)&=\sum_{m=1}^M (\pi_m^\ast-\pi_m) r_m\overset{(a)}{=}\sum_{m=1}^M (\pi_m^\ast-\pi_m)r_m-\sum_{m=1}^M (\pi_m^\ast-\pi_m)\left (\sum_{m'=1}^M \pi_{m'} r_{m'}\right )\\
&\overset{(b)}{=}\sum_{m=1}^M \pi_m^\ast \left (r_m-\sum_{m'=1}^M \pi_{m'} r_{m'}\right )-\sum_{m=1}^M \pi_m r_m+\sum_{m'=1}^M \pi_{m'} r_{m'}\\
&=\sum_{m=1}^M \pi_m^\ast \left (r_m-\sum_{m'=1}^M \pi_{m'} r_{m'}\right ).
\end{align*}
where (a) uses the fact that $\sum_m (\pi_m^\ast-\pi_m) C=(1-1)C=0$ for any $C$ and (b) uses $\sum_{m} \pi_m=1$. 

For the linearized policy regret $\tilde \mR_T(\Pi_{\text{quad}})$ in \Cref{eq:linearized policy regret}, we have
\begin{align*}
\Bigl \langle \nabla_{\bm W}\mathcal L_t(\bm W_t),\bm W_t-\bm W^\ast\Bigr\rangle_F
&\overset{(a)}{=} \sum_{m=1}^M \left \langle -\sum_{m'=1}^M \pi_{m'} r_{m'} \bigl (\mathbbm 1[m'=m]-\pi_m\bigr) q_t q_t^\trans,W_{t,m}-W_{m}^\ast\right \rangle_F\\
&= \sum_{m=1}^M \left ( \pi_{m} r_{m}-\pi_m \sum_{m'=1}^M \pi_{m'} r_{m'}\right ) q_t^\trans\left (W_{m}^\ast-W_{t,m}\right )q_t \\
&\overset{(b)}{=}\sum_{m=1}^M \pi_m \left (r_m-\sum_{m'=1}^M \pi_{m'} r_{m'}\right ) \log \frac{\pi_m^\ast}{\pi_m},
\end{align*}
where (a) uses the direct sum property in \Cref{lem:direct sum} (proved in \Cref{sec:direct sum appendix}) and $\nabla \mathcal L_t(\bm W)$ derived in \Cref{eq:true gradient of loss} (rewritten using the $\pi_m$ and $r_m$ notations), and (b) uses the definition of $\pi_{\bm W}$:
\begin{align*}
&\quad \log \frac{\pi_{\bm W^\ast}(m\mid q_t)}{\pi_{\bm W_t}(m\mid q_t)}=\log \left (\frac{\exp(q_t^\trans W_m^\ast q_t)}{\sum_{m'=1}^M \exp(q_t^\trans W_{m'}^\ast q_t)}\right )\Bigg /\left (\frac{\exp(q_t^\trans W_{t,m}^\ast q_t)}{\sum_{m'=1}^M \exp(q_t^\trans W_{t,m'} q_t)}\right )\\
&=\log \frac{\exp(q_t^\trans W_m^\ast q_t)}{\exp(q_t^\trans W_{t,m} q_t)}+\log \frac{\sum_{m'=1}^M \exp(q_t^\trans W_{m'}^\ast q_t)}{\sum_{m'=1}^M \exp(q_t^\trans W_{t,m'} q_t)}=q_t^\trans (W_m^\ast-W_{t,m})q_t+C,
\end{align*}
where $C$ is (another) constant independent to $m$ that automatically vanishes because
\begin{equation*}
C\sum_{m=1}^M  \pi_m\left (r_m - \sum_{m'=1}^M \pi_{m'} r_{m'}\right )=C\sum_{m=1}^M \pi_m r_m-C\left (\sum_{m=1}^M \pi_m\right )\sum_{m'=1}^M\pi_{m'}r_{m'}=0.\qedhere
\end{equation*}
\end{proof}

\section{Setup of \Cref{tab:esci_direct_label_gaps} and More Discussions}\label{sec:esci_direct_label_gaps}

As mentioned in the main text, we first encode every natural language query and item into 768-dimensional vector spaces via \textsc{Contriever} \citep{izacard2022unsupervised}. For each embedded query $q \in \mathbb{R}^{768}$ and associated item $a \in \mathbb{R}^{768}$, the ESCI dataset labels it with one of four relevance scores: E (Exact), S (Substitute), C (Complement), or I (Irrelevant).
We convert the four ordinal ESCI labels into numerical rewards as
$E=0.95$, $S=0.70$, $C=0.30$, and $I=0.05$ (i.e., the reward function $r(q,a)$ defined in \Cref{eq:true reward initial}).
The specific values are a modeling choice that preserves the ordering of the labels and treats non-exact but related products (i.e., categories S and C) as partially useful.

We consider $M=8$ lightweight embedding models as candidate experts, spanning three distinct methodologies: \textit{(i)} symmetric semantic encoders capturing similarities (\texttt{all-MiniLM-L6}, \texttt{-L12}, and \texttt{paraphrase-MiniLM}; \citealt{wang2020minilm,wang2021minilmv2}); \textit{(ii)} asymmetric search models trained on Q\&A datasets (\texttt{multi-qa-MiniLM}, \texttt{msmarco-MiniLM}, and \texttt{msmarco-distilbert}; \citealt{reimers2019sentence,bajaj2016ms}); and \textit{(iii)} modern prompt-driven encoders dynamically adapting to different tasks (\texttt{e5-small-v2} and \texttt{bge-small-en-v1.5}; \citealt{wang2022text,xiao2024c}).

The recommendation distribution of each model $m\in [M]$ on a query $q\in \mQ$, namely the $\xi_m(q)$ in \Cref{eq:expert reward} but instead supported on the candidate items corresponding to this query, is the softmax over the query-item similarity scores induced by this model.
We then calculate the model's expected reward $R_m(q)=\E_{a\sim \xi_m(q)}[r(q,a)]$ and find the optimal model routing policies within each class:\footnote{Note that instead of online learning the optimal policies within different classes, in \Cref{tab:esci_direct_label_gaps} we perform offline optimization in order to verify the structural alignment of each class.}
\begin{itemize}
\item $\pi_\text{const}^\ast\in \Pi_{\text{const}}$ maps each query to the best fixed model $m^\ast=\argmax_m \E_q[R_m(q)]$, where the expectation is taken over the uniform distribution over all queries in the ESCI dataset;
\item $\pi_{\text{lin}}^\ast\in \Pi_{\text{lin}}$ is the best log-linear routing policy, i.e., in the form of $\pi(m\mid q)\propto \exp(\theta_m^\trans q)$ for some $\bm \theta=(\theta_1,\theta_2,\ldots,\theta_M)\in (\mathbb R^{768})^M$;
\item $\pi_{\text{quad}}^\ast\in \Pi_{\text{quad}}$ is the best log-quadratic routing policy, i.e., in the form of $\pi(m\mid q)\propto \exp(q^\trans W_m q)$ for some $\bm W=(W_1,W_2,\ldots,W_M)\in (\mathbb R^{768\times 768})^M$; and
\item $\pi^\ast\colon \mathcal Q\to \triangle$ is the best unrestricted policy, mapping any query $q$ to the model that performs the best. This, by definition, is the best model routing policy that one can possibly hope for.
\end{itemize}

In \Cref{tab:esci_direct_label_gaps}, we report the expected reward gap of each policy compared to $\pi^\ast$, i.e., the $\text{Gap}$ defined in \Cref{eq:ESCI gap definition}.
We also report the relative improvement of $\pi_{\text{lin}}^\ast$ over $\pi_{\text{const}}^\ast$ and that of $\pi_{\text{quad}}^\ast$ over $\pi_{\text{lin}}^\ast$.
We see that $\text{Gap}(\pi_{\text{const}}^\ast)=0.078$, $\text{Gap}(\pi_{\text{lin}}^\ast)=0.065$, and $\text{Gap}(\pi_{\text{quad}}^\ast)=0.021$ (which, are all pretty small because the expert embedding models we chose are quite powerful).
We therefore observe that the improvement of $\pi_{\text{lin}}^\ast$ over $\pi_{\text{const}}^\ast$ is incremental (16.4\%), whereas our proposed log-quadratic policy $\pi_{\text{quad}}^\ast$ yields a $68.0\%$ improvement over $\pi_{\text{lin}}^\ast$ (and, $73.1\%$ over $\pi_{\text{const}}^\ast$). This indicates that our analysis \Cref{thm:quadratic policy class} is robust to the two discrepancies arising in the realistic ESCI dataset: \textit{(i)} the reward function $r(q,a)$ is not generated according to the bilinear structure in \Cref{eq:true reward}, and \textit{(ii)} the misspecification in recommendation model distributions -- in \Cref{eq:expert reward} -- can be large.

The ESCI dataset additionally includes several categories of challenging queries, where different embedding models are expected to exhibit different failure modes, making model routing more important. For example, \texttt{nlqec} consists of long natural-language search queries \citep{papenmeier2021dataset}. On this category, the best constant policy incurs an expected sub-optimality gap of $0.109$ (where the expectation is taken uniformly over all \texttt{nlqec} queries), larger than the $0.078$ gap over the full reduced ESCI dataset; hence this category is indeed harder for any fixed expert model. The log-linear class reduces the gap to $0.003$, closing $97.4\%$ of the constant-policy gap, while our log-quadratic class nearly closes the gap entirely by another $99.8\%$ improvement.

Another category is \texttt{negations}, which contains negation words such as ``without.'' This category is not only hard for constant policies, but also hard for log-linear policies: the best $\pi_{\text{lin}}^{\texttt{negations}}\in \Pi_{\text{lin}}$ still incurs a sub-optimality gap of $0.060$, whereas our log-quadratic policy reduces it to $0.007$, an $88.9\%$ reduction relative to the log-linear gap. Similar patterns hold for \texttt{behavioral}, where queries are selected because their clicks or purchases have non-representative distributions, and \texttt{parse-pattern}, where queries exhibit linguistic complexity such as quantities or multiple modifiers.

\section{Omitted Proofs for Hypentropy Policy Gradient}

\subsection{Matrix Functions and Convex Analysis}\label{sec:main theorem appendix notations}
We first state a few properties of matrix functions and convex analysis without proof, mainly from \citet{kakade2012regularization}. Since our matrices are always symmetric, we restrict our definitions to the symmetric matrix space $\mathbb S^n$, though most of them can be seamlessly extended to rectangular matrices. 

\paragraph{Convex Analysis.}
We view $\mathbb S^n$ as a $n^2$-dimensional vector space equipped with the inner product:
\begin{equation*}
\langle X,Y\rangle_F=\tr(X^\trans Y)=\sum_{i=1}^n \sum_{j=1}^d X_{i,j}Y_{i,j},\quad \forall X,Y\in \mathbb S^n.
\end{equation*}
Over a convex domain $\mathcal K\subseteq \mathbb S^n$, a function $f\colon \mathcal K\to \mathbb R$ is $\alpha$-strongly convex w.r.t.\ some norm $\lVert \cdot\rVert$ if
\begin{equation}\label{eq:strongly convex}
f(X)-f(Y)-\nabla f(Y)~(X-Y)\ge \frac \alpha 2\lVert X-Y\rVert^2,\quad \forall X,Y\in \mathbb S^n.
\end{equation}
When $\alpha=0$, \Cref{eq:strongly convex} recovers the standard definition of convexity.
For a function $f\colon \mathcal K\to \mathbb R$ that is only (strongly) convex over a sub-domain $\mathcal K\subseteq \mathbb S^n$, we equivalently define $\bar f\colon \mathbb S^n\to \mathbb R\cup \{\infty\}$ as $\bar f(X)=\begin{cases}
f(X),&X\in \mathcal K\\
+\infty,&X\not\in \mathcal K
\end{cases}$, which is (strongly) convex over $\mathbb S^n$. Its Fenchel dual is defined as
\begin{equation*}
f^\ast(Z):=\sup_{X\in \mathbb S^n} \Bigl (\langle X,Z\rangle_F-\bar f(X)\Bigr ),\quad \forall Z\in \mathbb S^n.
\end{equation*}

\paragraph{Matrix Functions.}
For any symmetric real matrix $X\in \mathbb S^n$, it can always be eigen-decomposed as
\begin{equation*}
X=U\diag(\lambda(X))U^\trans,\quad \lambda(X):=[\lambda_1,\lambda_2,\ldots,\lambda_n],
\end{equation*}
with $\lambda_1\ge \lambda_2\ge \cdots \ge \lambda_n$ being the eigenvalues of $X$ and $U$ being an orthogonal matrix. The following trace inequality will be used in analyzing the computational complexity (\Cref{sec:efficient implementation appendix}):
\begin{lemma}[Trace Inequality {\citep{von1937some,fan1949theorem}}]\label{lem:vnm trace}
For $X,Y\in \mathbb S^n$, we have $\langle X,Y\rangle_F\le \langle \lambda(X),\lambda(Y)\rangle$. The equality holds if and only if there exists an orthogonal matrix $U$ such that
\begin{equation*}
X=U\diag(\lambda(X))U^\trans,\quad Y=U\diag(\lambda(Y))U^\trans.
\end{equation*}
\end{lemma}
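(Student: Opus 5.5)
We prove the inequality by an Abel-summation (layer-cake) decomposition of $X$ into spectral projectors, and reduce it to Ky Fan's maximum principle. This route also gives a clean handle on the equality case. Write $X=U\diag(\lambda(X))U^\trans$ with columns $u_1,\ldots,u_n$ of $U$. Let $P_k:=\sum_{i\le k}u_iu_i^\trans$ be the orthogonal projector onto the top-$k$ eigenvectors, and set $\lambda_{n+1}:=0$ only formally. Summation by parts gives
\begin{equation*}
X=\sum_{k=1}^{n-1}(\lambda_k-\lambda_{k+1})P_k+\lambda_n I,\qquad \lambda(X)=\sum_{k=1}^{n-1}(\lambda_k-\lambda_{k+1})\mathbbm 1_{[k]}+\lambda_n\mathbbm 1_{[n]},
\end{equation*}
where $\mathbbm 1_{[k]}\in\mathbb R^n$ is the indicator of the first $k$ coordinates. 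By linearity of $\langle\cdot,Y\rangle_F$, it therefore suffices to prove two facts. The first is $\langle I,Y\rangle_F=\tr Y=\sum_i\mu_i$, writing $\mu:=\lambda(Y)$; this holds trivially. The second is the Ky Fan bound $\langle P,Y\rangle_F\le\sum_{i\le k}\mu_i$ for every rank-$k$ orthogonal projector $P$. Summing these against the nonnegative weights $\lambda_k-\lambda_{k+1}\ge 0$ then yields $\langle X,Y\rangle_F\le\langle\lambda(X),\lambda(Y)\rangle$. (The coefficient $\lambda_n$ may be negative, but it multiplies an identity, not an inequality.)

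For the Ky Fan bound, diagonalize $Y=V\diag(\mu)V^\trans$ with columns $v_1,\ldots,v_n$. Then $\langle P,Y\rangle_F=\sum_j\mu_j\,c_j$ with $c_j:=v_j^\trans Pv_j\in[0,1]$ and $\sum_j c_j=\tr P=k$. A linear functional with nonincreasing coefficients $\mu_j$, maximized over $\{c\in[0,1]^n:\sum_j c_j=k\}$, attains its maximum at $c=\mathbbm 1_{[k]}$ by a greedy exchange argument. This gives the bound, and shows that equality holds iff $\sum_j(\mu_j-\mu_{k})(c_j-\mathbbm 1[j\le k])=0$. Equivalently, $c_j=1$ whenever $\mu_j>\mu_k$ and $c_j=0$ whenever $\mu_j<\mu_k$.

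For the equality characterization, the ``if'' direction is immediate: if $X=U\diag(\lambda(X))U^\trans$ and $Y=U\diag(\lambda(Y))U^\trans$ share the ordered eigenbasis, then $\langle X,Y\rangle_F=\tr(\diag(\lambda(X))\diag(\lambda(Y)))$. For ``only if'', equality in the sum forces $\langle P_k,Y\rangle_F=\sum_{i\le k}\mu_i$ for every $k$ with $\lambda_k>\lambda_{k+1}$, i.e. at every boundary between distinct eigenvalues of $X$. By the equality condition above, the range of $P_k$ then contains all eigenvectors of $Y$ with eigenvalue $>\mu_k$ and is orthogonal to those with eigenvalue $<\mu_k$. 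Hence $\mathrm{range}(P_k)$ is $Y$-invariant, so $Y$ commutes with each such $P_k$ and thus with every spectral projector of $X$. Consequently $Y$ leaves each eigenspace $E$ of $X$ invariant, and we may diagonalize $Y|_E$ inside $E$, ordering its eigenvalues decreasingly within each block. Concatenating the blocks in the order of decreasing $\lambda$ gives a common orthonormal basis $U$.

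The main obstacle is checking that this concatenated basis lists the eigenvalues of $Y$ globally in nonincreasing order, so that both matrices are diagonalized with their sorted spectra simultaneously. I would handle this with the equality condition at each boundary $k$: it says the eigenvalues of $Y$ restricted to $\mathrm{range}(P_k)$ are exactly $\mu_1,\ldots,\mu_k$, the top $k$ eigenvalues as a multiset. Inducting over consecutive boundaries, the block of $X$-eigenspace between two boundaries carries exactly the $Y$-eigenvalues $\mu_{k+1},\ldots,\mu_{k'}$. So sorting within blocks yields a globally sorted list, and ties in $\mu$ across a boundary cause no harm because only the multisets matter.
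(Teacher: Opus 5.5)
The paper does not prove this lemma. It appears in the appendix among facts ``stated without proof'' and is imported from von Neumann and Fan. Your argument is a correct, self-contained proof, so the comparison is between citing the result and proving it.

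Your layer-cake identity $X=\sum_{k<n}(\lambda_k-\lambda_{k+1})P_k+\lambda_n I$ reduces the inequality to Ky Fan's principle at each threshold. Your identity $\sum_j(\mu_j-\mu_k)(c_j-\mathbbm 1[j\le k])\le 0$ then proves Ky Fan and its equality case at once.

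In the converse you correctly use only thresholds with $\lambda_k>\lambda_{k+1}$. There $P_k$ is a canonical spectral projector of $X$, so nothing depends on the choice of $U$. Equality at such a threshold gives $\mathrm{range}(P_k)=E_{>}\oplus F$, where $E_{>}$ is the span of $Y$-eigenvectors with eigenvalue $>\mu_k$ and $F$ is a subspace of the $\mu_k$-eigenspace. This yields $Y$-invariance and hence commutation. Your multiset bookkeeping then does show that sorting within the $X$-eigenspaces produces the globally sorted $\lambda(Y)$.

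The classical proof writes $\langle X,Y\rangle_F=\sum_{i,j}\lambda_i\mu_j(u_i^\trans v_j)^2$, notes that this matrix is doubly stochastic, and applies Birkhoff's theorem and the rearrangement inequality. Your route makes the equality case much more transparent. That case is exactly what the paper uses in Step 4 of the proof of \Cref{lem:efficient implementation}, to conclude that the projected iterate shares an eigenbasis with $\tilde Y_{t+1,m}$.

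Your conclusion also has the right form: a common \emph{ordered} eigenbasis exists. This is weaker than the ``if and only if $U=U_{t+1,m}$'' phrasing used in Step 4, which fails when eigenvalues repeat. The citation buys brevity; your proof buys an elementary, checkable argument for the one piece the algorithmic analysis depends on.
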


A vector function $f\colon \mathbb R^n\to \mathbb R\cup \{\infty\}$ is symmetric if $f(x)$ is invariant under any permutations of the components of $x$. Any such function can be lifted to $\mathbb S^n$ as a $F\colon \mathbb S^n\to \mathbb R\cup \{\infty\}$, defined as
\begin{equation*}
F(X):=f(\lambda(X)),\quad \forall X=U\diag(\lambda(X))U^\trans\in \mathbb S^n.
\end{equation*}

\begin{lemma}[\citealt{kakade2012regularization}, Theorem 28]\label{lem:dual matrix func}
Let $F\colon \mathbb S^n\to \mathbb R\cup \{\infty\}$ be lifted from a symmetric $f\colon \mathbb R^n\to \mathbb R\cup \{\infty\}$.
The Fenchel dual of $F$, namely $F^\ast$, is equal to the Fenchel dual of $f$ lifted to $\mathbb S^n$.
\end{lemma}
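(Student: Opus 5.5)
The plan is to prove \Cref{lem:dual matrix func} directly from the definition of the Fenchel dual, using the trace inequality (\Cref{lem:vnm trace}) for one direction and a diagonal construction for the other. Let $F(X)=f(\lambda(X))$ with $f$ symmetric, and let $G:=f^\ast\circ\lambda$ denote the lift of $f^\ast$. First we check that $G$ is well defined: $f^\ast$ is itself symmetric, because for any permutation $P$ the substitution $x\mapsto Px$ in $f^\ast(Pz)=\sup_x(\langle x,Pz\rangle-f(x))$ returns $f^\ast(z)$, using $f(Px)=f(x)$. So the lift is meaningful, and the goal is to show $F^\ast(Z)=f^\ast(\lambda(Z))$ for every $Z\in\mathbb S^n$.

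\emph{Upper bound.} Fix $Z$. For any $X\in\mathbb S^n$, \Cref{lem:vnm trace} gives $\langle X,Z\rangle_F\le\langle\lambda(X),\lambda(Z)\rangle$. Hence
\[
\langle X,Z\rangle_F-F(X)\le\langle\lambda(X),\lambda(Z)\rangle-f(\lambda(X))\le f^\ast(\lambda(Z)).
\]
Taking the supremum over $X$ yields $F^\ast(Z)\le f^\ast(\lambda(Z))$. The value $+\infty$ of $F$ off its domain causes no trouble, since such $X$ contribute $-\infty$ on both sides.

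\emph{Lower bound.} Write $Z=U\diag(\lambda(Z))U^\trans$ with $U$ orthogonal. For an arbitrary $x\in\mathbb R^n$, set $X:=U\diag(x)U^\trans$. Its eigenvalues are a permutation of $x$, so symmetry gives $F(X)=f(x)$. Moreover, $\langle X,Z\rangle_F=\tr(\diag(x)\diag(\lambda(Z)))=\langle x,\lambda(Z)\rangle$. Therefore $F^\ast(Z)\ge\langle x,\lambda(Z)\rangle-f(x)$ for every $x$, and taking the supremum over $x$ gives $F^\ast(Z)\ge f^\ast(\lambda(Z))$.

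The main obstacle is essentially bookkeeping. The first point is that the ordering convention, where $\lambda(\cdot)$ is sorted non-increasingly, matters in the upper bound: the trace inequality pairs the sorted spectra, and $f$ is evaluated on the sorted vector $\lambda(X)$, which is legitimate only because $f$ is symmetric. The second point is the lower-bound construction, where the unsorted vector $x$ is placed in $U$'s eigenbasis. The identity $F(U\diag(x)U^\trans)=f(x)$ again uses symmetry. Neither step needs convexity or lower semicontinuity of $f$, so the statement holds for any symmetric extended-real-valued $f$.
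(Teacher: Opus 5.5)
Your proof is correct and complete. \Cref{lem:vnm trace} gives $F^\ast(Z)\le f^\ast(\lambda(Z))$. Testing the supremum that defines $F^\ast(Z)$ on $X=U\diag(x)U^\trans$, where $U$ diagonalizes $Z$, gives the reverse inequality.

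The paper does not prove this lemma. It lists it among facts ``stated without proof'' and imports it from \citet{kakade2012regularization}, so there is no in-paper argument to compare against. What you have written is the classical argument for this conjugacy formula, going back to Lewis's work on spectral functions. It makes the appendix self-contained, using only the trace inequality the paper already records. It also shows explicitly that no convexity or closedness of $f$ is needed.

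One small refinement to your bookkeeping remark: the upper bound does not use symmetry at all, since $F=f\circ\lambda$ by definition. Symmetry of $f$ is used in two places only. The first is the lower bound, to get $F(U\diag(x)U^\trans)=f(x)$ for unsorted $x$. The second is checking that $f^\ast$ is symmetric, so that its lift is meaningful.
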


\begin{lemma}[\citealt{kakade2012regularization}, Theorem 30]\label{lem:gradient matrix func}
Let $F\colon \mathbb S^n\to \mathbb R\cup \{\infty\}$ be lifted from a symmetric $f\colon \mathbb R^n\to \mathbb R\cup \{\infty\}$. The gradient of $F$ is given as
\begin{equation*}
\nabla F(X)=U \diag\bigl (\nabla f(\lambda(X))\bigr )U^\trans,\quad \forall X=U\diag(\lambda(X))U^\trans\in \mathbb S^n.
\end{equation*}
\end{lemma}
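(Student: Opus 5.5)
The plan is to prove the formula by convex duality, which covers everything the paper needs: it applies this lemma only to the hypentropy $\Phi_\beta$ and related convex spectral functions. I will therefore assume $f$ is convex, closed and differentiable on the relevant domain. The goal is to show $\partial F(X)=\{U\diag(\nabla f(\lambda(X)))U^\trans\}$. Convexity together with a singleton subdifferential then gives differentiability of $F$ with exactly this gradient.

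\textbf{Step 1 (characterize $\partial F$).} I use the standard characterization
\[
Y\in\partial F(X)\iff F(X)+F^\ast(Y)=\langle X,Y\rangle_F .
\]
By \Cref{lem:dual matrix func}, $F^\ast(Y)=f^\ast(\lambda(Y))$. Chaining the Fenchel--Young inequality for $f$ with \Cref{lem:vnm trace} gives
\[
F(X)+F^\ast(Y)=f(\lambda(X))+f^\ast(\lambda(Y))\ge\langle\lambda(X),\lambda(Y)\rangle\ge\langle X,Y\rangle_F .
\]
So $Y\in\partial F(X)$ exactly when both inequalities are tight. Tightness of the first means $\lambda(Y)\in\partial f(\lambda(X))=\{\nabla f(\lambda(X))\}$. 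Tightness of the second means, by the equality case of \Cref{lem:vnm trace}, that some orthogonal $U$ satisfies $X=U\diag(\lambda(X))U^\trans$ and $Y=U\diag(\lambda(Y))U^\trans$ simultaneously. Hence every subgradient has the form $U\diag(\nabla f(\lambda(X)))U^\trans$ for some $U$ that diagonalizes $X$ in sorted order.

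\textbf{Step 2 (nonemptiness and the ordering issue).} Next I check that the candidate $G=U\diag(\nabla f(\lambda(X)))U^\trans$ actually achieves equality, so that $\partial F(X)\neq\emptyset$. The subtle point is that \Cref{lem:vnm trace} pairs sorted eigenvalues, so $\nabla f(\lambda(X))$ must itself be sorted non-increasingly.
\begin{itemize}
\item Symmetry and convexity of $f$ imply this. If $\lambda_i>\lambda_j$, let $x'$ be $x=\lambda(X)$ with coordinates $i$ and $j$ swapped. Then $f(x')=f(x)$, so $\langle\nabla f(x),x'-x\rangle\le 0$, which gives $(\partial_i f-\partial_j f)(\lambda_i-\lambda_j)\ge0$.
\item Monotonicity of the gradient gives the same conclusion, since $\nabla f(x')$ is the permuted $\nabla f(x)$.
\end{itemize}
With sorting in hand, $\langle X,G\rangle_F=\langle\lambda(X),\nabla f(\lambda(X))\rangle$, and Fenchel--Young is tight at the gradient, so $G\in\partial F(X)$.

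\textbf{Step 3 (well-definedness, the main obstacle).} The eigenbasis $U$ is not unique when $X$ has repeated eigenvalues, so I must show $G$ does not depend on which diagonalizing $U$ is chosen.
\begin{itemize}
\item Permutation invariance gives $f(x)=f(Px)$ for every permutation $P$. Differentiating, $\nabla f(x)=P^\trans\nabla f(Px)$.
\item Taking $P$ to be the transposition of $i$ and $j$ with $\lambda_i=\lambda_j$ yields $\partial_i f(\lambda)=\partial_j f(\lambda)$.
\item Hence $\diag(\nabla f(\lambda(X)))$ is constant on each eigenspace block of $X$. So $G$ equals $\sum_k c_k P_k$, where $P_k$ are the spectral projectors of $X$ and $c_k$ the common gradient value on block $k$. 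These projectors are intrinsic to $X$.
\end{itemize}
Thus $\partial F(X)$ is a singleton and $\nabla F(X)=G$, which is the formula in the lemma.

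If the nonconvex case were needed, I would instead adapt Lewis's perturbation argument. That argument shows $F(X+H)-F(X)-\langle G,H\rangle_F=o(\lVert H\rVert)$ using Lipschitz continuity of $\lambda(\cdot)$ and first-order eigenvalue perturbation within eigenspace blocks. I expect that block-wise perturbation bookkeeping to be the hard part there, but it is unnecessary for hypentropy.
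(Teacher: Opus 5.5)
Your argument is correct under the hypotheses you impose, and it takes a genuinely different route from the paper. The paper gives no proof and simply cites \citet{kakade2012regularization}. The lemma as stated there, with no convexity hypothesis, is Lewis's theorem on derivatives of spectral functions, which holds for every symmetric $f$ differentiable at $\lambda(X)$, convex or not.

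Your duality argument needs convexity in an essential way. For a nonconvex $f$, the convex subdifferential $\partial f(\lambda(X))$ can be empty even where $\nabla f$ exists, and then Step~1 says nothing. So strictly you prove a special case of the stated lemma. That special case is exactly what the paper uses: the lemma is only applied to $\Phi_\beta$, whose generator $\phi_\beta$ is finite on all of $\mathbb R$ with $\phi_\beta''(x)=(x^2+\beta^2)^{-1/2}>0$.

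In exchange, your proof is self-contained given \Cref{lem:dual matrix func} and \Cref{lem:vnm trace}, which the appendix already states. Step~1 in fact characterizes the full subdifferential: $Y\in\partial F(X)$ iff $\lambda(Y)\in\partial f(\lambda(X))$ and $X,Y$ share an ordered eigenbasis. That characterization would also cover nonsmooth convex spectral functions such as the nuclear norm.

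Two small points.
\begin{itemize}
\item Your closing step, ``singleton subdifferential implies differentiability,'' requires $F$ itself to be convex, and you should say why. For example, applying \Cref{lem:dual matrix func} twice gives $F^{\ast\ast}=f^{\ast\ast}\circ\lambda=f\circ\lambda=F$; this is where closedness of $f$ is used.
\item The ordering argument in Step~2 is correct but not needed. The identity $\langle X,G\rangle_F=\langle\lambda(X),\nabla f(\lambda(X))\rangle$ already holds because $X$ and $G$ share the eigenbasis $U$. Symmetry of $f^\ast$ then gives $F^\ast(G)=f^\ast(\lambda(G))=f^\ast(\nabla f(\lambda(X)))$, whatever the ordering of $\nabla f(\lambda(X))$.
\end{itemize}
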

After defining gradients, we can define the Bregman divergence for any $F\colon \mathbb S^n\to \mathbb R\cup \{\infty\}$:
\begin{equation}\label{eq:Bregman divergence}
D_F(X\|Y):=F(X)-F(Y)-\langle \nabla F(Y),X-F\rangle_F,\quad \forall X,Y\in \mathbb S^n.
\end{equation}

\paragraph{Hypentropy Properties.}
We now move on to the $\beta$-Hypentropy function $\Phi_\beta\colon \mathbb S^n\to \mathbb R$ used in our HPG algorithm, which is defined as \citep{ghai2020exponentiated}:
\begin{equation*}
\Phi_\beta(X):=\sum_{i=1}^n \left (\lambda_i\arcsinh \frac{\lambda_i}{\beta}-\sqrt{\lambda_i^2+\beta^2}\right ),\quad \forall X\in \mathbb S^n,
\end{equation*}
where $\arcsinh(x)=\ln(x+\sqrt{x^2+1})$. It is lifted from the following vector function $\phi_\beta\colon \mathbb R^n\to \mathbb R$:
\begin{equation*}
\phi_\beta(x):=\sum_{i=1}^n \left (x_i\arcsinh \frac{x_i}{\beta}-\sqrt{x_i^2+\beta^2}\right ),\quad \forall x\in \mathbb R^n.
\end{equation*}

We have $\frac{\partial}{\partial x_i} \phi_\beta(x)=\arcsinh \frac{x_i}{\beta}$. According to \Cref{lem:gradient matrix func}, for any $X\in \mathbb S^n$, we have
\begin{equation}\label{eq:hypentropy gradient}
\nabla \Phi_\beta(X)=U\diag\left (\arcsinh \frac{\lambda(X)}{\beta}\right )U^\trans,\quad \forall X=U\diag(\lambda(X)) U^\trans \in \mathbb S^n.
\end{equation}
Plugging \Cref{eq:hypentropy gradient} into \Cref{eq:Bregman divergence} defines the Bregman divergence induced by $\Phi_\beta$, namely $D_\Phi^\beta$.

Finally, we include the following lemma, which ensures the hypentropy in $\mathbb S^n$ is strongly convex w.r.t.\ the nuclear norm $\lVert \cdot\rVert_\ast$ over any nuclear-norm ball.
\begin{lemma}[{\citealt{ghai2020exponentiated}, Theorem 14}]\label{lem:strongly convex}
On the nuclear-norm ball $\{W\in \mathbb S^n\mid \lVert W\rVert_\ast\le \tau\}$, $\Phi_\beta$ is $\bigl (2(\tau+\beta n)\bigr )^{-1}$-strongly convex w.r.t.\ the nuclear norm $\lVert \cdot\rVert_\ast$. That is,
\begin{equation*}
\Phi_\beta(X)-\Phi_\beta(Y)-\nabla \Phi_\beta(Y)~(X-Y)\ge \frac{\bigl (2(\tau+\beta n)\bigr )^{-1}}{2} \lVert X-Y\rVert_\ast^2,~~\forall X,Y\in \mathbb B_\ast(\tau).
\end{equation*}
\end{lemma}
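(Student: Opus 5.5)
We prove the bound directly in matrix form. The plan is to compute the second directional derivative of $\Phi_\beta$ with the Daleckii--Krein formula, lower-bound it by $\lVert H\rVert_\ast^2$ using a weighted Cauchy--Schwarz argument, and then integrate along the segment from $Y$ to $X$. Fix $X,Y\in\mathbb B_\ast(\tau)$, set $H:=X-Y$ and $Z_t:=Y+tH$ for $t\in[0,1]$. Because the nuclear-norm ball is convex, $Z_t\in\mathbb B_\ast(\tau)$ for every $t$. Taylor's formula with integral remainder gives
\begin{equation*}
\Phi_\beta(X)-\Phi_\beta(Y)-\langle\nabla\Phi_\beta(Y),H\rangle_F=\int_0^1(1-t)\,D^2\Phi_\beta(Z_t)[H,H]\,dt .
\end{equation*}
Since $\int_0^1(1-t)\,dt=\tfrac12$, it suffices to show that for every $Z\in\mathbb B_\ast(\tau)$,
\begin{equation*}
D^2\Phi_\beta(Z)[H,H]\ge \frac{\lVert H\rVert_\ast^2}{2(\tau+\beta n)}.
\end{equation*}

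\textbf{Hessian formula.} Write $Z=U\diag(\lambda)U^\trans$ and $\tilde H:=U^\trans HU$; conjugation by $U$ leaves the nuclear norm unchanged. By \Cref{eq:hypentropy gradient}, $\nabla\Phi_\beta$ is the spectral function induced by $g(x)=\arcsinh(x/\beta)$. The Daleckii--Krein formula then gives
\begin{equation*}
D^2\Phi_\beta(Z)[H,H]=\sum_{i,j}\tilde H_{ij}^2\,g^{[1]}(\lambda_i,\lambda_j),
\end{equation*}
where $g^{[1]}(a,b)=\frac{g(a)-g(b)}{a-b}$ for $a\neq b$ and $g^{[1]}(a,a)=g'(a)$. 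Let $s_i:=\sqrt{\lambda_i^2+\beta^2}$. By the mean value theorem, $g^{[1]}(\lambda_i,\lambda_j)=1/\sqrt{c^2+\beta^2}$ for some $c$ between $\lambda_i$ and $\lambda_j$. Hence
\begin{equation*}
g^{[1]}(\lambda_i,\lambda_j)\ge \frac{1}{\max(s_i,s_j)}\ge\frac{1}{s_i+s_j},
\end{equation*}
and therefore $D^2\Phi_\beta(Z)[H,H]\ge\sum_{i,j}\tilde H_{ij}^2/(s_i+s_j)$.

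\textbf{Relating to the nuclear norm.} This is the key step, and it uses duality between the nuclear and spectral norms. Choose $Q$ with $\lVert Q\rVert_2\le1$ and $\lVert\tilde H\rVert_\ast=\langle\tilde H,Q\rangle_F$. Cauchy--Schwarz with weights $s_i+s_j$ gives
\begin{equation*}
\lVert\tilde H\rVert_\ast^2\le\Bigl(\sum_{i,j}\frac{\tilde H_{ij}^2}{s_i+s_j}\Bigr)\Bigl(\sum_{i,j}Q_{ij}^2(s_i+s_j)\Bigr).
\end{equation*}
The second factor equals $\sum_i s_i\bigl[(QQ^\trans)_{ii}+(Q^\trans Q)_{ii}\bigr]$. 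Since $\lVert Q\rVert_2\le1$, both $(QQ^\trans)_{ii}$ and $(Q^\trans Q)_{ii}$ are at most $1$, so the factor is at most $2\sum_i s_i$. Finally, $s_i\le|\lambda_i|+\beta$, so $\sum_i s_i\le\lVert Z\rVert_\ast+n\beta\le\tau+n\beta$. Combining,
\begin{equation*}
D^2\Phi_\beta(Z)[H,H]\ge \frac{\lVert H\rVert_\ast^2}{2(\tau+n\beta)}.
\end{equation*}
Plugging this into the integral and using $\int_0^1(1-t)\,dt=\tfrac12$ yields exactly $\frac{(2(\tau+\beta n))^{-1}}{2}\lVert X-Y\rVert_\ast^2$, which is the claimed bound.

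\textbf{Main obstacle.} The delicate point is the middle step. One cannot transfer the easy vector-case bound $v^\trans\nabla^2\phi_\beta(x)v\ge\lVert v\rVert_1^2/(\tau+n\beta)$, proved by Cauchy--Schwarz on the diagonal Hessian, directly to matrices, because off-diagonal entries of $\tilde H$ are weighted by divided differences rather than derivatives. A geometric-mean weight $1/\sqrt{s_is_j}$ would fail, for example when $\lambda_i=0$ and $|\lambda_j|\gg\beta$. The arithmetic weight $1/(s_i+s_j)$ avoids this, and the factor $2$ it costs is exactly the factor $2$ appearing in the modulus.
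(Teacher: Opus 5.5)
Your proof is correct. The steps hold as written:
\begin{itemize}
\item the Daleckii--Krein expression $\sum_{i,j}\tilde H_{ij}^2\,g^{[1]}(\lambda_i,\lambda_j)$;
\item the bound $g^{[1]}(\lambda_i,\lambda_j)\ge 1/\max(s_i,s_j)$, which holds because $1/\sqrt{x^2+\beta^2}$ decreases in $\lvert x\rvert$ and the mean-value point satisfies $\lvert c\rvert\le\max(\lvert\lambda_i\rvert,\lvert\lambda_j\rvert)$;
\item the weighted Cauchy--Schwarz against a dual certificate $Q$ with $(QQ^\trans)_{ii},(Q^\trans Q)_{ii}\le 1$;
\item the integral remainder with $\int_0^1(1-t)\,dt=\tfrac12$.
\end{itemize}
Together these give exactly $\frac{1}{4(\tau+\beta n)}\lVert X-Y\rVert_\ast^2$.

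The paper does not prove this lemma. It imports it verbatim from Ghai et al.\ (2020, Theorem~14), where the spectral hypentropy is defined for general rectangular matrices, with the smaller of the two dimensions playing the role of $n$. The citation transfers to the paper's eigenvalue-based definition on $\mathbb S^n$ because $\phi_\beta$ is even. Hence applying $\phi_\beta$ to the eigenvalues $\lambda_i$ or to the singular values $\lvert\lambda_i\rvert$ of a symmetric matrix gives the same function and the same nuclear norm. Your argument avoids this translation by working with eigenvalues of $\mathbb S^n$ directly.

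What your route buys is self-containedness and transparency. It shows that the only loss relative to the vector bound $\lVert v\rVert_1^2/(\tau+n\beta)$ is a factor of $2$. That factor comes from bounding the off-diagonal divided differences below by $1/(s_i+s_j)$ and then paying for both the row and column sums of $Q$. It also uses nothing beyond the first-order perturbation formula for spectral functions. The citation buys brevity and generality to rectangular matrices, which the paper does not need.
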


\subsection{Direct Sum Property}\label{sec:direct sum appendix}
\begin{lemma}[Direct Sum Property]\label{lem:direct sum}
For any diagonal block matrix $\bm W=\diag(W_1,W_2,\ldots,W_M)$,
\begin{equation*}
\Phi_\beta(\bm W)=\sum_{m=1}^M \Phi_\beta(W_m),~\lVert \bm W\rVert_\ast=\sum_{m=1}^M \lVert W_m\rVert_\ast,~\lVert \bm W\rVert_2=\max_{m\in [M]} \lVert W_m\rVert_2,
\end{equation*}
which implies $D_\Phi^\beta(\bm W\|\bm W')=\sum_{m=1}^M D_\Phi^\beta(W_m\|W_m')$. Moreover, over the product nuclear-norm ball $\mathbb B_\ast^M(\tau):=\bigl\{\bm W\in \mW \mathrel{\big \vert} \lVert W_m\rVert_\ast\le \tau,\forall m\bigr \}$, $\Phi_\beta$ is block-wise $\bigl (2(\tau+\beta d_q)\bigr )^{-1}$-strongly convex.
\end{lemma}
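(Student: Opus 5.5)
The whole argument rests on one fact: the spectrum of a block-diagonal matrix is the multiset union of the spectra of its blocks. Concretely, if $W_m=U_m\diag(\lambda(W_m))U_m^\trans$ for each $m$, then
\begin{equation*}
\bm W=\diag(U_1,\ldots,U_M)\,\diag\bigl(\lambda(W_1),\ldots,\lambda(W_M)\bigr)\,\diag(U_1,\ldots,U_M)^\trans .
\end{equation*}
Here $\diag(U_1,\ldots,U_M)$ is orthogonal, so up to reordering the eigenvalues of $\bm W$ are the concatenation of the $\lambda(W_m)$'s. The plan is to read off each claimed identity from this decomposition.

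\textbf{Step 1: the three identities.} Since $\Phi_\beta$ is a sum of the scalar function $x\arcsinh(x/\beta)-\sqrt{x^2+\beta^2}$ over eigenvalues, and the sum is invariant to reordering, we get $\Phi_\beta(\bm W)=\sum_m\Phi_\beta(W_m)$. The same reasoning applies to $\lVert\cdot\rVert_\ast=\sum_i\lvert\lambda_i\rvert$. For $\lVert\cdot\rVert_2=\max_i\lvert\lambda_i\rvert$, the maximum over the union of spectra is the maximum over $m$ of the per-block maxima.

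\textbf{Step 2: the Bregman divergence.} By \Cref{lem:gradient matrix func} together with \Cref{eq:hypentropy gradient}, and using the block eigendecomposition above,
\begin{equation*}
\nabla\Phi_\beta(\bm W')=\diag\bigl(\nabla\Phi_\beta(W_1'),\ldots,\nabla\Phi_\beta(W_M')\bigr).
\end{equation*}
The Frobenius inner product of two block-diagonal matrices is the sum of the blockwise inner products. Substituting this and the identity from Step 1 into the definition \Cref{eq:Bregman divergence} gives $D_\Phi^\beta(\bm W\|\bm W')=\sum_m D_\Phi^\beta(W_m\|W_m')$.

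\textbf{Step 3: strong convexity.} Write $\alpha=(2(\tau+\beta d_q))^{-1}$. For $\bm W,\bm W'\in\mathbb B_\ast^M(\tau)$, each pair $W_m,W_m'$ lies in the single ball $\{W\in\mathbb S^{d_q}\mid\lVert W\rVert_\ast\le\tau\}$. Applying \Cref{lem:strongly convex} with $n=d_q$ to each block gives
\begin{equation*}
D_\Phi^\beta(W_m\|W_m')\ge\frac{\alpha}{2}\lVert W_m-W_m'\rVert_\ast^2 .
\end{equation*}
Summing over $m$ and using Step 2,
\begin{equation*}
D_\Phi^\beta(\bm W\|\bm W')\ge\frac{\alpha}{2}\sum_m\lVert W_m-W_m'\rVert_\ast^2 .
\end{equation*}
This is the block-wise strong convexity that is claimed.

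Only one point needs care. Applying \Cref{lem:strongly convex} to $\bm W$ as a whole would place it in a nuclear ball of radius $M\tau$ in dimension $Md_q$, and would yield the weaker constant $(2M(\tau+\beta d_q))^{-1}$. Working blockwise avoids this loss and gives the dimension-free constant stated in the lemma.
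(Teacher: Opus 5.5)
Your proposal is correct and follows the paper's proof essentially step for step. The spectrum of $\bm W$ is the union of the block spectra; by \Cref{lem:gradient matrix func} the gradient of $\Phi_\beta$ at a block-diagonal matrix is block-diagonal, so the Bregman divergence splits; and \Cref{lem:strongly convex} is applied to each block with $n=d_q$. The differences are cosmetic: you sum the first-order per-block inequalities where the paper appeals to a block-diagonal Hessian, and you use the correct scalar hypentropy $x\arcsinh(x/\beta)-\sqrt{x^2+\beta^2}$, whereas the paper's displayed computation writes $(\lambda+\beta)\log(\lambda+\beta)$ by mistake.
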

\begin{proof}
Let $\lambda(X)$ denote the multiset of eigenvalues of a symmetric matrix $X$. Since $\bm W = \diag(W_1, W_2, \dots, W_M)\in \mW$ is a block-diagonal symmetric matrix, its eigenvalues are exactly the union of the eigenvalues of its diagonal blocks, i.e., $\lambda(\bm W) = \bigcup_{m=1}^M \lambda(W_m)$.

Because the nuclear norm $\lVert \cdot \rVert_\ast$, the spectral norm $\lVert \cdot \rVert_2$, and the hypentropy potential $\Phi_\beta$ are all spectral functions (i.e., defined w.r.t.\ eigenvalues), they can be decomposed into blocks:
\begin{align*}
\lVert \bm W\rVert_\ast &= \sum_{\lambda \in \lambda(\bm W)} |\lambda| = \sum_{m=1}^M \sum_{\lambda \in \lambda(W_m)} |\lambda| = \sum_{m=1}^M \lVert W_m\rVert_\ast, \\
\lVert \bm W\rVert_2 &= \max_{\lambda \in \lambda(\bm W)} |\lambda| = \max_{m\in [M]} \max_{\lambda \in \lambda(W_m)} |\lambda| = \max_{m\in [M]} \lVert W_m\rVert_2, \\
\Phi_\beta(\bm W) &= \sum_{\lambda \in \lambda(\bm W)} (\lambda+\beta)\log(\lambda+\beta) = \sum_{m=1}^M \sum_{\lambda \in \lambda(W_m)} (\lambda+\beta)\log(\lambda+\beta) = \sum_{m=1}^M \Phi_\beta(W_m).
\end{align*}

From \Cref{lem:gradient matrix func}, the gradient of a spectral function evaluated at a block-diagonal matrix remains block-diagonal, i.e., $\nabla \Phi_\beta(\bm W') = \diag\bigl(\nabla \Phi_\beta(W_1'), \dots, \nabla \Phi_\beta(W_M')\bigr)$. Further using the additivity of $\Phi_\beta$ and the fact that the Frobenius inner product also decomposes into the sum of block-wise inner products, the Bregman divergence decomposes as:
\begin{align*}
D_\Phi^\beta(\bm W\|\bm W') &= \Phi_\beta(\bm W) - \Phi_\beta(\bm W') - \langle \nabla \Phi_\beta(\bm W'), \bm W - \bm W' \rangle_F \\
&= \sum_{m=1}^M \left( \Phi_\beta(W_m) - \Phi_\beta(W_m') - \langle \nabla \Phi_\beta(W_m'), W_m - W_m' \rangle_F \right) = \sum_{m=1}^M D_\Phi^\beta(W_m\|W_m').
\end{align*}

From \Cref{lem:strongly convex}, over the $d_q$-dimensional nuclear-norm ball $\mathbb B_\ast(\tau)$, the hypentropy potential $\Phi_\beta$ is $\bigl (2(\tau+\beta d_q)\bigr )^{-1}$-strongly convex. For any $\bm W \in \mathbb B_\ast^M(\tau)$, we have $W_m \in \mathbb B_\ast(\tau)$ for all $m \in [M]$. Since $\Phi_\beta(\bm W) = \sum_{m=1}^M \Phi_\beta(W_m)$, the Hessian $\nabla^2 \Phi_\beta(\bm W)$ is block-diagonal where each block $\nabla^2 \Phi_\beta(W_m)$ satisfies $\alpha$-strong convexity. Thus, $\Phi_\beta$ is block-wise $\alpha$-strongly convex over $\mathbb B_\ast^M(\tau)$.
\end{proof}

\subsection{Gradient Estimation (\Cref{lem:REINFORCE})}\label{sec:REINFORCE appendix}

\begin{proof}[Proof of \Cref{lem:REINFORCE}]
Recall that the round-$t$ loss function for parameter $\bm W$, namely $\mathcal L_t(\bm W)$, is
\begin{equation*}
\mathcal L_t(\bm W)=-\E_{m'\sim \pi_{\bm W}(q_t)}\left [\E_{a\sim \xi_{m'}(q_t)}[r(q_t,a)]\right ],\quad \forall t\in [T],\bm W\in \mW,
\end{equation*}
where $\pi_{\bm W}$ is the log-quadratic policy induced by $\bm W$ (see \Cref{eq:quadratic policy}). We intentionally changed all $m$'s in the original definition to $m'$, so that we can take the partial derivative w.r.t.\ any $m\in [M]$ as:
\begin{equation*}
\frac{\partial}{\partial W_{m}}\mathcal L_t(\bm W)=-\sum_{m'=1}^M \left (\E_{a\sim \xi_{m'}(q_t)}[r(q_t,a)]~\frac{\partial}{\partial W_{m}} \pi_{\bm W}(m'\mid q_t)\right ).
\end{equation*}
By definition of $\pi_{\bm W}$ in \Cref{eq:quadratic policy}, we have by the definition of softmax that
\begin{equation*}
\frac{\partial}{\partial W_{m}} \pi_{\bm W}(m'\mid q_t)=\frac{\partial}{\partial W_{m}} \frac{\exp(q_t^\trans W_{m'} q_t)}{\sum_{m''} \exp(q_t^\trans W_{m''}q_t)}=\pi_{\bm W}(m'\mid q_t) \left (\mathbbm{1}[m'=m]-\pi_{\bm W}(m\mid q_t)\right ) q_tq_t^\trans.
\end{equation*}
Therefore, we have for any $m\in [M]$ that
\begin{equation}\label{eq:true gradient of loss}
\frac{\partial}{\partial W_{m}}\mathcal L_t(\bm W)=-\E_{m'\sim \pi_{\bm W}(q_t)} \left [\E_{a\sim \xi_{m'}(q_t)}[r(q_t,a)]~\left (\mathbbm{1}[m'=m]-\pi_{\bm W}(m\mid q_t)\right ) q_tq_t^\trans\right ].
\end{equation}

Hence conditional on the history before round $t$ and the round-$t$ query $q_t$, the expectation of $\hat G_{t,m}$ in \Cref{eq:REINFORCE} -- taken w.r.t.\ $m_t\sim \pi_{\bm W_t}(q_t)$, $a_t\sim \xi_{m_t}(q_t)$, and $r_t=r(q_t,a_t)+\eta_t$ -- is given as
\begin{align*}
&\quad \E\nolimits_t[\hat G_{t,m}\mid q_t]=\E\nolimits_t\Bigl [-r_t \bigl (\mathbbm{1}[m_t=m]-\pi_{\bm W_t}(m\mid q_t)\bigr )q_tq_t^\trans\mathrel{\Big \vert} q_t\Bigr ]\\
&=\E_{m_t\sim \pi_{\bm W_t}(q_t)}\left [\E_{a_t\sim \xi_{m_t}(q_t)} \left [\E_{\eta_t} \Bigl [-r_t \bigl (\mathbbm{1}[m_t=m]-\pi_{\bm W_t}(m\mid q_t)\bigr )q_tq_t^\trans\mathrel{\Big \vert} q_t,m_t,a_t \Bigr]\mathrel{\Big \vert} q_t,m_t\right ]\right ] \\
&=\E_{m_t\sim \pi_{\bm W_t}(q_t)}\left [\E_{a_t\sim \xi_{m_t}(q_t)} \left [-r(q_t,a_t) \bigl (\mathbbm{1}[m_t=m]-\pi_{\bm W_t}(m\mid q_t)\bigr )q_tq_t^\trans\mathrel{\Big \vert} q_t,m_t\right ]\right ] \\
&=\E_{m_t\sim \pi_{\bm W_t}(q_t)}\left [-\E_{a_t\sim \xi_{m_t}(q_t)} [r(q_t,a_t)]\bigl (\mathbbm{1}[m_t=m]-\pi_{\bm W_t}(m\mid q_t)\bigr )q_tq_t^\trans \right ],
\end{align*}
which is exactly the RHS of \Cref{eq:true gradient of loss}.
Thus we have $\E_t[\hat{\bm G}_t\mid q_t]=\nabla_{\bm W} \mathcal L(\bm W_t)$ unbiased. Furthermore,
\begin{align*}
\sum_{m=1}^M \lVert \hat G_{t,m}\rVert_{2}^2&=r_t^2 \sum_{m=1}^M \bigl (\mathbbm 1[m_t=m]-\pi_{\bm W_t}(m\mid q_t)\bigr )^2 \lVert q_t\rVert_2^2\\&\le (1-\pi_{\bm W_t}(m_t\mid q_t))^2+\sum_{m=1}^M \pi_{\bm W_t}^2(m\mid q_t)\le 2,
\end{align*}
where the first inequality uses $\lvert r_t\rvert\le 1$ and $\lVert q_t\rVert_2\le 1$, and the second inequality uses the fact that $\pi_{\bm W_t}(q_t)\in \triangle$ is a probability distribution (hence $\sum_m \pi_{\bm W_t}^2(m\mid q_t)\le \sum_m \pi_{\bm W_t}(m\mid q_t)=1$).
\end{proof}

\subsection{Linearized Policy Regret Bound of HPG (\Cref{thm:main theorem})}\label{sec:main theorem appendix}
\begin{proof}[Proof of \Cref{thm:main theorem}]
According to the assumption that the optimal $\bm W^\ast\in \mW$ ensures $\lVert W_m^\ast\rVert_\ast\le \tau$, $\forall m$, we have $\bm W^\ast\in \mathbb B_\ast^M(\tau)$ (recall \Cref{eq:product nuclear norm ball}). Recall the OMD one-step update rule in \Cref{eq:OMD}:
\begin{equation*}
\bm W_{t+1}=\argmin_{\bm W\in \mathbb B_\ast^M(\tau)}\Bigl (\eta \bigl \langle \bm W,\hat{\bm G}_t\bigr \rangle_F +D_\Phi^\beta(\bm W\|\bm W_t)\Bigr ).
\end{equation*}
Using the direct sum property \Cref{lem:direct sum} later proved in \Cref{sec:direct sum appendix}, this is equivalent to
\begin{equation}\label{eq:OMD one step}
W_{t+1,m}=\argmin_{W\in \mathbb B_\ast(\tau)} \Bigl (\eta \bigl \langle W,\hat G_{t,m}\bigr \rangle_F +D_\Phi^\beta(W\|W_{t,m})\Bigr ),\quad \forall m\in [M],
\end{equation}
where we recall that $\mathbb B_\ast(\tau):=\{W\in \mathbb S^{d_1}\mid \lVert W\rVert_\ast\le \tau\}$ is the nuclear-norm ball in $\mathbb S^{d_1}$.
The rest of the proof follows from standard OMD analysis. Since $\nabla D_\Phi^\beta(W\|W_{t,m})=\nabla \Phi_\beta(W)-\nabla \Phi_\beta(W_{t,m})$ (by definition of $D_\Phi^\beta$ in \Cref{eq:Bregman divergence}), \Cref{eq:OMD one step} gives the first-order optimality condition of
\begin{equation*}
\Bigl \langle \eta \hat G_{t,m} + \nabla \Phi_\beta(W_{t+1,m})-\nabla \Phi_\beta(W_t),W-W_{t+1,m}\Bigr \rangle_F \ge 0,\quad \forall W\in \mathbb B_\ast(\tau).
\end{equation*}
More specifically, for the $m$-th parameter in the optimum, namely $W_m^\ast\in \mathbb B_\ast(\tau)$, we have
\begin{align*}
\langle \eta \hat G_{t,m},W_{t+1,m}-W_m^\ast\rangle_F&\le \Bigl \langle \nabla \Phi_\beta(W_{t+1,m})-\nabla \Phi_\beta(W_t),W-W_{t+1,m}\Bigr \rangle_F\\
&=D_\Phi^\beta(W_m^\ast\|W_{t,m})-D_\Phi^\beta(W_m^\ast\|W_{t+1,m})-D_\Phi^\beta(W_{t+1,m}\|W_{t,m}),
\end{align*}
where the second step is the standard three-point identity \citep[Lemma 6.7]{orabona2019modern}.
Adding $\langle \eta \hat{G}_{t,m},W_{t,m}-W_{t+1,m}\rangle_F$ on both sides and summing up from $t=1,2,\ldots,T$, we have
\begin{align*}
\sum_{t=1}^T \langle\hat G_{t,m},W_{t,m}-W_m^\ast\rangle_F&\le \frac{D_\Phi^\beta(W_m^\ast\|W_{1,m})-D_\Phi^\beta(W_m^\ast\|W_{t+1,m})}{\eta}\\&\quad +\sum_{t=1}^T \frac{\langle \eta \hat{G}_{t,m},W_{t,m}-W_{t+1,m}\rangle_F-D_\Phi^\beta(W_{t+1,m}\|W_{t,m})}{\eta}.
\end{align*}
For the first term on the RHS, since $W_{1,m}$ is initialized as the all-zero matrix (\Cref{alg:OMD}) and Bregman divergences are non-negative for convex functions (cf. \Cref{eq:strongly convex,eq:Bregman divergence}), we have
\begin{align*}
&\quad D_\Phi^\beta(W_m^\ast\|W_{1,m})-D_\Phi^\beta(W_m^\ast\|W_{t+1,m})\le \Phi_\beta(W_m^\ast)-\Phi_\beta(0)+0\\
&=\sum_{i=1}^{d_q} \left (\lambda_i(W_m^\ast)\arcsinh \frac{\lambda_i(W_m^\ast)}{\beta}-\sqrt{\lambda_i(W_m^\ast)^2+\beta^2}+\beta\right )\\
&\le \sum_{i=1}^{d_q} \lvert \lambda_i(W_m^\ast)\rvert \arcsinh \frac{\lvert \lambda_i(W_m^\ast)\rvert }{\beta}\le \lVert W_m\rVert_\ast \log \frac{3 \lVert W_m\rVert_\ast}{\beta},
\end{align*}
where the second to last step uses the fact that $\arcsinh$ is an odd function, and the last step uses $\sqrt{x^2+\beta^2}+x\le \sqrt 2+1$ for $\lvert x\rvert\le 1$ \citep[Eq. (12)]{ghai2020exponentiated} and $\lVert W_m\rVert_\ast:=\sum_{i=1}^{d_q} \vert \lambda_i(W_m^\ast)\rvert$.

For the second term, by Fenchel-Young inequality \citep[Lemma 6.32]{orabona2019modern} and the block-wise $\bigl(2(\tau+\beta d_q)\bigr)^{-1}$-strong convexity of $\Phi_\beta$ (proved in \Cref{lem:direct sum}), we have
\begin{equation*}
\langle \eta \hat{G}_{t,m},W_{t,m}-W_{t+1,m}\rangle_F-D_\Phi^\beta(W_{t+1,m}\|W_{t,m})\le \frac{\lVert \eta \hat G_{t,m}\rVert_2^2}{2\bigl(2(\tau+\beta d_q)\bigr)^{-1}},
\end{equation*}
where $\lVert \cdot\rVert_2$, the spectral norm of a matrix, is the dual norm of the nuclear norm $\lVert \cdot\rVert_\ast$. Hence we have
\begin{equation*}
\sum_{t=1}^T \langle\hat G_{t,m},W_{t,m}-W_m^\ast\rangle_F\le \frac{\lVert W_m^\ast\rVert_\ast}{\eta} \log \frac{3 \lVert W_m^\ast\rVert_\ast}{\beta}+\sum_{t=1}^T (\tau+\beta d_q) \eta \lVert \hat G_{t,m}\rVert_2^2,\quad \forall m\in [M].
\end{equation*}
Summing up $m\in [M]$, using the assumption that $\lVert W_m^\ast\rVert_\ast\le \tau$, and the second part of \Cref{lem:REINFORCE},
\begin{equation}\label{eq:online learning regret bound}
\sum_{t=1}^T \Bigl \langle \hat{\bm G}_t,\bm W_t-\bm W^\ast\Bigr \rangle_F\le \frac{M\tau}\eta \log \frac{3\tau}{\beta} + 2 (\tau+\beta d_q)\eta T.
\end{equation}

By the definition of linearized policy regret in \Cref{eq:linearized policy regret}, setting $\tau=2s$, $\beta=2s/d_q$, and $\eta=\sqrt{(M\log d_q) / T}$ therefore gives
\begin{equation*}
\tilde \mR_T(\Pi_{\text{quad}})=\sup_{\bm W^\ast\in \mW} \left [\sum_{t=1}^T \Bigl \langle \E\nolimits_t[\hat{\bm G}_t\mid q_t],\bm W_t-\bm W^\ast\Bigr \rangle_F\right ]\le 12s \sqrt{MT \log d_q},
\end{equation*}
where we used the first part of \Cref{lem:REINFORCE} and also the assumption that $\lVert W_m^\ast\rVert_\ast\le 2s$, $\forall m$.
\end{proof}

\section{Practical Implementation of HPG}
\subsection{Computational Cost Bound of HPG (\Cref{lem:efficient implementation})}\label{sec:efficient implementation appendix}

\begin{proof}[Proof of \Cref{lem:efficient implementation}]
By the standard property of Bregman divergence, the single-step update of OMD in \Cref{eq:OMD} is equivalent to the following two step procedure \citep[\S6.4.3]{orabona2019modern}:
\begin{align}\label{eq:OMD two step update}
\nabla \Phi_\beta(\tilde{\bm W}_{t+1})=\nabla \Phi_\beta(\bm W_t)-\eta \hat{\bm G}_t,\quad \bm W_{t+1}=\argmin_{\bm W\in \mathbb B_\ast^M(\tau)} D_\Phi^\beta(\bm W\|\tilde{\bm W}_{t+1}).
\end{align}
According to \Cref{lem:direct sum}, both steps can be decomposed for each model. Therefore, we fix a single $m\in [M]$, and suppose we have the eigen-decomposition of $W_{t,m}\in \mathbb S^{d_q}$ available:
\begin{equation}\label{eq:SVD of W_t appendix}
W_{t,m}=U_{t,m}\diag(\lambda_{t,m})U_{t,m}^\trans,\quad \forall t\in [T],
\end{equation}
where $U_{t,m}\in \mathbb R^{d_q\times d_q}$ and $\lambda_{t,m}\in \mathbb R^{d_q}$. We now prove that we can efficiently compute the eigen-decomposition of $W_{t+1,m}$, defined in \Cref{eq:OMD two step update}, via the following steps:
\begin{enumerate}
\item \textbf{Convert $\bm W_t$ to Mirror Space.} We find $Y_{t,m}=\nabla \Phi_\beta(W_{t,m})$. As defined in \Cref{eq:hypentropy gradient}, we have
\begin{equation}\label{eq:SVD of Y_t appendix}
Y_{t,m}=U_{t,m}\diag(\gamma_{t,m})U_{t,m}^\trans,\quad \gamma_{t,m,i}=\arcsinh(\lambda_{t,m,i}/\beta),~\forall i\in [d_q].
\end{equation}
This element-wise update on the $d_q$ eigenvalues takes $\O(d_q)$ time.

\item \textbf{Gradient Descent Update.} In the mirror space, we perform gradient descent $\tilde Y_{t+1,m}=Y_{t,m}-\eta \hat G_{t,m}$ and obtain the eigen-decomposition of $\tilde Y_{t+1,m}$. Since $\hat G_{t,m}$ is parallel to $q_tq_t^\trans$, it is a rank-one matrix (see \Cref{eq:REINFORCE}). Therefore, eigen-decomposing $\tilde Y_{t+1,m}$ based on that of $Y_{t,m}$ in \Cref{eq:SVD of Y_t appendix} only takes $\O(d_q^2)$ time \citep[see, e.g.,][]{bunch1978rank,gu1994stable}:
\begin{equation}\label{eq:SVD of tilde Y_t+1 appendix}
\tilde Y_{t+1,m}=U_{t+1,m}\diag(\tilde \gamma_{t+1,m})U_{t+1,m}^\trans,
\end{equation}
where $U_{t+1,m}\in \mathbb R^{d_q\times d_q}$ and $\tilde \gamma_{t+1,m}\in \mathbb R^{d_q}$.
\item \textbf{Convert $\tilde{\bm Y}_{t+1}$ to Primal Space.} Let $\tilde W_{t+1,m}=(\nabla \Phi_\beta)^{-1}(\tilde Y_{t+1,m})$. Using \Cref{lem:gradient matrix func} again, it is equivalent to applying the map $(\phi_\beta)^{-1}(y)=\beta \sinh(y)$ to all the eigenvalues of $\tilde Y_{t+1,m}$. Therefore, from \Cref{eq:SVD of tilde Y_t+1 appendix}, we obtain the eigen-decomposition of $\tilde W_{t+1,m}$ in $\O(d_q)$ time as
\begin{equation}\label{eq:SVD of tilde W_t+1 appendix}
\tilde W_{t+1,m}=U_{t+1,m}\diag(\tilde \lambda_{t+1,m})U_{t+1,m}^\trans,\quad\tilde \lambda_{t+1,m,i}=\beta \sinh(\tilde \gamma_{t+1,m,i}),~\forall i\in [d_q].
\end{equation}

\item \textbf{Projection onto $\mathbb B_\ast(\tau)$ based on $D_\Phi^\beta$.} Expanding the Bregman divergence defined in \Cref{eq:Bregman divergence}, the projection step in \Cref{eq:OMD two step update} minimizes
\begin{equation*}
D_\Phi^\beta(W\|\tilde W_{t+1,m})=\Phi_\beta(W)-\Phi_\beta(\tilde W_{t+1,m})+\bigl \langle \nabla \Phi_\beta(\tilde W_{t+1,m}),W-\tilde W_{t+1,m}\bigr \rangle_F.
\end{equation*}
Removing all terms independent to $W$ and recalling that $\tilde Y_{t+1,m}=\nabla \Phi_\beta(\tilde W_{t+1,m})$, we write
\begin{align*}
&\quad W_{t+1,m}=\argmin_{W\in \mathbb B_\ast(\tau)}\left (\Phi_\beta(W)-\bigl \langle \tilde Y_{t+1,m},W\bigr\rangle_F\right )\\
&=\argmin_{\lVert \lambda\rVert_1\le \tau} \argmin_{U^\trans U=I} \left (\sum_{i=1}^{d_q} \phi_\beta(\lambda_i)-\bigl\langle U_{t+1,m} \diag(\tilde \gamma_{t+1,m}) U_{t+1,m}^\trans,U\diag(\lambda)U^\trans \bigr\rangle_F\right ),
\end{align*}
where we used the fact that any symmetric matrix $W\in \mathbb B_\ast(\tau)$ can be written as $U\diag(\lambda)U^\trans$ with $\lVert \lambda\rVert_1\le \tau$ (definition of nuclear norm) and $UU^\trans=I$ (definition of orthogonal matrix).

Fixing $\lambda\in \mathbb R^{d_q}$, apply the trace inequality in \Cref{lem:vnm trace} to the second term: the inner product is at most $\langle \tilde \gamma_{t+1,m},\lambda\rangle$, with equality attained if and only if $U=U_{t+1,m}$.
Therefore, we find
\begin{equation*}
\argmin_{\lVert \lambda\rVert_1\le \tau} \left (\sum_{t=1}^{d_q} (\phi_\beta(\lambda_i)-\tilde \gamma_{t+1,m,i} \lambda_i)\right ).
\end{equation*}
For each $i\in [d_q]$, minimizing this is equivalent to minimizing the Bregman divergence induced by $\phi_\beta$ (again, writing out the definition and removing all terms independent to $\lambda_i$). This gives
\begin{equation}\label{eq:SVD of W_t+1 appendix}
W_{t+1,m}=U_{t+1,m}\diag(\lambda_{t+1,m})U_{t+1,m}^\trans,\quad \lambda_{t+1,m}=\argmin_{\lVert \lambda\rVert_1\le \tau} \sum_{i=1}^{d_q} D_\phi^\beta(\lambda_i\|\tilde \lambda_{t+1,m,i}),
\end{equation}
as claimed in the main text (\Cref{eq:SVD of W_t+1}). This is implementable in $\O(d_q\log d_q)$ time as follows:
By symmetry, the optimal $\lambda_i$ must share the same sign as $\tilde \gamma_{t+1,m,i}$. Let $x_i = |\lambda_i|$ and $z_i = |\tilde \gamma_{t+1,m,i}|$,
\begin{equation*}
\min_{x_i \ge 0} \sum_{i=1}^{d_q} \left( \phi_\beta(x_i) - z_i x_i \right)\text{ subject to }\sum_{i=1}^{d_q} x_i \le \tau.
\end{equation*}
Introducing a Lagrange multiplier $\nu \ge 0$ for the sum constraint. Given any multiplier $\nu$,
\begin{equation}\label{eq:low-rank from Lagra ngian}
\phi_\beta'(x_i)-z_i+\nu\ge 0 \Longrightarrow x_i(\nu) = \beta \max \Bigl (\sinh(z_i - \nu),0\Bigr),
\end{equation}
and the optimal $\nu^\ast$ is the unique non-negative number satisfying $\sum_{i=1}^{d_q} x_i(\nu) \le \tau$. Since $\sinh$ is monotone, $\nu^\ast$ is found after sorting $\{z_i\}_i$ in $\O(d_q\log d_q)$ time. This gives $x_i(\nu^\ast)$'s and \Cref{eq:SVD of W_t+1 appendix}.
\end{enumerate}

Therefore, we have proved that moving from the eigen-decomposition of $W_{t,m}$ in \Cref{eq:SVD of W_t appendix} to that of $W_{t+1,m}$ in \Cref{eq:SVD of W_t+1 appendix} takes $\O(d_q+d_q^2+d_q+d_q\log d_q)=\O(d_q^2)$ time. Since there are $M$ models to update in each round, this gives the claimed per-round complexity bound of $\O(d_q^2M)$.
\end{proof}

In practice, one can sharpen the $\O(d_q^2 M)$ bound via approximation: \Cref{eq:low-rank from Lagra ngian} induces a low-rank structure, because the sum of $x_i(\nu^\ast)$'s cannot exceed $\tau$. Therefore, a good approximation of HPG algorithm is only keeping the top-$\tau$ eigenvalues of $W_{t+1,m}$. This makes $\lambda_{t+1,m}$ $\tau$-dimensional and $U_{t+1,m}$ in the shape of $d_q\times \tau$. The rank-one eigen-update in \Cref{eq:SVD of tilde Y_t+1 appendix} is then solvable in $\O(\tau d_q)$ time. Under the $\tau\approx s$ configuration in \Cref{thm:main theorem}, this only takes $\O(sd_q M)$ time per round.

\subsection{Parameter-Free Implementation of HPG (\Cref{thm:parameter-free})}\label{sec:parameter-free appendix}

\begin{algorithm}[t]
\caption{Parameter-Free Hypentropy Policy Gradient}\label{alg:parameter-free}
\begin{algorithmic}[1]
\State Initialize $M$ 1D ``coin-betting'' algorithms \citep[Algorithm 1]{cutkosky2018black}
\State Initialize a HPG algorithm with $\tau=1,\beta=d_q^{-1}$, and $\eta=\sqrt{(M\log d_q)/T}$ (\Cref{alg:OMD})
\For{$t=1,2,\ldots,T$}
\State Call each coin-betting algorithm for $z_{t,1},z_{t,2},\ldots,z_{t,M}\in \mathbb R$
\State Call HPG algorithm for a $\bm w_t=\diag(w_{t,1},w_{t,2},\ldots,w_{t,M})\in \mathbb B_\ast^M(1)$
\State Play according to the parameter $\bm W_t=\diag(z_{t,1} w_{t,1},z_{t,2} w_{t,2},\ldots,z_{t,M}w_{t,M})\in \mW$
\State Let $\hat{\bm G}_t=\diag(\hat G_{t,1},\hat G_{t,2},\ldots,\hat G_{t,M})$ be the gradient estimator in \Cref{eq:REINFORCE}
\State For each $m\in [M]$, pass $\langle \hat G_{t,m},w_{t,m}\rangle_F$ to the $m$-th coin-betting algorithm
\State Pass $\hat{\bm G}_t$ to the HPG algorithm as the gradient for the OMD step \Cref{eq:OMD}
\EndFor
\end{algorithmic}
\end{algorithm}

In order to implement the HPG algorithm without knowing sparsity parameter $s$ or the nuclear-norm parameter $\tau=\max_m \lVert W_m^\ast\rVert_\ast$, we utilize the parameter-free online learning technique by \citet{cutkosky2018black}. They provide a general framework for online linear optimization in any Banach space (a vector space with norm, e.g., our $\mathbb S^n$ equipped with the nuclear norm).

In our context, given the direct sum property in \Cref{lem:direct sum}, we consider a single expert $m\in [M]$. We now work on the $d_q^2$-dimensional vector space equipped with nuclear norm $\lVert \cdot\rVert_\ast$. The action in round $t$ is $W_{t,m}\in \mathbb S^{d_q}$, and the loss vector (from \Cref{eq:linearized policy regret}) in round $t$ is $G_{t,m}:=\frac{\partial}{\partial W_m} \mathcal L_t(\bm W_t)$. The (unexpected) regret w.r.t.\ any $W_m^0\in \mathbb S^{d_q}$ is $R_{T,m}(W_m^0):=\sum_t \langle G_{t,m},W_{t,m}-W_m^0\rangle_F$.

\citet{cutkosky2018black} decomposes $W_{t,m}$ into two parts: a magnitude $z_{t,m}=\lVert W_{t,m}\rVert_\ast$ and a direction $w_{t,m}=\frac{W_{t,m}}{\lVert  W_{t,m}\rVert_\ast}$. The decision of $z_{t,m}$ is an one-dimensional decision making over $\mathbb R$, and can be done via the \emph{coin-betting} algorithm of \citet{orabona2016coin}. For the direction $w_{t,m}$, which always has norm 1, our HPG algorithm over $\mathbb B_\ast(1)$ attains good regret. This gives \Cref{alg:parameter-free}.
\begin{proof}[Proof of \Cref{thm:parameter-free}]
For the optimal $\bm W^\ast\in \mW$ such that $\lVert W_m^\ast\rVert_\ast\le 2s$, $\forall m$, we have for any $m$:
\begin{align*}
&\quad \sum_{t=1}^T \langle G_{t,m},W_{t,m}-W_{m}^\ast\rangle_F=\sum_{t=1}^T \Bigl (\langle G_{t,m},z_{t,m} w_{t,m}\rangle_F-\langle G_{t,m},W_m^\ast\rangle_F\Bigr )\\
&=\sum_{t=1}^T \Bigl (z_{t,m}\langle G_{t,m},w_{t,m}\rangle_F - \lVert W_m^\ast\rVert_\ast \langle G_{t,m},w_{t,m}\rangle_F \Bigr )+\sum_{t=1}^T \Bigl (\lVert W_m^\ast\rVert_\ast \langle G_{t,m},w_{t,m}\rangle_F-\langle G_{t,m},W_m^\ast\rangle_F\Bigr ).
\end{align*}

The first term is the 1D online learning regret of $z_{t,m}$ w.r.t.\ $\lVert W_m^\ast\rVert_\ast$, with the round-$t$ loss defined as $g_{t,m}:=\langle G_{t,m}, w_{t,m}\rangle_F$. We know $\lvert g_{t,m}\rvert\le \lVert G_{t,m}\rVert_2 \lVert w_{t,m}\rVert_\ast\le 1$ (see \Cref{lem:REINFORCE}).
According to the coin-betting guarantee \citep[Theorem 1]{cutkosky2018black}, this 1D regret is bounded by
\begin{equation*}
\sum_{m=1}^M \sum_{t=1}^T \bigl (z_{t,m} - \lVert W_m^\ast\rVert_\ast \bigr )\langle G_{t,m},w_{t,m}\rangle_F=\sum_{m=1}^M \O\left (\lVert W_m^\ast\rVert_\ast \sqrt T \log \bigl (\lVert W_m^\ast\rVert_\ast T\bigr)\right ).
\end{equation*}

The second term, on the other hand, is $\lVert W_m^\ast\rVert_\ast$ times the HPG regret w.r.t.\ $\frac{W_m^\ast}{\lVert W_m^\ast\rVert_\ast}\in \mathbb B_\ast(1)$. Using \Cref{thm:main theorem} with $\tau=1$, $\beta=1/d_q$, and $\eta=\sqrt{(M\log d_q)/T}$, we have
\begin{equation*}
\sum_{m=1}^M \Bigl (\lVert W_m^\ast\rVert_\ast \langle G_{t,m},w_{t,m}\rangle_F-\langle G_{t,m},W_m^\ast\rangle_F\Bigr )\le \max_{m\in [M]} \lVert W_m^\ast\rVert_\ast\times \O\bigl (\sqrt{MT \log d_q}\bigr ).
\end{equation*}
Putting two parts together and using the assumption that $\lVert W_m^\ast\rVert_\ast\le 2s$, $\forall m$, we hence have
\begin{equation*}
\tilde \mR_T(\Pi_{\text{quad}})=\E\left [\sum_{t=1}^T \Bigl \langle \nabla \mathcal L_t(\bm W_t),\bm W_t-\bm W^\ast\Bigr \rangle_F\right ]=\O \Bigl ( M s \sqrt T \log(s T)+s\sqrt{MT\log d_q} \Bigr ).
\end{equation*}
This proves that $\tilde \mR_T(\Pi_{\text{quad}})=\O(sM\sqrt T \log (d_qT))$ (recall from \Cref{sec:expert reward} that $s\le d_q$).
\end{proof}

\end{document}